\documentclass{article}

\PassOptionsToPackage{sort, numbers, compress}{natbib}
\usepackage[preprint]{format/neurips_2026}

\usepackage[utf8]{inputenc} % allow utf-8 input
\usepackage[T1]{fontenc}    % use 8-bit T1 fonts
\usepackage{hyperref}       % hyperlinks
\usepackage{url}            % simple URL typesetting
\usepackage{booktabs}       % professional-quality tables
\usepackage{amsfonts}       % blackboard math symbols
\usepackage{nicefrac}       % compact symbols for 1/2, etc.
\usepackage{microtype}      % microtypography
\usepackage{xcolor}         % colors

\usepackage{amsmath,amssymb,amsthm}

\usepackage{format/submission-commands}
\usepackage{format/refcommands}

\newif\ifICLR
\ICLRfalse

\newcommand{\githubRepo}{\url{https://github.com/7angel4/prism-games/tree/csg-learning}}

\title{Robust PAC Learning of Concurrent Stochastic Games}

\author{%
  Angel Y.~He \\
  Department of Engineering Science\\
  University of Oxford\\
  \texttt{angelyhe@robots.ox.ac.uk} \\
  \And
  David Parker \\
  Department of Computer Science\\
  University of Oxford\\
  \texttt{david.parker@cs.ox.ac.uk} \\
}

\begin{document}

\maketitle

\begin{abstract}
We introduce the first Probably Approximately Correct (PAC) learning framework for general-sum concurrent stochastic games (CSGs) with transition uncertainty, while addressing the challenge of Nash equilibrium (NE) existence. Our algorithm maintains data-driven $L^1$ confidence sets over transition kernels and solves a robust CSG to compute a social-welfare optimal $\varepsilon$-NE, using a robust MDP-based exploration mechanism to drive joint state--action coverage. 
Crucially, we introduce a Nash margin characterisation that enables principled reasoning about equilibrium existence: the framework either returns an $\varepsilon$-approximate NE whose social-welfare value is $\varepsilon$-close to optimal, or provides a sound certificate that no exact NE exists.
Under a minimum reachability condition $p_{\mathrm{reach}}>0$ over relevant state--action pairs, the algorithm terminates after a polynomial number of trajectory samples, with sample complexity $\widetilde{\mathcal{O}}\!\left( {R_{\max}^2 H^4 |S|^2 |A| / (p_{\mathrm{reach}} \varepsilon^2)} \right)$. Empirical results on benchmark CSGs demonstrate near-optimal performance, correct handling of equilibrium (non-)existence, and sample complexity consistent with theory.
% ; implementation available at: \githubRepo.
\end{abstract}

\section{Introduction} 
\label{sec:intro}

Learning to act optimally in multi-agent stochastic environments is fundamentally harder when the transition dynamics are unknown: even small estimation errors can destabilise equilibria or make a nominally near-optimal strategy far from equilibrium under the true model. Concurrent stochastic games (CSGs) \cite{shapley1953SGs}, where players choose actions simultaneously without observing one another, are the canonical model for such settings --- yet most existing approaches to equilibrium computation assume that transition probabilities are precisely known. This paper asks: \emph{how many environment interactions suffice to compute an approximately optimal equilibrium with high probability, and what happens when no exact equilibrium exists?}

Uncertainty in stochastic decision-making has been studied from two complementary angles.
On the robust-planning side, robust MDPs (RMDPs)~\cite{iyengar2005robust,nilim2005robust,wiesemann2013robust} including interval MDPs~\cite{givan2000bounded}, and their stochastic-game extensions~\cite{castro2025polytopal,berthon2025robustmulti} optimise worst-case performance over a prescribed uncertainty set. Recent work has extended this line to robust CSGs (RCSGs)~\cite{HP26} and distributionally robust Markov games (DRMGs)~\cite{shi2024sample,li2022minimax,zhang2023modelbased,roch2025drmg}.
On the learning side, \emph{Probably Approximately Correct} (PAC)~\cite{valiant1984PAC} algorithms provide sample-complexity guarantees for MDPs~\cite{kearns2002nearE3,brafman2002RMAX,jaksch2010near,azar2017minimax,strehl2009RL} and turn-based stochastic games~\cite{bai2020provable,liu2021sharp,zehfroosh2023pac}. 
However, these~two~lines of work have remained largely separate in the concurrent general-sum setting: robust methods typically assume availability of the uncertainty set or a generative model~\cite{iyengar2005robust,nilim2005robust,shi2024sample,li2022minimax,zhang2023modelbased}, while PAC methods do not address robust equilibrium computation or equilibrium existence under unknown dynamics. Work at their intersection on \emph{robust PAC learning} remains largely restricted to the single-agent setting~\cite{yang2022towards,zhou2021finite,wang2023online,panaganti2022robustRL,panaganti2022sample}.

Combining robust equilibrium computation with PAC learning in concurrent games poses two challenges absent from prior work: 
% \angel{improve wording: it's sound in non-existence detection but not existence detection}
\begin{enumerate*}[\textbf{\arabic*)}]
    \item \textbf{Equilibrium existence under uncertainty.}
\ifICLR
    While $\varepsilon$-NE exist for any $\varepsilon>0$~\cite{de2007concurrent}, exact stationary (i.e., memoryless and time-independent) NE may fail to exist in CSGs~\cite{bouyer2014mixed}.
    This is not merely pathological: an NE is a stable operating point at which no agent has unilateral incentive to deviate, so when none exists, any deployed strategy may be inherently unstable, as observed in traffic coordination, network routing, and competitive markets~\cite{holt2004nash,jin2012game}.
    Moreover, even when the true game admits an exact NE, an uncertainty set around its kernel may admit no \emph{robust} equilibrium~\cite{HP26}. A PAC framework for CSGs should therefore support both approximate equilibrium computation and principled handling of non-existence.
    \item \textbf{Global coverage from stochastic trajectories.}
    Joint state--action coverage is hard to obtain online because no individual player controls the joint action and the transition kernel is uncertain. Under centralised exploration, we use an auxiliary \emph{exploration RMDP} to select joint profiles that robustly reach under-visited state--action pairs, yielding~a per-episode lower bound on visiting under-explored pairs in the true game. Since the under-visited set and exploration policy adapt to the observed history, these episode-level events are dependent; we therefore aggregate the conditional lower bound into a high-probability global coverage and sample-complexity guarantee via a martingale argument based on Freedman's inequality~\cite{freedman1975tail}.
\else
    While $\varepsilon$-NE exist for any $\varepsilon>0$~\cite{de2007concurrent}, exact stationary (i.e., memoryless and time-independent) NE may fail to exist in CSGs~\cite{bouyer2014mixed}.
    The issue is not merely pathological: in multi-agent systems, an NE represents a stable operating point in which no agent has unilateral incentive to deviate. When no such equilibrium exists, any deployed strategy may be inherently unstable, potentially inducing unpredictable or undesirable system behaviour, as observed in domains such as traffic coordination, network routing, and competitive markets~\cite{holt2004nash,jin2012game}.
    Moreover, even when an exact NE exists in the true game, an uncertainty set around the true transition kernel may not have a \emph{robust} equilibrium~\cite{HP26}. A PAC framework for CSGs should therefore support both approximate equilibrium computation and principled handling of non-existence.
    \item \textbf{Global coverage from stochastic trajectories.}
    Obtaining sufficient joint state--action coverage from online trajectories is challenging in our setting because no individual player controls the joint action and the transition kernel is uncertain. Under centralised exploration, we address this by coordinating joint actions and constructing an auxiliary \emph{exploration RMDP} on which we compute joint profiles to robustly reach under-visited state--action pairs. This yields a per-episode lower bound on visiting insufficiently explored pairs in the true game. Because the set of under-visited pairs and the exploration policy adapt to the observed history, these episode-level events are dependent; we therefore use a martingale concentration argument based on Freedman's inequality~\cite{freedman1975tail} to aggregate the conditional lower bound into a high-probability global coverage and sample-complexity guarantee.
    % which yields substantially tighter control than Azuma-type bounds~\cite{azuma1967weighted,hoeffding1963probability} used in prior PAC-MDP work, e.g., \cite{kearns2002nearE3,jaksch2010near,strehl2009RL}
\fi
\end{enumerate*}

\paragraph{Contributions.}
We introduce \emph{PAC-CSG}, the first PAC-learning framework for two-player general-sum CSGs under transition uncertainty with \emph{sound} detection of stationary equilibrium non-existence. When an exact NE exists, the algorithm returns an $\varepsilon$-NE (i.e., stable up to $\varepsilon$ under unilateral deviations) whose social-welfare value is $\varepsilon$-close to optimal; otherwise, it may return a sound certificate of non-existence.
We consider finite- and infinite-horizon probabilistic and reward reachability objectives, which are standard in verification and planning~\cite{kwiatkowska2021automatic,kwiatkowska2022correlated}, unlike the discounted reward objectives common in reinforcement learning (RL)~\cite{azar2017minimax,strehl2009RL}. 
To realise this framework, we design an algorithm that maintains $L^1$ confidence sets over transition kernels and combines robust equilibrium computation with an auxiliary RMDP-based exploration mechanism that formulates joint state--action coverage as a robust reachability problem. 
Under a graph reachability condition, the algorithm terminates after $\bigOtilde{\Rmax^2 H^4 |S|^2 |A| / (\reach{p} \varepsilon^2)}$ trajectory samples and outputs either an $\varepsilon$-optimal $\varepsilon$-NE or a sound non-existence certificate, where $\reach{p}$ is the minimum reachability probability over all relevant state--action pairs.
% $:= \min_{(s,a)} \max_\sigma \inf_P P^{\sigma,P}[\reachopt\ (s,a)] > 0$. 
Finally, we evaluate the approach on six CSG benchmarks, demonstrating near-optimality, correct handling of equilibrium (non-)existence, and sample complexity scaling consistent with the theoretical bounds.
% We discuss potential societal impacts of this work, including both positive applications and possible risks, in \Cref{app:broader-impacts}.

% ---------

% The remainder of the paper is organised as follows. Section~2 introduces the model and preliminaries, Section~3 presents the PAC-CSG framework, and Sections~4–5 provide theoretical guarantees and experiments.

% ---------------------------------------------------------------
\paragraph{Related work.} %\label{sec:related-work}
% ---------------------------------------------------------------

\ifICLR
% Condensed for the 9-page main text; the full discussion is in \Cref{app:related-work}.
\emph{Robust planning} in stochastic games optimises worst-case performance over a prescribed uncertainty set, spanning robust and interval MDPs~\cite{iyengar2005robust,nilim2005robust,wiesemann2013robust,givan2000bounded} and, more recently, multi-agent extensions~\cite{castro2025polytopal,berthon2025robustmulti,HP26}; these assume the uncertainty set is given rather than learned from data.
\emph{PAC learning} provides sample-complexity guarantees for MDPs~\cite{kearns2002nearE3,brafman2002RMAX,strehl2009RL} and turn-based stochastic games~\cite{bai2020provable,liu2021sharp,zehfroosh2023pac}, but not for concurrent games, where no single agent controls state--action visitation, nor does it incorporate robustness or equilibrium existence.
\emph{Robust RL} learns under model uncertainty from data~\cite{yang2022towards,panaganti2022robustRL,panaganti2022sample,zhou2021finite,wang2023online}, with multi-agent work concentrated on zero-sum distributionally robust Markov games~\cite{shi2024sample,li2022minimax,zhang2023modelbased,roch2025drmg}.
Closest to our setting, \citet{farhat2026sample} study online robust multi-agent RL, but optimise regret rather than providing PAC guarantees and do not address equilibrium existence in general-sum concurrent games.
See \Cref{app:related-work} for an extended discussion.
\else

\emph{Robust planning} in stochastic games addresses decision-making under a prescribed model uncertainty set.
In the single-agent setting, this includes robust MDPs~\cite{iyengar2005robust,nilim2005robust,wiesemann2013robust} and interval MDPs~\cite{givan2000bounded}, which optimise worst-case performance over admissible transition kernels. 
Multi-agent extensions are more recent: robust turn-based stochastic games~\cite{castro2025polytopal} and qualitative objectives~\cite{berthon2025robustmulti} have been studied, and \citet{HP26}'s robust CSG framework introduces robust equilibrium notions for general-sum concurrent games. 
These approaches assume that the uncertainty set is specified a priori and focus on computing robust policies or equilibria, rather than learning them from data.

% In parallel, distributionally robust Markov games (DRMGs) have been developed for zero-sum settings~\cite{shi2024sample,li2022minimax,zhang2023modelbased,roch2025drmg,liu2021sharp}, including regularisation equivalences~\cite{zhang2023modelbased}, average-reward formulations with robust Nash equilibria~\cite{roch2025drmg}, and sample-efficient or online learning algorithms~\cite{farhat2026sample} under simulator or interaction access. 
% These works provide important planning and robustness foundations, but they still assume that the uncertainty model is already available rather than learned from online trajectories, or they do not address robust equilibrium existence in the general-sum concurrent case.

\emph{PAC learning} methods
are well developed for MDPs via E$^3$~\cite{kearns2002nearE3}, R-MAX~\cite{brafman2002RMAX}, and related PAC-MDP analyses~\cite{strehl2009RL}. 
Closely related exploration guarantees include regret-minimisation methods such as UCRL2~\cite{jaksch2010near} and UCBVI~\cite{azar2017minimax}.
% UCRL2~\cite{jaksch2010near}, UCBVI~\cite{azar2017minimax}, 
% with related sample complexity guarantees in model-free settings~\cite{strehl2009RL}.
% \angel{add a few more model-free algorithms??}
These results have been extended to turn-based stochastic games in both zero-sum self-play~\cite{bai2020provable,liu2021sharp} and general-sum settings~\cite{zehfroosh2023pac}. 
However, they do not address the concurrent setting, where no single agent controls state--action visitation, nor do they incorporate robustness to transition uncertainty or equilibrium existence considerations.

\emph{Robust RL} considers learning under model uncertainty from data. 
In the single-agent setting, both model-based and model-free approaches provide sample complexity guarantees~\cite{yang2022towards,panaganti2022robustRL,panaganti2022sample,zhou2021finite,wang2023online}. 
In multi-agent settings, recent work studies distributionally robust Markov games (DRMGs), primarily in zero-sum settings~\cite{shi2024sample,li2022minimax,zhang2023modelbased,roch2025drmg}, including sample-efficient algorithms under generative-model assumptions that largely remove the need for exploration.
Other work considers alternative solution concepts, e.g., robust correlated equilibria~\cite{ma2023decentralized}. 
Closest to our setting is \citet{farhat2026sample}, who study online robust multi-agent RL under similar uncertainty sets. However, they optimise regret rather than providing PAC guarantees, and do not address equilibrium existence in general-sum concurrent games. 
In contrast, we provide a high-confidence output guarantee: either an $\varepsilon$-NE when one exists, or a sound certificate of non-existence. 
This distinction is fundamental in our concurrent general-sum setting: PAC learning targets the quality of a single output equilibrium, whereas regret minimisation controls performance over the entire learning trajectory. 
% Moreover, without unilateral control of exploration, regret-based methods can be statistically hard, with exponential lower bounds even in adversarial settings~\cite{liu2022adversarial}, making PAC guarantees a more suitable objective in this setting.
\fi

\section{Preliminaries}
\label{sec:prelim}
We denote by $\distr(X)$ the set of discrete probability distributions over a finite set $X$, and let $\ind{A}$ be the indicator function that equals $1$ if $A$ holds and $0$ otherwise.

% \subsection{Models}\label{sec:prelim-CSG}

%\cite{shapley1953SGs}

\begin{definition}[Concurrent stochastic game]
\label{def:CSG}
A (reward-augmented) \emph{concurrent stochastic game} (CSG) is a tuple
$\csg = (N, S, \bar{s}, A, \Gamma, P, r)$
where:
$N = \{1,\dots,n\}$ is a finite set of players;
$S$ is a finite set of states with initial state $\bar{s} \in S$;
$A = \bigtimes_{i \in N} (A_i \cup \{\idle\})$ is the set of joint actions, where each $A_i$ is a finite action set and $\idle$ is a distinguished idle action;
$\Gamma : S \to 2^{\cup_i A_i}$ assigns available actions;
$P : S \times A \rightharpoonup \distr(S)$ is a transition kernel; and
$r = (r_1,\dots,r_n)$ with rewards $r_i : S \times A \to \reals$.
\end{definition}

At each state $s$, each player ${i \in N}$ simultaneously selects an action $a_i\in A_i(s)$, where $A_i(s) = \Gamma(s) \intersect A_i$ if ${\Gamma(s) \intersect A_i \neq \emptyset}$ and $A_i(s) =\idleset$ otherwise. The joint action $a = (a_1,\dots,a_n)\in A(s) := \bigtimes_{i\in N}{A_i(s)}$ induces a transition to state $s'$ according to $P_{sa}:= P(s,a)$. Henceforth let $\slots := \{(s,a): s\in S, a\in A(s)\}$ denote the set of all admissible state--action pairs (or ``slots'').

\ifICLR
A \emph{strategy} for player $i$ is a function $\sigma_i$ mapping finite histories to distributions over actions; it is \emph{stationary} (or \emph{memoryless}) if $\sigma_i(\pi)=\sigma_i(s)$ for all histories $\pi$ ending in $s$, the class we focus on throughout. A \emph{strategy profile} (or \emph{profile}) is a tuple $\sigma=(\sigma_1, \ldots, \sigma_n) \in \Sigma := \bigtimes_{i\in N}{\Sigma_i}$, and for a unilateral deviation $\sigma_i'\in\Sigma_i$ we write $\sigma_{-i}[\sigma_i']$ for the profile replacing player $i$'s strategy with $\sigma_i'$.

Each player $i$ has an objective $X_i$ mapping infinite paths to $\reals$. For a profile $\sigma$, kernel $P$, and state $s$, the \emph{social welfare} is $u(\sigma,P\mid s) := \sum_{i\in N}{u_i(\sigma,P\mid s)}$, where $u_i(\sigma,P\mid s) := \ev^{\sigma,P}_s[X_i]$; we write $u(\sigma,P) := u(\sigma,P\mid \bar{s})$ for the value at the initial state.
\else
A \emph{strategy} for player $i$ is a function $\sigma_i$ %: FPaths_{\csg}\to\distr(A_i)$
mapping finite histories to distributions over actions. 
A \emph{stationary} (or \emph{memoryless}) strategy depends only on the current state, i.e., $\sigma_i(\pi)=\sigma_i(s)$ for all histories $\pi$ ending in state $s$. We focus on this class of strategies throughout this work.
A \emph{strategy profile} (or just \emph{profile}) is a tuple of strategies, one for each player, denoted $\sigma=(\sigma_1, \ldots, \sigma_n) \in \Sigma := \bigtimes_{i\in N}{\Sigma_i}$. 
For a profile $\sigma$ and a unilateral deviation $\sigma_i'\in\Sigma_i$, we write $\sigma_{-i}[\sigma_i']$ for the profile obtained by replacing player $i$'s strategy $\sigma_i$ with $\sigma_i'$.

Each player $i$ is associated with an objective $X_i$ % : \ipathg \to \reals$.
mapping infinite paths to $\reals$.
For a profile $\sigma$, kernel $P$, and state $s$, we define players' total expected utility, i.e., the \emph{social welfare}, as $u(\sigma,P\mid s) := \sum_{i\in N}{u_i(\sigma,P\mid s)}$, where $u_i(\sigma,P\mid s) := \ev^{\sigma,P}_s[X_i]$ is player $i$'s expected utility. 
% For brevity, we will omit the $s$ here when $s$ is taken to be the initial state $\bar{s}$.
For the value in the initial state $\bar{s}$, we write $u(\sigma,P) := u(\sigma,P\mid \bar{s})$.
\fi

% An \emph{objective} (or utility function) of player $i$ is a random variable $X_i: \ipathg \rightarrow \reals$. 
% % In the \emph{general-sum} (or \emph{nonzero-sum}) case, 
% We write $X = (X_1,\dots,X_n)$ for the tuple of all player objectives, and denote the \emph{expected utility} of player $i$ from state $s$ under profile $\sigma$ in $\csg$ as $u_i(\sigma \mid s,X) :=  \valg^i(s \mid \sigma, X) := \stratgs{\ev}[X_i]$. 
% Since players may cooperate or compete in a general-sum game, we use the concept of a \emph{Nash equilibrium} (NE): a profile in which no player can improve their utility by unilaterally deviating. A \emph{social-welfare optimal NE} (SWNE) \cite{kwiatkowska2021automatic} refers to an NE that also maximises the players' total utility. 

\ifICLR
\emph{Robust CSGs} (RCSGs)~\cite{HP26} add transition uncertainty: an RCSG is a tuple $\csg = (N,S,\sbar,A,\Gamma,\Punc,r)$ where $\Punc: S \times A \rightharpoonup 2^{\distr(S)}$ is an uncertain transition kernel and all else is as in \Cref{def:CSG}; fixing $P \in \Punc$ induces a CSG $\csg_P$. We focus on $(s,a)$-rectangular uncertainty, where $\Punc=\prod_{(s,a)\in\slots}{\Punc(s,a)}$, so uncertainty at each slot resolves independently.
\else
\emph{Robust CSGs} (RCSGs)~\cite{HP26} add transition uncertainty to CSGs.
Formally, an RCSG is a tuple $\csg = (N,S,\sbar,A,\Gamma,\Punc,r)$, where $\Punc: S \times A \rightharpoonup 2^{\distr(S)}$ is an uncertain transition kernel, and all other components are as in \Cref{def:CSG}. 
Fixing a $P \in \Punc$ in an RCSG induces a CSG, which we denote $\csg_P$. % = (N,S,\sbar,A,\Gamma,P,r)$. 
% Utilities are now evaluated with respect to a chosen kernel: $u_i(\sigma,P \mid s) := \gs{\ev}^{\sigma,P}[X]$.
We focus on $(s,a)$-rectangular uncertainty, where the global uncertainty set factorises as $\Punc=\prod_{(s,a)\in\slots}{\Punc(s,a)}$, so transition uncertainty at each slot can be resolved independently.
\fi
For a profile $\sigma$, we denote its \emph{robust} and \emph{optimistic} values over $\Punc$, respectively, as 
\[
\rob{u}_{\Punc}(\sigma) := \inf_{P \in \Punc} u(\sigma,P),
\qquad
\opt{u}_{\Punc}(\sigma) := \sup_{P \in \Punc} u(\sigma,P).
\]

\paragraph{Objectives.}
\label{sec:objectives}
Following \citet{HP26,kwiatkowska2021automatic}, we focus on four objectives (two finite-horizon, two infinite-horizon). Let $S_T \subseteq S$ be a set of target states, and $\ntslots := \{(s,a)\in\slots: s\not\in S_T\}$ the set of non-target slots. For finite-horizon objectives fix $H \in \nats$:
\begin{enumerate*}[(1)]
    \item \emph{Bounded probabilistic reachability}: $X(\pi) = \ind{\exists{j \leq H}. \ {\pi(j)\in S_T}}$;
    \item \emph{Bounded cumulative reward}:
    $X(\pi) = \sum_{i=0}^{H-1}{r(\pi(i),\pi[i])}$;
    \item \emph{Probabilistic reachability}: 
    $X(\pi) = \ind{\exists{j \in \nats}. \ {\pi(j)\in S_T}}$; and 
    \item \emph{Reachability reward (or stochastic shortest paths)}:
    $X(\pi) = \sum_{i=0}^{\tau_T-1}{r(\pi(i),\pi[i])}$ if $\exists{j \in \nats}. \ {\pi(j)\in S_T}$ and $X(\pi) =\infty$ otherwise, where $\tau_T = \min{\{ j\in \nats \mid \pi(j)\in S_T\}}$.
\end{enumerate*}

For the reachability objectives, transitions after entering $S_T$ do not affect the objective, so we only need to resolve uncertainty over $\ntslots$. Thus, henceforth we consider $\Punc=\prod_{(s,a)\in\ntslots}{\Punc(s,a)}$. For the bounded cumulative reward objective, $S_T=\emptyset$ so $\ntslots=\slots$.

\paragraph{Solution concepts.}
A \emph{Nash equilibrium} (NE) in a CSG is a profile $\sigma$ such that, for all players $i$, states $s$ and deviations $\sigma_i'$, $u_i(\sigma,P\mid s) \ge u_i(\sigma_{-i}[\sigma_i'],P \mid s)$. 
A \emph{social-welfare optimal NE} (SWNE) is an NE maximising $u(\sigma,P\mid s)$.
We require these equilibrium conditions to hold from every state, yielding a \emph{subgame-perfect} equilibrium.

For an RCSG, we further require the same equilibrium requirement to hold robustly, i.e., across every $P\in\Punc$.
In addition, since exact stationary NE may not exist~\cite{bouyer2014mixed} but $\varepsilon$-NE exists for any $\varepsilon > 0$, we use \emph{approximate} robust equilibria:
\begin{definition}[$\varepsilon$-RNE]
\label{def:subgame-perfect-rne}
    A profile $\sigma$ is a \emph{robust $\varepsilon$-Nash equilibrium} ($\varepsilon$-RNE) iff for all players $i$, states $s$, and deviations $\sigma_i'$, 
    ${\inf_{P\in \Punc}{\left[ u_i(\sigma, P \mid s) - u_i(\sigma_{-i}[\sigma_i'], P\mid s) \right]} \geq -\varepsilon}$.
    The corresponding $\varepsilon$-RNE value is defined as $\rob{u}_{\Punc}(\sigma) := \inf_{P\in \Punc}{u(\sigma, P)}$. 
\end{definition}

The case $\varepsilon = 0$ yields a \emph{robust NE} (RNE). 
Following this definition, a \emph{robust social-welfare optimal $\varepsilon$-NE} ($\varepsilon$-RSWNE) is just an $\varepsilon$-RNE $\sigma$ that maximises $\rob{u}_{\Punc}(\sigma)$, the robust total utility of the players.
Henceforth we write $\epsRNE(\Punc)$ for the set of all $\varepsilon$-RNE under $\Punc$, $\RNE(\Punc)$ for the set of RNE, and similarly $\epsNE(P)$ and $\NE(P)$ for the set of $\varepsilon$-NE and NE, respectively, under kernel $P$.

\section{PAC-CSG}
\label{sec:PAC-framework}

% \subsection{PAC learning of CSGs}

\paragraph{Game setting.}
We model the true environment as a finite two-player general-sum CSG $\csg^\star = (N=\{1,2\}, S, \bar{s}, A, \Gamma, P^\star, r)$, where $P^\star$ is unknown but the state space, action availability, rewards, and transition support of $\csg^\star$ are known.
% its support $\supp(P^\star):= \{(s,a,s'): s,s'\in S, a\in A(s), P^\star_{sa}(s')>0\}$ and all other components of $\csg^\star$ are known.
%
When it exists, denote the SWNE value of $\csg^\star$ by 
$V^\star := \sup_{\sigma\in \NE(P^\star)}{u(\sigma,P^\star)} = u(\sigma^\star, P^\star)$, with $\sigma^\star$ being the corresponding profile.
We assume bounded~rewards: $|r_i(s,a)| \le \Rmax$.
We also enforce the standard \emph{graph preservation} constraint \cite{chatterjee2008model,meggendorfer2025solving}, requiring all $P \in \Punc$ to share the same support. This ensures the tractability of robust dynamic programming \cite{nilim2005robust,iyengar2005robust} over $(s,a)$-rectangular uncertainty models.
% \begin{assumption}[Graph preservation]
% \label{assmp:graph-preservation}
% All $P \in \Punc$ have the same support, i.e., for all $(s,a,s') \in S \times A \times S$, $\exists P \in \Punc. \ P_{sas'} = 0 \iff \forall{P' \in \Punc}. \ P'_{sas'} = 0$.
% \end{assumption}

\paragraph{Learning.}
\label{sec:learning-setup}
Throughout, we assume a \emph{centralised} learner that jointly controls both players. Learning proceeds episodically: quantities derived from data up to episode $t$ are indexed by $t$, and $T$ denotes the total number of episodes before termination.
At episode $t$, the learner maintains an uncertainty set $\Punc_t$ around an empirical kernel $\widehat{P}_t$. This set is defined via per-slot $L^1$ confidence balls: for each $(s,a)\in\ntslots$, $\| P_{sa} - \widehat{P}_{sa,t}\|_1 \le \alpha_t(s,a) := \sqrt{\tfrac{2}{n} \ln{\left((2^{|S|}-2)/\confCI_{sa,n} \right)}}$, where $n:=N_t(s,a)$ is the visit count of $(s,a)$ up to episode $t$ and $\confCI_{sa,n}$ is a suitable failure probability defined in \Cref{cor:true-kernel-containment}, \Cref{app:per-slot-proofs}. 
The radius $\alpha_t(s,a)$ follows from Weissman's $L^1$ concentration inequality~\cite{weissman2003inequalities} and is derived in \Cref{lem:per-slot-concentration}, \Cref{app:per-slot-proofs}. 
$\Punc_t$ contains all transition kernels consistent with the observed data so far with high probability, and induces an empirical $L^1$-CSG $\csg_t = (N, S, \bar{s}, A, \Gamma, \Punc_t, r)$.
Further, the learner selects an exploration profile $\explore{\sigma}_t$ by solving an exploration RMDP $\explore{\csg}_t$ (see 
\ifICLR
Def.~\ref{def:exploration-RMDP})
\else
\Cref{def:exploration-RMDP})
\fi
induced by the same uncertainty set $\Punc_t$, executes it in $\csg^\star$, and uses the resulting trajectories to update $\csg_t$.

\subsection{PAC-CSG}
\label{sec:PAC-CSG}

Unlike single-agent or zero-sum settings, exact stationary NE need not exist in our setting.

\begin{remark}[NE existence]
\label{rem:NE-existence}
For any finite CSG, $\epsNEop{\varepsilon}(P) \neq \emptyset$ for all $\varepsilon > 0$~\cite{bouyer2014mixed,de2007concurrent}. 
However, exact stationary NE ($\varepsilon = 0$) may fail to exist in CSGs. For finite-horizon CSGs, backward induction generally yields time-dependent equilibrium strategies. 
In the infinite-horizon setting, \citet{bouyer2014mixed} illustrate non-existence via two-player concurrent terminal-reward games with no NE at all (and hence no stationary NE), including the \textit{Hide-or-run} game used in our experiments; see~\Cref{sec:benchmarks}.
% \citet{bouyer2014mixed} prove that even deciding NE existence in concurrent terminal-reward games is undecidable (for $\geq 3$ players), and 
% Therefore, the non-existence branch~\ref{item:near-non-existence-cert} in \Cref{def:PAC-CSG} is vacuous for finite-horizon objectives, but becomes meaningful in the infinite-horizon extension considered in \Cref{sec:infh}.
%
Finally, for an RCSG $\csg$, even if every induced CSG $\csg_P$ admits an ($\varepsilon$-)NE, an ($\varepsilon$-)RNE need not exist in $\csg$~\cite{HP26}, as it must satisfy equilibrium conditions simultaneously for all admissible kernels $P$.
\end{remark}

To reason about equilibrium existence quantitatively, we introduce the notion of \emph{Nash margin}.
\begin{definition}[Nash margin]
\label{def:Nash-margin}
For a CSG with kernel $P$ and profile space $\Sigma$, define the \emph{Nash margin} of a profile $\sigma \in \Sigma$ as $\mu(\sigma, P) := \min_{i \in N} \min_{s\in S}\inf_{\sigma'_i \in \Sigma_i}
\left[ 
    u_i(\sigma,P\mid s) - u_i(\sigma_{-i}[\sigma_i'],P\mid s)
\right]$. 
Further define the \emph{global Nash margin} $\opt{\mu}$ in $\csg^\star$ as
$\opt{\mu} := \max_{\sigma \in \Sigma}\mu(\sigma, P^\star)$; 
and, if $\opt{\mu}\ge 0$, define the \emph{social-welfare optimal Nash margin} as $\mu^\star := \mu(\sigma^\star, P^\star)$.
\end{definition}
Thus, $\mu(\sigma,P)$ measures how close a profile is to being an NE, while $\opt{\mu}$ captures how close $\csg^\star$ is to admitting an equilibrium overall. In particular, $\mu(\sigma,P) \geq 0$ iff $\sigma \in \NE(P)$, and $\opt{\mu} \ge 0$ iff $\NE(P^\star) \neq \emptyset$. 
Since an $\varepsilon$-NE exists in any finite CSG for all $\varepsilon > 0$, the global margin satisfies $\opt{\mu} \ge -\varepsilon$ for arbitrarily small $\varepsilon$. The use of $\max$ (rather than $\sup$) in the definition of $\opt{\mu}$ ensures that this margin is well-defined only when the maximum is attained by some profile.
Finally, $V^\star$ is well-defined iff $\mu^\star \ge 0$, which holds precisely when $\opt{\mu} \ge 0$, i.e., $\csg^\star$ admits an exact NE. We formalise this relationship in \Cref{prop:opt-mu-to-SW-mu}, \Cref{app:prelims}.

% \angel{Changed $\mu^\star$ to $\opt{\mu}$ in \Cref{prop:solver-EQ-guarantees}; can prob remove $\mu^\star$ in main text (and check consistency)}

This characterises our PAC formulation in \Cref{def:PAC-CSG}, which conditions on whether $\opt{\mu} \ge 0$. We present a concrete realisation of this framework in \Cref{alg:PAC-learning-CSG}. 

\begin{definition}[PAC-CSG]
\label{def:PAC-CSG}
An algorithm for a finite CSG is \emph{PAC} if, for any $\varepsilon, \delta\in(0,1)$, with probability at least $1-\delta$, after a number of interactions
polynomial in $|S|, |A|, 1/\varepsilon, \log(1/\delta)$ and problem constants, it outputs one of the following:
\begin{enumerate}[(a)]
  \item \textup{(Approximate equilibrium)} \label{item:EQ-output}
  An $\varepsilon$-NE profile $\hat{\sigma}$ such that, if $\opt{\mu} \ge 0$, then $V^\star - u(\hat{\sigma}, P^\star) \leq \varepsilon$, i.e., $\hat\sigma$ is $\varepsilon$-optimal; 
  in this case, the algorithm always terminates in this branch.
  \item \textup{(Near-non-existence certificate)} 
  \label{item:near-non-existence-cert}
  A sound declaration that $\opt{\mu}$ is not well-defined or $\opt{\mu} < 0$, i.e., no exact NE exists in $\csg^\star$. % profile.
\end{enumerate}
\end{definition}

Importantly, here $\varepsilon$-NE and $\varepsilon$-optimality are distinct requirements: the former bounds unilateral incentives to deviate, while the latter bounds the social-welfare loss relative to the optimal NE.

The existence branch~\ref{item:EQ-output} certifies that the true game admits a well-separated (SW)NE and returns a corresponding profile, while the non-existence branch~\ref{item:near-non-existence-cert} certifies that no exact NE exists based on the (well-definedness) and value of $\bar\mu$. 
Branch~\ref{item:near-non-existence-cert}, however, provides a \emph{sound} but \emph{incomplete} certificate: if the algorithm outputs $\opt{\mu}<0$, then $\csg^\star$ admits no exact NE. 
The converse need not hold: even when $\opt{\mu}<0$, the algorithm may still terminate via branch~\ref{item:EQ-output}, since $\varepsilon$-NE always exist. % and guarantees under $P^\star$ are only approximate. 
Intuitively, non-existence is detected only with sufficiently strong evidence (i.e., enough samples). We develop this condition in the next section.

% \vspace{-1em}
\begin{figure*}[h]
\centering
\resizebox{.9\linewidth}{!}{%
\begin{minipage}{\linewidth}
\input{algorithm/pac}
\end{minipage}
}
\end{figure*}

% \angel{Better title?}\dave{PAC-CSG Algorithms?}
\section{PAC-CSG guarantees}
\label{sec:theory}

We begin with finite-horizon objectives and extend the analysis to the infinite-horizon setting in \Cref{sec:infh}. Our approach follows a three-step pipeline:
\begin{enumerate*}[\arabic*)]
    \item In \Cref{sec:theory1-estim-error}, we relate statistical uncertainty in the transition kernel to value estimation error via a sensitivity lemma (\Cref{lem:sensitivity}), showing that solving a robust equilibrium in $\Punc_t$ yields an approximate NE in the true game once $\Delta_t \le \DeltaVal$.
    \item In \Cref{sec:exploration}, we construct an exploration RMDP and show that, under \Cref{ass:reachability}, every relevant $(s,a)$-pair is reachable, ensuring that targeted exploration drives $\Delta_t$ below the required threshold.
    \item Finally, in \Cref{sec:main-PAC-CSG}, we combine these results and use a Freedman martingale argument to lift stochastic per-episode visitation into a high-probability bound on the number of episodes, yielding the overall sample complexity.
\end{enumerate*}
Throughout, we condition on the high-probability event $\eventCI$ that $P^\star \in \Punc_t$ for all episodes $t \ge 1$, as established in \Cref{cor:true-kernel-containment}, \Cref{app:per-slot-proofs}.

\subsection{Estimation Error and Value Stability}
\label{sec:theory1-estim-error}
% First, we relate statistical uncertainty in the transition kernel to errors in value estimates.
% Second, we show that these value errors translate into equilibrium guarantees, ensuring that solving a robust equilibrium in the empirical game yields an approximate equilibrium in the true game.
% Third, we analyse the exploration process and show that all state--action pairs are sampled sufficiently often with high probability, ensuring that the uncertainty shrinks at a controlled rate.
% Combining these steps yields the PAC guarantee.

Our analysis begins by relating uncertainty in the transition kernel to errors in value estimates.
% At episode $t$, let $\Punc_t$ denote the confidence set around the empirical kernel $\hat P_t$, and 
Suppose $\eventCI$ holds, i.e., $P^\star \in \Punc_t$.
We show that this uncertainty induces a controlled deviation in the value of any fixed strategy profile.\footnote{In \Cref{alg:PAC-learning-CSG}, $\Delta_t$ is computed conservatively by maximising the transition error over the confidence set $\Punc_t$; on $\eventCI$, this computed quantity upper-bounds the true $\Delta_t$, and hence subsequent claims on $\Delta_t$ apply to the computed quantity too.}

\begin{restatable}[Sensitivity]{lemma}{LinearSensitivity}
\label{lem:sensitivity}
Define $\Delta_t := \tfrac{1}{2} \Rmax H^2\diffLI_t$, where $\diffLI_t := \max_{(s,a)\in\ntslots}\|P^\star_{sa}-\widehat{P}_{sa}\|_1$.
Then for any $P\in\Punc_t$, profile $\sigma$, player $i\in N$, and state $s_0\in S$,
$\lvert u_i(\sigma,P\mid s_0)-u_i(\sigma,\widehat{P}\mid s_0) \rvert \le \Delta_t$.
\reforig{lem:sensitivity}
% \refproof{prf:linear-sensitivity}
\end{restatable}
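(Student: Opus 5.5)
The plan is a standard simulation-lemma (telescoping) argument over the finite horizon $H$. Fix $P\in\Punc_t$, a profile $\sigma$, a player $i$ and a start state $s_0$. Since $\sigma$ is stationary, both $P$ and $\widehat P$ induce Markov chains on $S$ with joint-action distribution $\sigma(s)$ at each state. For $k=0,\dots,H$, let $V^{P}_k(s)$ and $V^{\widehat P}_k(s)$ denote player $i$'s expected utility-to-go from $s$ with $H-k$ steps remaining. These are time-indexed, even though $\sigma$ is not. Target states are made absorbing with zero continuation, so only slots in $\ntslots$ contribute.

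\textbf{Step 1: one-step recursion.} Write the Bellman recursion $V_k(s)=\sum_a\sigma(s)(a)\,[\,r_i(s,a)+\sum_{s'}P_{sa}(s')V_{k+1}(s')\,]$ for each kernel. Subtracting the two recursions and adding and subtracting $\widehat P_{sa}V^{P}_{k+1}$ gives
\[
V^{P}_k(s)-V^{\widehat P}_k(s)=\sum_a\sigma(s)(a)\Bigl[(P_{sa}-\widehat P_{sa})V^{P}_{k+1}+\widehat P_{sa}\bigl(V^{P}_{k+1}-V^{\widehat P}_{k+1}\bigr)\Bigr].
\]
Unrolling along the $\widehat P$-chain yields $u_i(\sigma,P\mid s_0)-u_i(\sigma,\widehat P\mid s_0)=\sum_{k=0}^{H-1}\mathbb{E}^{\sigma,\widehat P}_{s_0}\bigl[(P_{s_ka_k}-\widehat P_{s_ka_k})V^{P}_{k+1}\bigr]$.

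\textbf{Step 2: bound each term by a span.} Because $P_{sa}-\widehat P_{sa}$ sums to zero, we may subtract any constant $c$ from $V^{P}_{k+1}$. Choosing $c$ as the midpoint of its range gives
\[
\bigl|(P_{sa}-\widehat P_{sa})V\bigr|\le\tfrac12\|P_{sa}-\widehat P_{sa}\|_1\,\mathrm{span}(V).
\]
With $|r_i|\le\Rmax$, the $(H-k-1)$-step value lies in $[-(H-k-1)\Rmax,(H-k-1)\Rmax]$, so its span is at most $2(H-k-1)\Rmax$. The per-step term is therefore at most $(H-k-1)\Rmax\cdot\diffLI$, where $\diffLI$ bounds the per-slot $L^1$ deviation. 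Summing over $k$ gives $\Rmax\diffLI\sum_{k=0}^{H-1}(H-k-1)\le\tfrac12\Rmax H^2\diffLI$, which is $\Delta_t$.

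For bounded probabilistic reachability the values lie in $[0,1]$ and the span is at most $1$. This gives the tighter bound $\tfrac12 H\diffLI$, which is dominated by $\Delta_t$ under the normalisation $\Rmax\ge1$. I would state this convention explicitly, or treat this case separately.

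\textbf{Main obstacle: which kernel deviation enters.} The argument controls the difference by $\max_{(s,a)\in\ntslots}\|P_{sa}-\widehat P_{sa}\|_1$ for the particular $P\in\Punc_t$, whereas $\diffLI_t$ in the statement is defined via $P^\star$. I would resolve this as the footnote indicates: interpret $\diffLI_t$ as the conservative quantity $\max_{P\in\Punc_t}\max_{(s,a)\in\ntslots}\|P_{sa}-\widehat P_{sa}\|_1\le\max_{(s,a)}\alpha_t(s,a)$. On $\eventCI$ this quantity dominates the $P^\star$ deviation and also bounds every $P\in\Punc_t$, so the Step~2 bound applies uniformly over $\Punc_t$. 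I would also check that unvisited slots, where $\alpha_t=\infty$, simply make the bound vacuous.
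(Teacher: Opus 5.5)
Your proof is correct and gives exactly the paper's constant, $\Rmax\diffLI_t H(H-1)/2\le\tfrac12\Rmax H^2\diffLI_t$, but it runs in the opposite direction. The paper works forward in time. \Cref{prop:tv-propagation} shows $\|\mu_h-\widehat\mu_h\|_1\le h\,\diffLI_t$ for the state marginals, via the split $\mu_{h+1}-\widehat\mu_{h+1}=\sum_s(\mu_h(s)-\widehat\mu_h(s))K_s+\sum_s\widehat\mu_h(s)(K_s-\widehat K_s)$. It then writes the utility as $\sum_{h<H}\sum_s\mu_h(s)r_i^\sigma(s)$ and applies H\"older with $\Rmax$ at each step, so its weight $h$ counts elapsed steps.

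You work backward with the simulation-lemma identity, pairing $P_{sa}-\widehat P_{sa}$ with the value-to-go $V^P_{k+1}$, so your weight $H-k-1$ counts remaining steps. Schematically, unrolling the paper's recursion gives $\mu_h-\widehat\mu_h=\sum_{j<h}\widehat\mu_j(K-\widehat K)K^{h-j-1}$. Summed against $r_i^\sigma$, this is precisely your identity, so the two proofs differ only in where the norm inequality is applied.

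Your placement has two advantages:
\begin{itemize}
\item It exposes the span of the value function. This makes the sharper $\tfrac12 H\diffLI_t$ for bounded reachability immediate, whereas the paper only remarks that a terminal-reward encoding lets ``the same argument apply''. It also gives a factor-$2$ gain for one-signed rewards.
\item It is essentially the Bellman-recursion argument the paper itself switches to in the infinite-horizon case (\Cref{lem:sensitivity-infh}).
\end{itemize}
The paper's placement instead isolates a reward-independent bound on marginal drift.

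On your ``main obstacle'': you are right that the literal statement, with $\diffLI_t$ defined through $P^\star$, cannot hold for every $P\in\Punc_t$. If $\widehat P=P^\star$, it would force $\Delta_t=0$. The paper's proof resolves this exactly as you propose: \Cref{prop:tv-propagation} silently redefines $\diffLI_t:=\max_{(s,a)\in\ntslots}\alpha_t(s,a)$. This bounds $\|P_{sa}-\widehat P_{sa}\|_1$ for every $P\in\Punc_t$ by construction, and bounds the $P^\star$ deviation on $\eventCI$.

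One small correction: unvisited slots do not have infinite radius in the paper's construction. There, $\Punc_t(s,a)$ is the support simplex, so the $L^1$ deviation is at most $2$ and the bound is merely trivial rather than involving an infinite radius.
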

\begin{proof}[Proof sketch]
\ifICLR
At step $h$, the deviation between state marginals satisfies $\|\mu_h - \hat{\mu}_h\|_1 \le h\,\diffLI_t$ (\Cref{prop:tv-propagation}); summing the induced reward difference over $h<H$ and using $\sum_{h<H} h \le H^2/2$ yields the bound. Full proof in \Cref{prf:linear-sensitivity}.
\else
We track how a per-$(s,a)$ $L^1$ perturbation of size $\diffLI_t$ propagates through the trajectory distribution. At step $h$, the deviation between state marginals satisfies $\|\mu_h - \hat{\mu}_h\|_1 \le h\,\diffLI_t$ (\Cref{prop:tv-propagation}, \Cref{app:per-slot-proofs}). Summing the induced reward difference $\Rmax \|\mu_h - \hat{\mu}_h\|_1$ over $h=0,\ldots,H-1$ and using $\sum_{h=0}^{H-1} h \le H^2/2$ yields the bound. Full proof in \Cref{prf:linear-sensitivity}.
\fi
\end{proof}

% \begin{restatable}[Per-episode error]{corollary}{PerEpisodeError}
% \label{cor:per-episode-error}
% % Fix some episode $t\ge 1$. 
% % Let $\Punc_t$ be an uncertainty set constructed around the empirical kernel $\widehat{P}$ at this episode. Suppose $P^\star\in\Punc_t$ and define $\Delta_t$ as in \Cref{lem:sensitivity}.
% If $P^\star \in \Punc_t$, then for any profile $\hat{\sigma}$, 
% $u(\hat{\sigma},P^\star) \ge \rob{u}_{\Punc_t}(\hat{\sigma})$.
% % - 2\Delta_t$.
% % \reforig{cor:per-episode-error}
% \end{restatable}
% \begin{proof}[Proof sketch]
% Since $P^\star \in \Punc_t$ on event $\eventCI$, by definition $\rob{u}_{\Punc_t}(\hat{\sigma})=\inf_{P\in\Punc_t}{u(\hat{\sigma},P)} \le u(\hat{\sigma},P^\star)$.
% % The result follows by comparing $P^\star$ and $\widehat{P}_t$ via \Cref{lem:sensitivity}, and using the fact that $\widehat{P}_t \in \Punc_t$.
% % Since $\widehat{P}_t \in \Punc_t$ by construction, $\rob{u}_{\Punc_t}(\hat\sigma) \le u(\hat\sigma, \widehat{P}_t)$. On $\eventCI$ we also have $P^\star \in \Punc_t$. Applying \Cref{lem:sensitivity} (once per player) with $P \leftarrow P^\star$ yields
% % $|u(\hat\sigma, \widehat{P}) - u(\hat\sigma, P^\star)| \leq 2\Delta_t$.
% % Combining the two inequalities gives the claim. Full proof in \Cref{prf:per-episode-error}.
% \end{proof}

Thus, on $\eventCI$, for any fixed profile $\hat\sigma$, its value in the true game $\csg^\star$ is close to its robust value over $\Punc_t$, up to an error of order $\Delta_t$, which vanishes as the confidence radius shrinks with additional data.
The following lemma further lets us reason about NE existence in $\csg^\star$ by solving for a robust equilibrium over $\Punc_t$ (using a black-box RCSG solver; see \Cref{ass:solver}, \Cref{app:EQ-existence-proofs}): it shows that any true NE induces an approximate RNE in $\Punc_t$, with quality degrading proportionally in $\Delta_t$. Consequently, solver failure to find an $\varepsilon$-RNE implies that no NE in $\csg^\star$ has sufficiently large margin, i.e., $\ge 4\Delta_t-\varepsilon$. 

\begin{restatable}[NE($P^\star$) $\Rightarrow$ approximate RNE($\Punc_t$)]{lemma}{NERNETransfer}
\label{lem:NE-RNE-transfer}
    If $\sigma$ is an (exact) NE under $P^\star$ achieving Nash margin $\mu := \mu(\sigma,P^\star) \ge 0$, then it is also an $(4\Delta_t-\mu)$-RNE under $\Punc_t$.
    \reforig{lem:NE-RNE-transfer}
    % \refproof{prf:NE-RNE-transfer}
\end{restatable}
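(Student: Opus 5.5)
We argue directly from the definition of an $\varepsilon$-RNE (\Cref{def:subgame-perfect-rne}) and \Cref{lem:sensitivity}, applying the sensitivity bound to both the equilibrium profile and each deviation profile. Fix a player $i$, a state $s$, a deviation $\sigma_i'\in\Sigma_i$, and an arbitrary $P\in\Punc_t$. We must show
\[
u_i(\sigma,P\mid s) - u_i(\sigma_{-i}[\sigma_i'],P\mid s) \ge -(4\Delta_t-\mu).
\]
Taking the infimum over $P\in\Punc_t$ and then quantifying over $i$, $s$, $\sigma_i'$ then gives the claim.

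The plan is to move each utility term from $P$ to $P^\star$ using the empirical kernel $\widehat{P}$ as a pivot. On $\eventCI$ we have $P^\star\in\Punc_t$, and also $P\in\Punc_t$. Applying \Cref{lem:sensitivity} twice (once with $P$ and once with $P^\star$), and using that the lemma holds for any profile, we get for any profile $\tau$
\[
\lvert u_i(\tau,P\mid s)-u_i(\tau,P^\star\mid s)\rvert \le \lvert u_i(\tau,P\mid s)-u_i(\tau,\widehat{P}\mid s)\rvert + \lvert u_i(\tau,\widehat{P}\mid s)-u_i(\tau,P^\star\mid s)\rvert \le 2\Delta_t .
\]
Here $\Delta_t$ is defined via $P^\star$. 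For an arbitrary $P\in\Punc_t$, the lemma as stated already asserts the bound for every $P\in\Punc_t$, so we take it as given.

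Next we apply this with $\tau=\sigma$ and with $\tau=\sigma_{-i}[\sigma_i']$. Writing $g_P := u_i(\sigma,P\mid s)-u_i(\sigma_{-i}[\sigma_i'],P\mid s)$, this yields $g_P \ge g_{P^\star} - 4\Delta_t$. By \Cref{def:Nash-margin}, $g_{P^\star}\ge \mu(\sigma,P^\star)=\mu$, since the margin is the minimum over $i$, $s$ and the infimum over deviations. Hence $g_P\ge \mu-4\Delta_t = -(4\Delta_t-\mu)$, uniformly in $P$, $i$, $s$, $\sigma_i'$. This is exactly the $(4\Delta_t-\mu)$-RNE condition. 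When $4\Delta_t-\mu<0$, the statement says $\sigma$ is even a strict robust equilibrium, and the same inequality still applies.

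The only delicate point is the use of $\eventCI$, which places $P^\star\in\Punc_t$ so that \Cref{lem:sensitivity} applies to $P^\star$. We also need that the sensitivity lemma's constant covers all of $\Punc_t$ rather than just $P^\star$. If the radius of $\Punc_t$ exceeds $\diffLI_t$, one would replace $\Delta_t$ by the conservatively computed quantity mentioned in the footnote. I expect this bookkeeping, not the algebra, to be the main thing to check.
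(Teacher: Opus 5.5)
Your proposal is correct and follows essentially the same route as the paper: a three-term triangle inequality through $P^\star$, where each change of kernel costs at most $2\Delta_t$ (the sensitivity lemma applied twice with $\widehat{P}$ as pivot, which is exactly the paper's corollary $|u_i(\sigma,P)-u_i(\sigma,P')|\le 2\Delta_t$), and the middle term is bounded by $-\mu$ from the definition of the Nash margin. Your conclusion is phrased as a lower bound on $u_i(\sigma,P\mid s)-u_i(\sigma_{-i}[\sigma_i'],P\mid s)$ holding uniformly in $P$, and hence for the infimum; this matches the RNE definition directly. Your caveat that the sensitivity constant must cover the whole uncertainty set, and not just $P^\star$, points to the same implicit reliance on that lemma that the paper's proof makes.
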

\begin{proof}[Proof sketch]
For any player $i\in\{1,2\}$ and unilateral deviation $\sigma_i'$, \Cref{lem:sensitivity} bounds the difference in deviation gains under $P^\star$ and any $P\in\Punc_t$ by $4\Delta_t$.
Since $\sigma$ has Nash margin $\mu$ under $P^\star$, its deviation gain there is at most $-\mu$.
Thus, under every $P\in\Punc_t$, the deviation gain is at most $4\Delta_t-\mu$, implying that $\sigma$ is an $(4\Delta_t-\mu)$-RNE.
Full proof in \Cref{prf:NE-RNE-transfer}.
\end{proof}

% This yields a profile whose realised value in $\csg^\star$ is close to its robust value.
% This completes the first step of the analysis: controlling the transfer from robust to true values. 
% The remaining task is to relate this to the SWNE value $V^\star$ (if one exists). 
% and ensure that $\Delta_t$ decreases sufficiently fast, which we address via targeted exploration (\Cref{sec:exploration})

% \begin{restatable}[Episode-wise existence]{corollary}{EpisodewiseExistence}
% \label{prop:solver-EQ-guarantees}
% On $\eventCI$:
% \begin{enumerate}[label=(\arabic*)]
%   \item \textup{(Solver succeeds $\Rightarrow$ $\varepsilon$-NE in $\csg^\star$)}
%   \label{item:solver-success-cert}
%   If $\SolveRCSG(\Punc_t, \varepsilon)$ returns $\hat{\sigma} = \epsRSWNEop{\varepsilon}(\Punc_t)$, then $\hat{\sigma}$ is an $\varepsilon$-NE\footnote{Note that if $\epsRNEop{\varepsilon}(\Punc_t) \neq \emptyset$, then $\epsNEop{\varepsilon}(P^\star) \neq \emptyset$.} of $\csg^\star$.

%   \item \textup{(Solver failure $\Rightarrow$ small Nash margin)}
%   \label{item:epsRNE-existence-suff-cond}
%   For any $\sigma \in \epsNEop{\varepsilon}(P^\star)$, if $\mu(\sigma,P^\star) \ge \nashMarginThresh$, then $\sigma$ is also an $\varepsilon$-RNE under $\Punc_t$ and the solver returns $\Found$.
%   Equivalently, if the solver returns $\found = \false$, then it must be that $\mu(\sigma,P^\star) < \nashMarginThresh$.
% \end{enumerate}
% % \reforig{prop:solver-EQ-guarantees}
% % \refproof{prf:episodewise-existence}
% \end{restatable}

\begin{restatable}[Solver guarantees for equilibrium computation]{proposition}{EpisodewiseExistence}
\label{prop:solver-EQ-guarantees}
On the event $\eventCI$:
\begin{enumerate}[(1)]
\item (\emph{Equilibrium transfer}) \label{item:solver-success-cert}
If the solver returns an $\varepsilon$-RSWNE $\hat\sigma$ of $\Punc_t$, then $\hat\sigma$ is an $\varepsilon$-NE of 
% the true game~
$\csg^\star$.

\item (\emph{Failure implies near non-existence}) 
\label{item:epsRNE-existence-suff-cond}
% For any $\sigma \in \epsNEop{\varepsilon}(P^\star)$, if $\mu(\sigma,P^\star) \ge \nashMarginThresh$, then $\sigma$ is also an $\varepsilon$-RNE under $\Punc_t$ and the solver returns $\found$. Equivalently, 
If the solver returns $\notFound$, then either $\mu^\star$ is undefined or $\mu^\star < \nashMarginThresh$. 
\end{enumerate}
\reforig{prop:solver-EQ-guarantees}
\end{restatable}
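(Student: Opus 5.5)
Both parts follow directly from the definitions, using $P^\star \in \Punc_t$ (event $\eventCI$) for part (1) and \Cref{lem:NE-RNE-transfer} for part (2).

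\textbf{Part (1).} Let $\hat\sigma$ be the $\varepsilon$-RSWNE returned for $\Punc_t$. By \Cref{def:subgame-perfect-rne}, for every player $i$, state $s$ and deviation $\sigma_i'$,
\[
\inf_{P\in\Punc_t}\bigl[u_i(\hat\sigma,P\mid s)-u_i(\hat\sigma_{-i}[\sigma_i'],P\mid s)\bigr]\ge-\varepsilon .
\]
On $\eventCI$ we have $P^\star\in\Punc_t$, so the bracketed quantity evaluated at $P^\star$ is at least the infimum, hence at least $-\varepsilon$. This is exactly the (subgame-perfect) $\varepsilon$-NE condition in $\csg^\star$. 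No sensitivity bound is needed: robustness over a set containing $P^\star$ specialises to $P^\star$.

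\textbf{Part (2).} I argue by contrapositive. Suppose $\mu^\star$ is defined and $\mu^\star\ge\nashMarginThresh$; I take $\nashMarginThresh=4\Delta_t-\varepsilon$, matching the text preceding \Cref{lem:NE-RNE-transfer}.
\begin{enumerate}[(i)]
\item Since $\mu^\star$ is defined, $\sigma^\star$ exists and, by \Cref{prop:opt-mu-to-SW-mu}, $\mu^\star\ge0$. So $\sigma^\star$ is an exact NE under $P^\star$.
\item By \Cref{lem:NE-RNE-transfer}, $\sigma^\star$ is a $(4\Delta_t-\mu^\star)$-RNE under $\Punc_t$.
\item Since $4\Delta_t-\mu^\star\le\varepsilon$ and $\varepsilon'$-RNE sets are monotone in $\varepsilon'$, we get $\sigma^\star\in\epsRNEop{\varepsilon}(\Punc_t)\neq\emptyset$.
\item By the black-box solver assumption (\Cref{ass:solver}), the solver is complete: whenever $\epsRNEop{\varepsilon}(\Punc_t)$ is nonempty it returns some $\varepsilon$-RSWNE rather than $\notFound$.
\end{enumerate}
This contradicts the hypothesis that the solver returned $\notFound$, so either $\mu^\star$ is undefined or $\mu^\star<\nashMarginThresh$.

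\textbf{Main obstacle.} The delicate step is (iv): the conclusion rests entirely on the exact completeness guarantee stated in \Cref{ass:solver}. If the solver is only approximate (for example, it may fail unless an $\varepsilon''$-RNE exists for some $\varepsilon''<\varepsilon$), the threshold $\nashMarginThresh$ has to absorb that slack, and I would adjust the constant accordingly. A secondary check is that the margin in \Cref{def:Nash-margin} and the RNE condition are both taken over all states and all deviations, so the quantifiers in \Cref{lem:NE-RNE-transfer} align with subgame-perfection. They do, since both definitions take a minimum, respectively an all-quantifier, over every $s$.
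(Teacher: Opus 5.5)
Your proposal is correct and essentially matches the paper's proof. Part (1) is the same specialisation of the robust $\varepsilon$-RNE condition to $P^\star\in\Punc_t$ on $\eventCI$, and part (2) uses the same chain of \Cref{lem:NE-RNE-transfer}, monotonicity in $\varepsilon$, and the completeness clause of \Cref{ass:solver}; the only difference is cosmetic, as you argue by contrapositive assuming $\mu^\star$ is defined, whereas the paper splits on the sign of $\opt{\mu}$ and derives a contradiction in the case $\opt{\mu}\ge 0$.
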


\begin{proof}[Proof sketch]
\ifICLR
\begin{enumerate*}[(1)]
\item Any $\varepsilon$-RSWNE under $\Punc_t$ is an $\varepsilon$-RNE under $\Punc_t$, hence an $\varepsilon$-NE for every $P\in\Punc_t$, including $P^\star$ (on $\eventCI$).
\item If the solver returns $\notFound$, no $\varepsilon$-RNE exists under $\Punc_t$ (\Cref{ass:solver}). Either no NE exists under $P^\star$, leaving $\mu^\star$ undefined; or the SWNE exists, and $\mu^\star \ge 4\Delta_t-\varepsilon$ would make it an $\varepsilon$-RNE under $\Punc_t$ by \Cref{lem:NE-RNE-transfer}, contradicting the failure.
\end{enumerate*}
Full proof in \Cref{prf:episodewise-existence}.
\else
\begin{enumerate*}[(1)]
\item
Any $\varepsilon$-RSWNE under $\Punc_t$ is, by definition, an $\varepsilon$-RNE under $\Punc_t$, and hence an $\varepsilon$-NE for every $P\in\Punc_t$, including $P^\star$ (on $\eventCI$).

\item
If the solver returns $\notFound$, then by \Cref{ass:solver} no $\varepsilon$-RNE exists under $\Punc_t$. We then distinguish two cases. If no NE exists under $P^\star$, then the SWNE does not exist and $\mu^\star$ is undefined. Otherwise, the SWNE exists, and if $\mu^\star \ge 4\Delta_t-\varepsilon$, the NE--RNE transfer lemma (\Cref{lem:NE-RNE-transfer}) % , proved using \Cref{cor:per-episode-error}) 
would imply that the SWNE is an $\varepsilon$-RNE under $\Punc_t$, contradicting the solver's failure.
\end{enumerate*}
Full proof in \Cref{prf:episodewise-existence}.
\fi
\end{proof}

Together, these results link equilibrium computation in the empirical model to the true game: when $\mu^\star \ge \nashMarginThresh$, the solver returns an approximate NE; when $\mu^\star < \nashMarginThresh$, solver failure yields a sound certificate of non-existence once uncertainty is sufficiently small. Combined with the value stability results, this yields the following per-episode guarantee.

\begin{restatable}[Per-episode value gap]{corollary}{PerEpisodeGap}
\label{cor:anytime-value-gap}
% Suppose $\NE(P^\star) \ne \emptyset$, so that $V^\star$ is well-defined. 
If $\mu^\star \ge \nashMarginThresh$, then 
$V^\star - u(\hat{\sigma}_t, P^\star) \le \DeltaValCoef \Delta_t$.
\reforig{cor:anytime-value-gap}
\refproof{prf:anytime-value-gap}
\end{restatable}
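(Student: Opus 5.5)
We work on the event $\eventCI$, so $P^\star\in\Punc_t$, and write $\hat\sigma_t$ for the $\varepsilon$-RSWNE returned by $\SolveRCSG(\Punc_t,\varepsilon)$. The argument is a chain comparing the value of the SWNE $\sigma^\star$ with that of $\hat\sigma_t$ across three kernels: $P^\star$, $\widehat{P}_t$, and the worst case in $\Punc_t$.

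\emph{Step 1: the solver succeeds.} By hypothesis $\mu^\star\ge\nashMarginThresh$, so \Cref{lem:NE-RNE-transfer} applies to $\sigma^\star$ with $\mu=\mu^\star$. It shows that $\sigma^\star$ is a $(4\Delta_t-\mu^\star)$-RNE under $\Punc_t$. Since $4\Delta_t-\mu^\star\le\varepsilon$, it is also an $\varepsilon$-RNE, so $\epsRNE(\Punc_t)\neq\emptyset$. By the contrapositive of \Cref{prop:solver-EQ-guarantees}\ref{item:epsRNE-existence-suff-cond} together with \Cref{ass:solver}, the solver returns a well-defined $\hat\sigma_t$ that maximises $\rob{u}_{\Punc_t}$ over $\epsRNE(\Punc_t)$. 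Optimality of $\hat\sigma_t$ over a set containing $\sigma^\star$ then gives
\[
\rob{u}_{\Punc_t}(\hat\sigma_t)\ \ge\ \rob{u}_{\Punc_t}(\sigma^\star).
\]

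\emph{Step 2: the value chain.} We apply \Cref{lem:sensitivity} to each of the two players and sum, so the social welfare of any fixed profile under any $P\in\Punc_t$ is within $2\Delta_t$ of its value under $\widehat{P}_t$. For $\sigma^\star$ this gives $u(\sigma^\star,P)\ge u(\sigma^\star,\widehat{P}_t)-2\Delta_t$ for every $P\in\Punc_t$. Taking the infimum over $P$ and then using $P^\star\in\Punc_t$ gives
\[
\rob{u}_{\Punc_t}(\sigma^\star)\ \ge\ u(\sigma^\star,\widehat{P}_t)-2\Delta_t\ \ge\ V^\star-4\Delta_t.
\]
For $\hat\sigma_t$, containment alone gives $u(\hat\sigma_t,P^\star)\ge\rob{u}_{\Punc_t}(\hat\sigma_t)$. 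Combining this with Step 1 yields $V^\star-u(\hat\sigma_t,P^\star)\le 4\Delta_t$. If $\DeltaValCoef$ is larger than $4$ because of how the paper counts the per-player terms, the same chain with looser per-player accounting gives the stated constant.

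\emph{Main obstacle.} The delicate point is Step 1, namely that $\sigma^\star$ is feasible for the solver's optimisation. This requires the $\varepsilon$-RNE produced by \Cref{lem:NE-RNE-transfer} to be subgame-perfect in the sense of \Cref{def:subgame-perfect-rne}, holding from every state and uniformly over $P\in\Punc_t$. It also requires \Cref{ass:solver} to guarantee that the returned profile is a true maximiser of $\rob{u}_{\Punc_t}$ over $\epsRNE(\Punc_t)$, not merely some element of that set. A secondary check is that $\Delta_t$, which is defined through $\|P^\star_{sa}-\widehat{P}_{sa}\|_1$, still controls the deviation for arbitrary $P\in\Punc_t$. \Cref{lem:sensitivity} is stated for any $P\in\Punc_t$, so we may use it directly; if needed, we would instead use the conservatively computed $\Delta_t$ mentioned in the footnote.
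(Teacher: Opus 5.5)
Your proposal is correct and follows the paper's route. The paper's proof is ``immediate'' from \Cref{lem:anytime-bound-Vstar} together with $u(\hat\sigma_t,P^\star)\ge\rob{u}_{\Punc_t}(\hat\sigma_t)$ on $\eventCI$, and that lemma's proof is exactly your Steps 1--2: \Cref{lem:NE-RNE-transfer} makes $\sigma^\star$ feasible, $\hat\sigma_t$ is robustly optimal over the $\varepsilon$-RNE set, and the pairwise $4\Delta_t$ bound of \Cref{cor:sensitivity-joint} (which you unfold through $\widehat{P}_t$) gives $\rob{u}_{\Punc_t}(\sigma^\star)\ge V^\star-4\Delta_t$, with the constant indeed equal to $4$.
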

% \angel{Think it's ok to write $\hat{\sigma}_t$ here since we introduced the index $t$ generically?}

It therefore remains to ensure that $\Delta_t$ falls below $\DeltaVal$ in a finite number of samples. Since $\Delta_t$ scales with the maximum per-$(s,a)$ confidence radius, this requires sufficient coverage of all relevant slots. We tackle this using a targeted, RMDP-based exploration scheme presented in the next section.

\subsection{Exploration}
\label{sec:exploration}
A key challenge in PAC learning of concurrent games is obtaining sufficient joint state--action coverage from stochastic trajectories under transition uncertainty. To isolate this statistical challenge, we assume centralised exploration, allowing the learner to coordinate the players’ joint actions. To this end, we construct an auxiliary \emph{exploration RMDP} that casts coverage of under-visited $(s,a)$-pairs as a robust reachability problem.
% whose robust objective maximises the worst-case probability of reaching under-visited state--action pairs.
For each $(s,a)$, we maintain a boolean flag indicating whether the slot is \emph{known}, i.e., whether it has been visited sufficiently many times; this threshold is precisely defined in \Cref{lem:nmin}, \Cref{app:exploration}. Let $\unknown_t$ and $\known_t$ denote the sets of unknown and known (non-target) slots at the beginning of episode $t$, respectively (so $\unknown_1=\ntslots$; see formal definition in \Cref{app:exploration}).

\paragraph{Exploration RMDP.}
% For a given episode $t$ and empirical uncertainty set $\Punc_t$, 
At each episode $t$, we construct an \emph{exploration RMDP} $\explore{\csg}_t$ that coincides with the empirical RCSG $\csg_t$ on all known slots. On each unknown slot $(s,a)\in \unknown_t$, the model instead transitions deterministically to an auxiliary absorbing state $z$, with a nonzero reward accrued upon this transition. Note that this absorbing construction is used for analysis only (see \Cref{app:exploration}). The learner’s objective is to maximise expected reward, which is equivalent to maximising the probability of reaching unknown slots under this construction. 
% \footnote{In the implementation, trajectories are not terminated upon visiting $\unknown_t$. Since continuing the trajectory can only increase the probability of visiting $\unknown_t$, all lower bounds in \Cref{lem:exploration-prob} remain valid.}

\begin{definition}[Exploration RMDP]
\label{def:exploration-RMDP}
The \emph{exploration RMDP} at episode $t$ is an $L^1$-MDP $\explore{\csg}_t = (\explore{S}, s, \explore{A}, \explore{\Punc}_t, \explore{r}_t)$ where $\explore{S} := S \union \{z\}$ and $\explore{A} := A_1 \times A_2$. 
For each $(s,a)\in \explore{S} \times \explore{A}$:~if $(s,a)\in \unknown_t \lor s=z \lor s\in S_T$ then $\explore{\Punc}_t(s,a) = \{\delta_z\}$, where $\delta_z$ is the Dirac distribution that assigns probability 1 to $z$ and $0$ to all others; else $\explore{\Punc}_t(s,a) = \Punc_t(s,a)$;
% \[
% \explore{\Punc}_t(s,a) =
% \begin{cases}
%     \delta_z & \ift (s,a)\in \unknown_t \lor s=z,\\
%     \Punc_t(s,a) & \otherwiset;
% \end{cases}
% \]
$\explore{r}_t$ is such that $\explore{r}_t(s,a) := \ind{(s,a)\in \unknown_t}$.
\end{definition}

Next, let $\explore{p}_t(P) := \prob^{\explore{\sigma}_t, P}\left[\reachopt\ \unknown_t \right]$, i.e., the probability of reaching any unknown $(s,a)$-pair during episode $t$. 
Define the exploration profile as
\begin{equation}
\label{eq:exploration-profile}
\explore{\sigma}_t
\in \arg\max_{\sigma\in \Sigma}
\rob{u}_{\explore{\Punc}_t}(\sigma)
= \arg\max_{\sigma\in \Sigma}
\inf_{P\in \explore{\Punc}_t}{\explore{p}_t(P)}.
\end{equation}
This pessimistic (robust) objective is essential for the high-probability global coverage guarantee (\Cref{lem:counting-episodes}, \Cref{app:PAC-CSG-proofs}) used to prove \Cref{thm:PAC-CSG}: a best-case visitation probability under $P\in\Punc_t$ need not lower-bound the visitation probability under the true kernel $P^\star$.

\begin{assumption}[Graph reachability]
\label{ass:reachability}
Let $\Punc_{\supp} := \{P: \supp(P)=\supp(P^\star)\}$.
The game graph of $\csg^\star$ is such that every $(s,a)\in \ntslots$ is reachable from $\bar{s}$ under some profile, i.e.,
\begin{equation}
\label{eq:p-reach}
  \reach{p} := 
  \min_{(s,a)\in \ntslots}
  \ \max_{\sigma\in\Sigma} \ 
  \inf_{P\in\Punc_{\supp}}
  \prob^{\sigma,P}[\emph{\text{reach $(s,a)$ before entering $S_T$}}] > 0.
\end{equation}
% This quantity depends only on the known graph structure of $\csg^\star$ and is an instance-dependent constant.
\end{assumption}
Reachability conditions like \Cref{ass:reachability} are standard in prior work %, even for MDPs 
\cite{jaksch2010near,al-marjani2023active,brafman2002RMAX}. 
Since the transition support of $P^\star$ is known as input to \Cref{alg:PAC-learning-CSG}, $\reach{p}$ can be precomputed by solving a corresponding reachability RMDP over the known game graph (see \Cref{app:helper-algo} for details).
% by solving a reachability RMDP over the known support: for each $(s,a)$ with $s\notin S_T$, compute $\max_{\sigma}\inf_{P: \supp(P)=\supp(P^\star)}\prob^{\sigma,P}[\reachopt \ (s,a)]$ via value iteration on the graph, then take the minimum. This can be done in polynomial time in $|S|$ and $|A|$ \cite{iyengar2005robust}. 
%
Together with the exploration RMDP construction, this yields the following result; the full proof is in \Cref{prf:exploration-prob}.

\begin{restatable}[Exploration probability]{lemma}{ExplorationProb}
\label{lem:exploration-prob}
For all episodes $t\geq 1$ with $\unknown_t\neq\emptyset$,
$\explore{p}_t(P^\star) \geq \reach{p}$.
\reforig{lem:exploration-prob}
% \refproof{prf:exploration-prob}
\end{restatable}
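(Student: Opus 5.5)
The plan is a chain of three inequalities: robustness gives $\explore{p}_t(P^\star) \ge \inf_{P\in\explore{\Punc}_t}\explore{p}_t(P)$, optimality of $\explore{\sigma}_t$ lets us compare with a well-chosen profile, and for that profile \Cref{ass:reachability} gives the bound $\reach{p}$. We work on $\eventCI$, so $P^\star_{sa}\in\Punc_t(s,a)$ for every known slot. Graph preservation also gives $\supp(P)=\supp(P^\star)$ for all $P\in\Punc_t$.

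\textbf{Step 1 (robust lower bound for the true kernel).} Define $\tilde P^\star$ on $\explore{S}$ to agree with $P^\star$ on known non-target slots. On unknown slots, target states and $z$ it is $\delta_z$. Then $\tilde P^\star\in\explore{\Punc}_t$. Reaching $\unknown_t$ in the true game under $\explore{\sigma}_t$ is the same event as reaching $\unknown_t$ in the modified chain $\tilde P^\star$. The two chains coincide up to the first visit to $\unknown_t\cup S_T$, and redirecting to $z$ afterwards does not change whether $\unknown_t$ was hit, though it may lose visits after entering $S_T$, which only lowers the probability. Hence $\explore{p}_t(P^\star)\ge \explore{p}_t(\tilde P^\star)\ge \inf_{P\in\explore{\Punc}_t}\explore{p}_t(P)$. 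By optimality of $\explore{\sigma}_t$ in \eqref{eq:exploration-profile}, this is at least $\inf_{P\in\explore{\Punc}_t}\prob^{\sigma,P}[\reachopt\ \unknown_t]$ for every $\sigma\in\Sigma$.

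\textbf{Step 2 (choice of comparison profile).} Fix any $(s_u,a_u)\in\unknown_t$, which exists since $\unknown_t\neq\emptyset$. Let $\sigma^{u}$ attain the max in \eqref{eq:p-reach} for this slot; it is attained, or within any slack, since we only need $\ge\reach{p}$ up to arbitrary slack. Reaching $(s_u,a_u)$ implies reaching $\unknown_t$, so it suffices to show
\[
\inf_{P\in\explore{\Punc}_t}\prob^{\sigma^u,P}[\reachopt\ \unknown_t]\ \ge\ \inf_{P\in\Punc_{\supp}}\prob^{\sigma^u,P}[\text{reach }(s_u,a_u)\text{ before }S_T].
\]

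\textbf{Step 3 (the main obstacle: comparing the two uncertainty sets).} A kernel $P\in\explore{\Punc}_t$ is not in $\Punc_{\supp}$, because unknown slots are redirected to $z$. For a given $P$, build $P'\in\Punc_{\supp}$ by keeping $P$ on known slots and using $P^\star$ on unknown slots. Known-slot entries lie in $\Punc_t\subseteq\Punc_{\supp}$ by graph preservation. Now couple the two chains under $\sigma^u$: they agree until the first hit of $\unknown_t\cup S_T$. On any path of $P'$ that reaches $(s_u,a_u)$ before $S_T$, some first unknown slot is hit before $S_T$, and the $P$-chain hits that same slot. Therefore $\prob^{\sigma^u,P}[\reachopt\ \unknown_t]\ge \prob^{\sigma^u,P'}[\text{reach }(s_u,a_u)\text{ before }S_T]\ge\reach{p}$, and taking the infimum over $P$ completes the chain.

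The delicate points are exactly the ones that can break this coupling: pairing kernels on the two augmented state spaces, the conventions for target states and $z$, and ensuring that rectangularity makes $P'$ admissible slot by slot. I would check these carefully before writing out the argument.
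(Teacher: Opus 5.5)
Your proof is correct. Its skeleton matches the paper's:
\begin{itemize}
\item transfer to $P^\star$ via a kernel in $\explore{\Punc}_t$ that copies $P^\star$ on $\known_t$ (valid on $\eventCI$);
\item robust optimality of $\explore{\sigma}_t$ against a comparator profile;
\item \Cref{ass:reachability} applied to that comparator.
\end{itemize}

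Where you differ, usefully, is Step~3. The paper uses the chain
\[
\inf_{P\in\explore{\Punc}_t}\prob^{\reach{\sigma},P}[\reachopt\ \unknown_t]
\;\ge\;
\inf_{P\in\explore{\Punc}_t}\prob^{\reach{\sigma},P}[\text{reach }(s^\dagger,a^\dagger)\text{ before }S_T]
\;\ge\;\reach{p}.
\]
It justifies the last step ``by definition of $\reach{p}$'', plus an informal remark that the prefix before hitting $\unknown_t$ only uses known transitions. As written, that inequality does not follow, for two reasons. First, kernels in $\explore{\Punc}_t$ are not in $\Punc_{\supp}$. Second, in the exploration RMDP, reaching the \emph{specific} slot $(s^\dagger,a^\dagger)$ requires avoiding every other unknown slot, since each is absorbed into $z$. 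So that probability can be far below $\reach{p}$: it is zero if an unknown slot at $\bar s$ leads deterministically to $(s^\dagger,a^\dagger)$, which happens already at $t=1$, when $\unknown_1=\ntslots$.

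Your argument never passes through that intermediate event. You splice $P$ on $\known_t$ with $P^\star$ on $\unknown_t$ and on target states to get $P'\in\Punc_{\supp}$. You then couple the two chains up to the first hit of $\unknown_t\cup S_T$. This bounds the exploration-RMDP probability of hitting $\unknown_t$ directly by the original-game probability of reaching $(s_u,a_u)$ before $S_T$, which is exactly what is needed. Your route is thus the rigorous version of the paper's ``reduction to known slots'' paragraph.

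The delicate points you flagged all go through:
\begin{itemize}
\item $P'$ is admissible slot by slot, because $\Punc_t(s,a)\subseteq\Delta_+(\Succ(s,a))$ and the sets are $(s,a)$-rectangular.
\item The conventions for target states and $z$ never matter: the two chains only need to agree before the first visit to $\unknown_t\cup S_T$.
\item Since $\unknown_t\subseteq\ntslots$, on any path reaching $(s_u,a_u)$ before $S_T$ the first hit of $\unknown_t\cup S_T$ is an unknown slot, and the exploration-RMDP chain hits that same slot.
\end{itemize}
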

% \begin{proof}[Proof sketch]
% By \Cref{ass:reachability}, every $(s,a)$ is reachable under some profile.
% The exploration RMDP is constructed so that maximising reward corresponds to maximising the probability of reaching unknown slots.
% Full proof in \Cref{prf:exploration-prob}.
% \end{proof}

We execute $\explore{\sigma}_t$ in $\csg^\star$ to sample trajectories. By collecting multiple samples per episode (with the number specified in \Cref{prop:num-samples-per-episode}, \Cref{app:exploration}), each episode visits an unknown slot with nontrivial~probability. Repeating this across episodes yields high-probability coverage of all relevant slots, ensuring that each is visited sufficiently often. 
The precise sample and episode counts are derived in \Cref{lem:counting-episodes} and 
\Cref{cor:nmin} (\Cref{app:exploration}), and are used to establish the sample complexity bound in the next section.
% \angel{Note that the probability of reaching an unknown slot in the game that does not stop on the first visit of an unknown slot is $\ge \explore{p}_t(P^\star)$, thus \Cref{lem:exploration-prob} also holds for trajectories that continue visiting unknown $(s,a)$.}

\subsection{PAC-CSG}
\label{sec:main-PAC-CSG}
We now combine the previous results to obtain the PAC guarantee, in line with \Cref{def:PAC-CSG}.
% To ensure near-optimality, it suffices to achieve $\Delta_t \le \DeltaVal$.

\begin{restatable}[PAC guarantee for finite-horizon objectives]{theorem}{PACCSG}
\label{thm:PAC-CSG}
Set $n_{\min} = \bigOtilde{\Rmax^2 H^4 |S|/\varepsilon^2}$.
Then with probability at least $1-\delta$, \Cref{alg:PAC-learning-CSG} terminates after at most
\begin{equation}
\label{eq:PAC-sample-complexity}
\bigOtilde{\frac{\Rmax^2 H^4 |S|^2 |A|}{\varepsilon^2 \reach{p}}}
\end{equation}
trajectory samples, where $\bigOtilde{\cdot}$ suppresses logarithmic factors in the problem parameters. % $|S|$, $|A|$ and $1/\delta$, 
% and outputs one of: % the following:
% \begin{enumerate}[(a)]
%     \item \textup{(Approximate equilibrium)} A profile $\hat{\sigma}$ that is an $\varepsilon$-NE of $\csg^\star$. 
%     If $\opt{\mu}\ge 0$, then ${V^\star - u(\hat{\sigma},P^\star) \le \varepsilon}$, and the algorithm terminates in this branch.
%     % and an $\varepsilon$-RSWNE of the empirical $L^1$-CSG at termination. 
%     % This branch is always taken when $\opt{\mu} \ge \nashMarginThreshVal$.
%     \item \textup{(Near-non-existence certificate)} A sound declaration that $\opt{\mu}$ is not well-defined or $\opt{\mu} < 0$, i.e., no exact NE exists in $\csg^\star$.
% \end{enumerate}
% \refproof{prf:PAC-CSG}
Upon termination, the algorithm's output satisfies the PAC-CSG guarantee of \Cref{def:PAC-CSG}.
\reforig{thm:PAC-CSG}
\end{restatable}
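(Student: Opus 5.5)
The proof has three parts: the failure probability, the number of samples, and correctness of the output on the good event.

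\emph{Failure budget and high-probability event.} We split the failure budget $\delta$ between two events. The first is the containment event $\eventCI$ from \Cref{cor:true-kernel-containment}, with budget $\delta/2$; it gives $P^\star\in\Punc_t$ for all $t$. The second is the coverage event from \Cref{lem:counting-episodes}, also with budget $\delta/2$. All remaining arguments are deterministic on the intersection of these two events, so a union bound gives overall probability at least $1-\delta$.

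\emph{Choosing $n_{\min}$.} We pick $n_{\min}$ so that once every slot in $\ntslots$ is known, $\Delta_t\le\DeltaVal$. On $\eventCI$, $\|P^\star_{sa}-\widehat P_{sa}\|_1\le 2\alpha_t(s,a)$. We have $\alpha_t(s,a)=O\bigl(\sqrt{(|S|+\ln(1/\confCI))/n}\bigr)$, where $\confCI$ absorbs a union bound over slots and visit counts and contributes only logarithmic factors. Requiring $\tfrac12\Rmax H^2\cdot 2\alpha\le \DeltaVal=\Theta(\varepsilon)$ gives
\[
n\ge \bigOtilde{\Rmax^2H^4|S|/\varepsilon^2},
\]
which matches the stated $n_{\min}$ and is the content of \Cref{lem:nmin} and \Cref{cor:nmin}.

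\emph{Counting samples.} Each slot becomes known after $n_{\min}$ useful visits, so at most $|\ntslots|\,n_{\min}\le |S||A|\,n_{\min}$ "progress" visits are ever needed. By \Cref{lem:exploration-prob}, whenever $\unknown_t\ne\emptyset$, each trajectory generated under $\explore{\sigma}_t$ in $\csg^\star$ hits an unknown slot with conditional probability at least $\reach{p}$, given the past. Let $X_k$ indicate that trajectory $k$ hits an unknown slot, and let $Z_k=X_k-\mathbb E[X_k\mid\mathcal F_{k-1}]$. Then $\sum_k Z_k$ is a martingale with bounded increments and conditional variance at most $\sum_k \mathbb E[X_k\mid\mathcal F_{k-1}]$. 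Applying Freedman's inequality shows that after
\[
K=O\Bigl(\bigl(|S||A|n_{\min}+\log(1/\delta)\bigr)/\reach{p}\Bigr)
\]
trajectories, at least $|S||A|n_{\min}$ hits occur with probability $1-\delta/2$; this is \Cref{lem:counting-episodes}. Substituting $n_{\min}$ yields \eqref{eq:PAC-sample-complexity}. I expect this step to be the main obstacle. The adaptivity of $\unknown_t$ and of $\explore{\sigma}_t$ rules out Hoeffding-style independence, so the key is to use Freedman rather than Azuma, to get the $1/\reach{p}$ scaling instead of $1/\reach{p}^2$. One must also check that a trajectory that continues after hitting an unknown slot only helps, so that every hit counts as at least one visit. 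Finally, the termination rule of \Cref{alg:PAC-learning-CSG} must guarantee that it stops by the time $\unknown_t=\emptyset$, i.e.\ by the time $\Delta_t\le\DeltaVal$.

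\emph{Correctness at termination, on the good event.} If the solver returns $\hat\sigma$, then \Cref{prop:solver-EQ-guarantees}\ref{item:solver-success-cert} makes $\hat\sigma$ an $\varepsilon$-NE of $\csg^\star$. Suppose in addition $\opt{\mu}\ge0$. Then \Cref{prop:opt-mu-to-SW-mu} gives $\mu^\star\ge0$. Since $\Delta_t\le\DeltaVal$, the threshold $\nashMarginThresh=4\Delta_t-\varepsilon$ is at most $0\le\mu^\star$. So \Cref{cor:anytime-value-gap} applies and gives $V^\star-u(\hat\sigma,P^\star)\le\DeltaValCoef\Delta_t\le\varepsilon$. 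By \Cref{prop:solver-EQ-guarantees}\ref{item:epsRNE-existence-suff-cond}, the solver cannot return $\notFound$ in this case, so the algorithm necessarily ends in branch~\ref{item:EQ-output}. Conversely, if the solver returns $\notFound$, then \Cref{prop:solver-EQ-guarantees}\ref{item:epsRNE-existence-suff-cond} gives that $\mu^\star$ is undefined or $\mu^\star<\nashMarginThresh\le0$. Either way no SWNE with nonnegative margin exists, and by \Cref{prop:opt-mu-to-SW-mu} this means $\opt{\mu}<0$ or $\opt{\mu}$ is undefined. This establishes branch~\ref{item:near-non-existence-cert} soundly.
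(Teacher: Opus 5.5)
Your failure split, your choice of $n_{\min}$ and your correctness argument all coincide with the paper's proof. On $n_{\min}$, note that on $\eventCI$ you in fact have $\|P^\star_{sa}-\widehat P_{sa}\|_1\le\alpha_t(s,a)$ rather than $2\alpha_t(s,a)$, but the factor is harmless. The correctness argument is exactly \Cref{cor:Delta-stopping-condition-correctness}.

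The gap is in the counting step. In \Cref{alg:PAC-learning-CSG} the set $\unknown_t$ and the profile $\explore{\sigma}_t$ are fixed at the start of episode $t$. All $\numSamples_t$ trajectories of that episode are drawn under them. Your $X_k$ indicates a visit to $\unknown_t$, and this is what \Cref{lem:exploration-prob} lower-bounds by $\reach{p}$. But such a visit need not make progress: a later trajectory of the same episode can hit a slot of $\unknown_t$ whose count already reached $n_{\min}$ earlier in that episode. For example, a slot with count $n_{\min}-1$ that $\explore{\sigma}_t$ reaches with probability $1$ can absorb all $\numSamples_t$ hits of the episode while contributing a single unit of progress.

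So ``at least $\totSlots$ hits'' does not imply that every slot is known. You also cannot fix this by redefining $X_k$ relative to the currently unknown set, since \Cref{lem:exploration-prob} says nothing about hitting that smaller set under the frozen profile. Relatedly, the trajectory-level bound you attribute to \Cref{lem:counting-episodes} is not what that lemma states; it counts episodes.

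The paper sidesteps this by working at the episode level. There $X_t$ indicates that some trajectory of episode $t$ visits $\unknown_t$. The first such visit in the episode is necessarily to a slot whose count is still below $n_{\min}$, so each exploratory episode yields at least one unit of progress. The batch size $\numSamples_t=\lceil \frac{1}{\reach{p}}\log(1/\confCount_t)\rceil$ with $\confCount_t=\confCount/(t(t+1))$ gives $\Pr(X_t=0\mid\mathcal F_{t-1})\le\confCount_t$. Freedman over episodes then bounds the number of episodes by $T^\star=2\totSlots+\tfrac{8}{3}\ln(1/\confCount)$, and summing $\numSamples_t$ gives $\bigOtilde{T^\star/\reach{p}}$. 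Thus in the paper the $1/\reach{p}$ comes from the batch size, not from Freedman's variance term.

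Your trajectory-level route can also be repaired. Each slot crosses $n_{\min}$ only once, so at most $|\ntslots|$ episodes contain trajectories that start after such a crossing within the same episode. Whether a trajectory is of this kind is determined by the past, so you can drop these trajectories from the martingale. For the remaining ones, every hit is a progress visit and the conditional hit probability is still at least $\reach{p}$. Charge the dropped trajectories separately: there are at most $|\ntslots|\max_t\numSamples_t=\bigOtilde{|S||A|/\reach{p}}$ of them, which is lower order. With that fix your argument recovers the stated bound, and it even saves the logarithmic batch-size factor in the leading term.
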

\begin{proof}[Proof sketch]
\ifICLR
We split the failure budget as $\confCI=\confCount=\delta/2$ and condition on $\mathcal{E}:=\eventCI\cap \eventCount$, which holds w.p.\ $\ge 1-\delta$. On $\eventCI$, \Cref{cor:anytime-value-gap} reduces near-optimality to $\Delta_t\le \DeltaVal$, which holds once every $(s,a)\in\ntslots$ is visited $n_{\min}=\bigOtilde{\Rmax^2 H^4 |S|/\varepsilon^2}$ times (\Cref{lem:nmin}). Under \Cref{ass:reachability}, \Cref{lem:counting-episodes} achieves this after $\bigOtilde{n_{\min}|S||A|}$ episodes: letting $X_t$ indicate whether some sampled trajectory reaches an unknown slot, these indicators are history-dependent, so we apply Freedman's inequality to their martingale differences with the lower bound $\reach{p}$ from \Cref{lem:exploration-prob}. With $\bigOtilde{\log{t} / \reach{p}}$ trajectories per episode (\Cref{prop:num-samples-per-episode}), this gives the stated complexity; \Cref{prop:solver-EQ-guarantees} yields the output branching. Full details in \Cref{prf:PAC-CSG}.
\else
We split the failure budget as $\confCI=\confCount=\delta/2$ and condition on $\mathcal{E}:=\eventCI\cap \eventCount$, which holds with probability $\ge 1-\delta$. On $\eventCI$, \Cref{cor:anytime-value-gap} reduces near-optimality to ensuring $\Delta_t\le \DeltaVal$, which holds once every $(s,a)\in\ntslots$ is visited at least $n_{\min}=\bigOtilde{\Rmax^2 H^4 |S|/\varepsilon^2}$ times (\Cref{lem:nmin}). Under \Cref{ass:reachability}, \Cref{lem:counting-episodes} shows that this coverage is achieved after $\bigOtilde{n_{\min}|S||A|}$ episodes: 
% each episode discovers an unknown slot with probability at least $\reach{p}$ (\Cref{lem:exploration-prob}), and a Freedman martingale argument lifts this to a high-probability global coverage bound. 
for each episode, let $X_t$ indicate whether at least one sampled trajectory reaches an unknown slot. Since these indicators are history-dependent, we apply Freedman's inequality to their martingale differences, using the lower bound $\reach{p}$ from \Cref{lem:exploration-prob}, to obtain a high-probability bound on the number of episodes needed until all relevant slots are sufficiently visited.
Moreover, \Cref{prop:num-samples-per-episode} yields $\bigOtilde{\log{t} / \reach{p}}$ trajectories per episode $t$, giving the stated sample complexity. Finally, \Cref{prop:solver-EQ-guarantees} yields the output branching. Full details are in \Cref{prf:PAC-CSG}.
\fi
\end{proof}

% \begin{proof}[Proof sketch]
% The result follows by combining three components:
% \begin{enumerate*}[(i)]
%     \item containment ensures $P^\star \in P_t$ for all $t$,
%     \item value stability implies that $\Delta_t \le \varepsilon/4$ yields near-optimality, and
%     \item exploration guarantees that every state--action pair is visited sufficiently often.
% \end{enumerate*}
% Together, these imply that the stopping condition is reached after
% $\bigOtilde{\Rmax^2 H^4 |S|^2 |A| / \varepsilon^2}$ episodes.
% Full proof in \Cref{prf:PAC-CSG}.
% \end{proof}

\paragraph{Infinite-horizon extension.}
\label{sec:infh}
% Unlike the finite-horizon case, where exact NE exist by backward induction, infinite-horizon objectives may admit no NE, activating the non-existence branch~\ref{item:near-non-existence-cert} of \Cref{def:PAC-CSG}. 
% For our objectives of interest (\Cref{sec:objectives}),
\ifICLR
We further assume almost-sure reachability of $S_T$ (\Cref{ass:nz-almost-sure-reachability}), following prior work~\cite{kwiatkowska2021automatic,bertsekas1991analysis,HP26}; this ensures value iteration convergence but not the existence of stationary NE. The process then admits an \emph{effective horizon} $\Heff \le |S|/p_T$ upper-bounding the stopping time (\Cref{lem:effective-horizon}), where $p_T$ is the minimum worst-case probability of reaching $S_T$ within $|S|$ steps from any state. Replacing $H$ with $\Heff$ yields the corresponding infinite-horizon guarantees; see \Cref{app:infh} for full details.
\else
We further assume almost-sure reachability of $S_T$, i.e., $S_T$ is reached with probability $1$ from every state under every $P\in\Punc_{\supp}$ by some profile (\Cref{ass:nz-almost-sure-reachability}), following prior work~\cite{kwiatkowska2021automatic,bertsekas1991analysis,HP26}. This ensures value iteration convergence but not the existence of stationary NE.
Under this assumption, the process admits an \emph{effective horizon} $\Heff$ which upper bounds the stopping time, with $\Heff \le |S|/p_T$ (see \Cref{lem:effective-horizon}). Here, $p_T$ is the minimum worst-case probability of reaching $S_T$ within $|S|$ steps from any state. % , under any support-preserving kernel $P$.
Replacing $H$ with $\Heff$ yields the corresponding infinite-horizon guarantees, with constants adjusted accordingly. Full details are in \Cref{app:infh}.
\fi

% \paragraph{Objectives.}
% Fix a target set $S_T \subseteq S$, and define the stopping time 
% $\tau_T := \min\{j \geq 0 : s_j \in S_T\}$.
% The objectives are:
% \begin{itemize}[nosep]
%   \item \emph{Probabilistic reachability}:
%     $X(\pi) = \ind{\tau_T < \infty}$; and
%   \item \emph{Reachability reward / Stochastic shortest path (SSP)}:
%     $X(\pi) = \sum_{h=0}^{\tau_T-1} r(\pi(h), \pi[h])$.
% \end{itemize}

\section{Experimentation}
\label{sec:experiments}

We evaluate our method on a suite of benchmark CSGs to validate our theoretical guarantees.

\paragraph{Experimental setup.}
\label{sec:experimental-setup}
% \ifICLR
% We implement \Cref{alg:PAC-learning-CSG} in Java and evaluate it in the PRISM-games model checker~\cite{KNPS20}, comparing our learners' output against its CSG solver as ground truth. Unless otherwise stated, we set $\delta = 0.05$, $\varepsilon=0.2$ and $\Rmax = 1$; for RQ1 we instead use $\varepsilon=0.1$. Results are averaged over 10 random seeds for trajectory sampling.
% For the learning-dynamics experiments in \Cref{app:RQ-learning-dynamics}, we additionally evaluate the empirical model and strategy against the known ground-truth CSG; these quantities are used only for evaluation and are not available to the learner.
% Experiments ran on a 3.2~GHz Apple M1 CPU, 16~GB RAM; our implementation, models, and property specifications are available at \githubRepo.
% \else
We implement \Cref{alg:PAC-learning-CSG} in Java and evaluate it in the PRISM-games model checker~\cite{KNPS20}. Unless otherwise stated, we set $\delta = 0.05$, $\varepsilon=0.2$ and $\Rmax = 1$.
For RQ1, we instead use $\varepsilon=0.1$. Results are averaged over 10 random seeds for trajectory sampling.
% ; error bars indicate one standard deviation.
For the learning-dynamics experiments in \Cref{app:RQ-learning-dynamics}, we additionally evaluate the empirical model and strategy against the known ground-truth CSG; these quantities are used only for evaluation and are not available to the learner.
Experiments ran on a 3.2~GHz Apple M1 CPU, 16~GB RAM.
Our implementation, including the models and property specifications, is available at \githubRepo. We compare our learners' output against the PRISM-games CSG solver as ground truth.
% \fi
% Full results and additional details are given in \Cref{app:additional-experimental-results}.

\paragraph{Benchmarks.}
\label{sec:benchmarks}
We evaluate on six small, interpretable CSGs, chosen to enable repeated exact NE computation and comparison against exact solutions. These are illustrated in \Cref{fig:benchmarks}, \Cref{app:additional-experimental-results}. Each benchmark is paired with a set of finite- and infinite-horizon properties expressed in \emph{probabilistic alternating-time temporal logic with rewards} (rPATL)~\cite{kwiatkowska2021automatic,chen2012automatic}, designed to isolate a distinct challenge:
\ifICLR
\emph{Cyclic Preferences}, \emph{Delayed Coordination}, \emph{Hide-or-Run}~\citep{de2007concurrent}, \emph{Mixed NE}, \emph{Safe vs.\ Risky}, and \emph{Traffic Merge}; we detail the challenge each isolates in \Cref{app:benchmark-descriptions}.
\else
\begin{enumerate*}[1)]
\item \emph{Cyclic Preferences} has no exact stationary NE for infinite-horizon reachability reward, testing detection of non-existence;
\item \emph{Delayed Coordination} isolates hard exploration under sparse, delayed rewards;
\item \emph{Hide-or-Run} adapted from \citet{de2007concurrent} has no stationary NE profiles; its value is only achieved in the limit via mixed strategies converging to ``hide w.p.\ $1$'' for the zero-sum objective of player~1 eventually reaching $s_1$; 
\item \emph{Mixed NE} requires a mixed equilibrium;
\item \emph{Safe vs.\ Risky} contrasts actions with differing sensitivity to transition probabilities; and
\item \emph{Traffic Merge} evaluates safety-critical multi-step coordination.
\end{enumerate*}%

\fi%
Together, they cover all objective types in our framework.

% \item \emph{Hide-or-Run} exhibits a value--strategy gap (equilibrium value exists but no attaining strategy does); 

We aim to address the following research questions:

\paragraph{RQ1: Correctness.}
\begin{enumerate*}[(a)]
    \item \textbf{Near-optimality.} 
    When $\csg^\star$ admits an NE, does the algorithm output an $\varepsilon$-optimal $\varepsilon$-NE?  
    % We compare the profiles from our learner and the CSG solver oracle to assess the performance of the learned profile in the true game.

    \item \textbf{Equilibrium existence detection.} 
    When no exact stationary NE exists, when does the algorithm return a sound non-existence certificate, and when does incompleteness arise?
    % Can the algorithm correctly distinguish between games that admit an exact NE and those that do not? 
    % We evaluate whether the method returns an $\varepsilon$-NE when $\opt{\mu} \ge 0$, and produces a valid non-existence certificate when $\opt{\mu} < 0$.
\end{enumerate*}

\paragraph{RQ2: Exploration strategy.}
How does the robust exploration RMDP compare with simpler or optimistic exploration rules under an otherwise identical PAC loop and stopping condition? We compare against:
\begin{enumerate*}[(i)]
    \item uniform-random joint-action exploration;
    \item round-robin exploration targeting the least-visited unknown slot under the point-estimate model; and
    \item optimistic exploration, which maximises \emph{best-case} rather than worst-case reachability (see \Cref{eq:exploration-profile}) over the current uncertainty set.
\end{enumerate*}
We define each explorer in detail in \Cref{app:exploration-baselines}.

\paragraph{RQ3: Empirical sample-complexity scaling.}
\label{RQ3-sample-complexity}
How does the number of sampled trajectories until convergence scale with the number of states $|S|$, joint actions $|A|$, horizon $H$ and precision $\varepsilon$?
We use parametrised variants of \textit{Safe vs. Risky} that vary one problem dimension at a time while preserving the underlying equilibrium structure. Specifically, we test $|S|\in \{4,8,12,20,36,60\}$, $|A| \in \{4,9,16,25,36,49,64\}$, $H\in \{1,2,3,5,7,9,11,15\}$ and $\varepsilon\in \{0.05, 0.1, 0.15, 0.2, 0.3, 0.4\}$.
For each scaling experiment, all parameters other than the varied dimension are held at the default values $|S|=4, |A|=4$ (as in \Cref{fig:safe-vs-risky}), with $H=3,\varepsilon=0.2$, and the same property used throughout.
% \fi
% We study the bounded reachability objective $\llangle p_1:p_2 \rrangle_{\max=?} \big(\Pt[\eventually^{\le H} s_1] + \Pt[\eventually^{\le H} s_2]\big)$ in the \emph{Safe vs.\ Risky} game

\subsection{Results and Discussion}
\label{sec:discussion}

% \angel{Check what happened to the strategies in delayed-coord1 here}

\begin{table}[!t]
\centering
\caption{Comparison of our algorithm's output with the CSG-solver oracle across benchmarks. All cases use $\varepsilon=0.1$. The returned profile attains the oracle's true-game value whenever a stationary NE exists. For \emph{Cyclic Preferences} and the first \emph{Hide-or-Run} case, no stationary NE exists; the algorithm returns $\notFound$ in the former and an $\varepsilon$-NE approaching the limiting equilibrium value in the latter. For the second \emph{Safe vs.\ Risky} case, the NE value is $\infty$, which our algorithm also recovers. The final column reports the robust-value estimate gap $V^\star-\hat{V}$. Full results are in \Cref{tab:full-results}, App.~\ref{app:full-experimental-results}.}
\label{tab:correctness}
\resizebox{\columnwidth}{!}{
\begin{tabular}{l l c c c c}
\toprule
\textbf{Case study} & \textbf{Property} & $\boldsymbol{\varepsilon}$\textbf{-NE found?} & \textbf{NE exists?} & $\boldsymbol{V^\star - \hat{V}}$ ($\times10^{-3}$) \\
% & $\boldsymbol{V^\star - \rob{u}_{\Punc_T}(\hat{\sigma})}$ \\
\midrule
Cyclic Prefs. 
& $\llangle p_1:p_2 \rrangle_{\max=?}\left(\rewop{r_1}[\eventually s_3] + \rewop{r_2}[\eventually s_3] \right)$ 
& \xmark  & \xmark  & -- \\

% delayed\_coord
% & $\llangle p_1:p_2 \rrangle_{\max=?}
% \left(\probop{}[\eventually \mathrm{goal}] + \probop{}[\eventually \mathrm{goal}]\right)$ 
% & \cmark & \cmark & $0.2001$ \\
% action at state 4 should not matter, but is the only part where the true and robust strategy differ

Delayed Coord.
& $\llangle p_1:p_2 \rrangle_{\max=?} \left( \rewop{r_1}[\Ct^{\le 5}] + \rewop{r_2}[\Ct^{\le 5}] \right)$ 
& \cmark & \cmark & $1.573 \pm 0.069$  \\

Hide-or-run
& $\llangle p_1 \rrangle {\probop{\max=?}}
\left[\eventually s_1\right]$
& \cmark & \xmark & $0.000 \pm 0.000$ \\

Hide-or-run 
& $\llangle p_1 \rrangle \probop{\max=?}
\left[\eventually^{\le 5} s_1\right]$
& \cmark & \cmark & $0.000\pm 0.000$\\

Mixed NE
& $\llangle p_1:p_2 \rrangle_{\max=?}
\left(\Pt[\eventually s_1] + \Pt[\eventually s_2]\right)$
& \cmark & \cmark & $3.083\pm 0.000$\\

Safe vs.\ Risky
& $\llangle p_1:p_2 \rrangle_{\max=?}
\left(\Pt[\eventually s_1] + \Pt[\eventually s_2]\right)$
& \cmark & \cmark & $1.743\pm 0.000$ \\

% safe\_risky 
% & \texttt{<<p1:p2>>max=?( P[F<=k s=1] + P[F<=k s=2] )} 
% & \cmark & \cmark & -- & $0.0026$ \\

Safe vs.\ Risky 
& $\llangle p_1:p_2 \rrangle_{\max=?}
\left(\rewop{r_1}[\eventually s_1] + \rewop{r_2}[\eventually s_2]\right)$
& \cmark & \cmark ($\infty$) & -- \\

Traffic Merge & $\llangle p_1 \rrangle \probop{\max=?} [ \eventually^{\le 5} s_4 ]$ & \cmark & \cmark & $0.000\pm 0.000$ \\

\bottomrule
\end{tabular}
}
\end{table}

\ifICLR\else

\begin{table}[!t]
\centering
\caption{Samples to convergence (millions) and value gap $V^\star-\hat{V}$ for our robust exploration RMDP and the optimistic, round-robin, and uniform-random baselines. Values are reported as mean $\pm$ standard deviation; the lowest value gap and sample count for each benchmark are shown in bold.}
\label{tab:explorer-cmp}

\renewcommand{\arraystretch}{1.2}
\setlength{\tabcolsep}{5pt}

\resizebox{\columnwidth}{!}{
\large
\begin{tabular}{l cc cc cc cc}
\toprule
& \multicolumn{2}{c}{\makecell{\textbf{Safe vs.\ Risky}\\
$\llangle p_1:p_2 \rrangle_{\max=?}
\left( \probop{}[\eventually^{\le 3} s_1] + \probop{}[\eventually^{\le 3} s_2] \right)$}}
& \multicolumn{2}{c}{\makecell{\textbf{Traffic Merge}\\
$\llangle p_1 \rrangle \probop{\max=?}
\left[ \eventually^{\le 5} s_4 \right]$}}
& \multicolumn{2}{c}{\makecell{\textbf{Cyclic Prefs.}\\
$\llangle p_1:p_2 \rrangle_{\max=?}
\left( \rewop{r_1}[\eventually s_3] + \rewop{r_2}[\eventually s_3] \right)$}}
& \multicolumn{2}{c}{\makecell{\textbf{Delayed Coord.}\\
$\llangle p_1:p_2 \rrangle_{\max=?}
\left( \probop{}[\eventually s_3] + \probop{}[\eventually s_3] \right)$}} \\
\cmidrule(lr){2-3}
\cmidrule(lr){4-5}
\cmidrule(lr){6-7}
\cmidrule(lr){8-9}

\textbf{Explorer}
& \textbf{Samples} & $V^\star-\hat{V}$
& \textbf{Samples} & $V^\star-\hat{V}$
& \textbf{Samples} & $V^\star-\hat{V}$
& \textbf{Samples} & $V^\star-\hat{V}$ \\

\midrule

Robust (ours) &
$\mathbf{5.90 \pm 0.00}$ &
$\mathbf{0.005 \pm 0.000}$ &
$\mathbf{6.90 \pm 0.00}$ &
$\mathbf{0.000 \pm 0.000}$ &
$\mathbf{18.41 \pm 0.01}$ &
-- &
$\mathbf{42.95 \pm 0.00}$ &
$\mathbf{0.000 \pm 0.000}$ \\

Optimistic &
$\mathbf{5.90 \pm 0.00}$ &
$\mathbf{0.005 \pm 0.000}$ &
$\mathbf{6.90 \pm 0.00}$ &
$\mathbf{0.000 \pm 0.000}$ &
$\mathbf{18.41 \pm 0.01}$ &
-- &
$43.15 \pm 0.87$ &
$0.060 \pm 0.097$ \\

Round-robin &
$6.09 \pm 0.09$ &
$0.005 \pm 0.000$ &
$16.22 \pm 0.00$ &
$\mathbf{0.000 \pm 0.000}$ &
$19.07 \pm 0.01$ &
-- &
$44.92 \pm 0.83$ &
$0.060 \pm 0.097$ \\

Uniform &
$11.61 \pm 0.01$ &
$0.005 \pm 0.000$ &
$7.02 \pm 0.00$ &
$\mathbf{0.000 \pm 0.000}$ &
$46.84 \pm 0.02$ &
-- &
$730.09 \pm 0.28$ &
$0.180 \pm 0.063$ \\

\bottomrule
\end{tabular}
}
\end{table}

\fi

\ifICLR\else
\begin{figure}[!t]
\centering

\begin{subfigure}{0.245\textwidth}
    \centering
    \includegraphics[width=\linewidth]{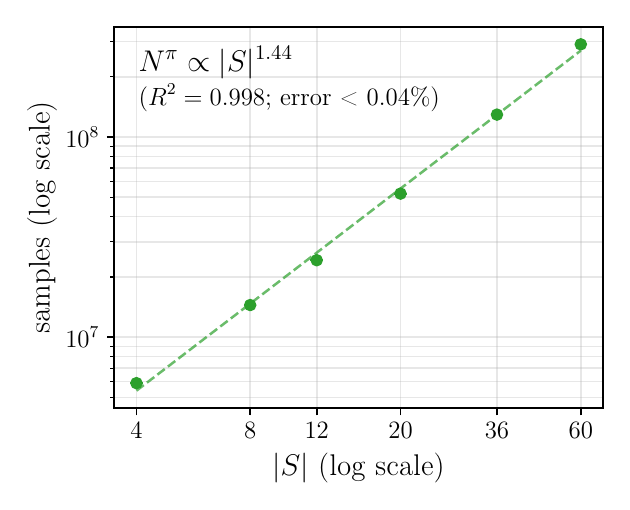}
    \caption{$|S|$-scaling}
    \label{fig:S-scaling}
\end{subfigure}
\hfill
\begin{subfigure}{0.245\textwidth}
    \centering
    \includegraphics[width=\linewidth]{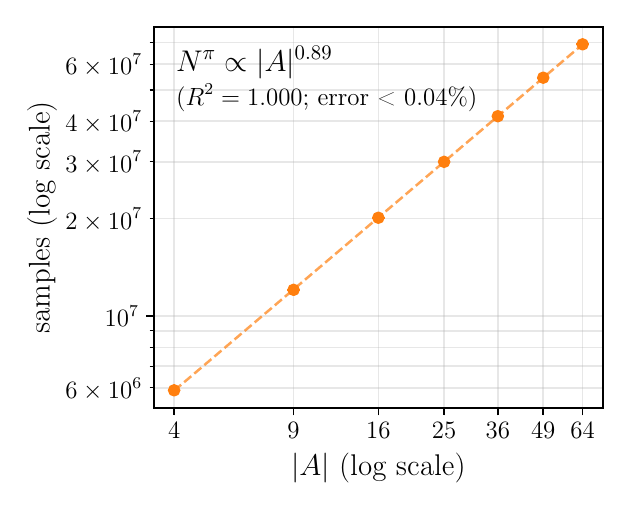}
    \caption{$|A|$-scaling}
    \label{fig:A-scaling}
\end{subfigure}
\hfill
\begin{subfigure}{0.245\textwidth}
    \centering
    \includegraphics[width=\linewidth]{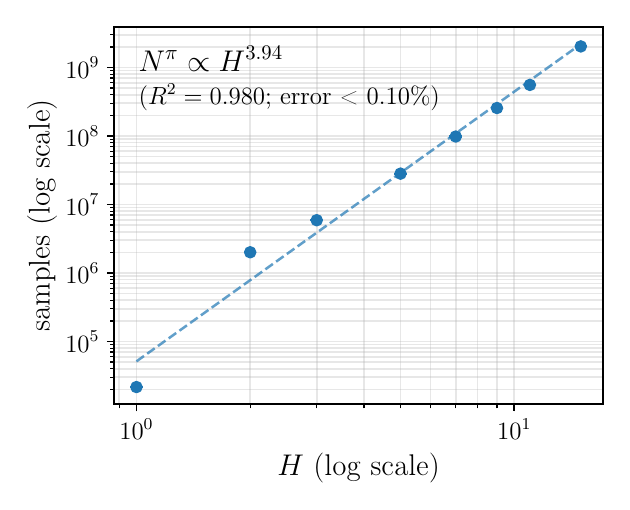}
    \caption{$H$-scaling}
    \label{fig:H-scaling}
\end{subfigure}
\hfill
\begin{subfigure}{0.245\textwidth}
    \centering
    \includegraphics[width=\linewidth]{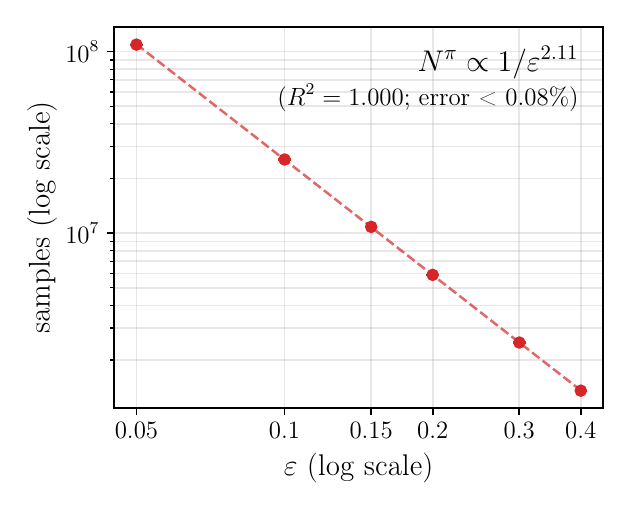}
    \caption{$\varepsilon$-scaling}
    \label{fig:eps-scaling}
\end{subfigure}

\caption{
Scaling of sample complexity (log--log scale) in the \textit{Safe vs.\ Risky} game, following the setup in \hyperref[RQ3-sample-complexity]{RQ3}, with respect to (a) $|S|$, (b) $|A|$, (c) $H$, and (d) $\varepsilon$. Error bars show standard deviation over 10 runs; all are plotted but negligible ($<0.10\%$ of the mean) and thus not visible.
}
\label{fig:scaling-plots}
\vspace*{-1.5em}
\end{figure}

\fi

% \subsection{Discussion}

\paragraph{RQ1(a) Near-optimality.}
For all case studies in \Cref{tab:correctness} that admit an exact NE (i.e., $\opt{\mu} \ge 0$), the learned profile $\hat{\sigma}$ attains the same true-game value as the oracle up to ties between indifferent actions. For the finite-valued cases, this yields a value gap $V^\star - u(\hat{\sigma}, P^\star)=0 \ll \varepsilon$.
% ; for the second \emph{Safe vs.\ Risky} property, both the oracle and our learner correctly return the unbounded value $\infty$. 
The robust value estimate $\hat{V}$ can, however, differ slightly from $V^\star$. Nevertheless, the estimation gap $V^\star-\hat{V}$ consistently remains below $4\Delta_t\le \varepsilon$, as guaranteed by \Cref{lem:anytime-bound-Vstar} (\Cref{app:anytime-proofs}). 
This empirically validates the stopping condition $\Delta_t \le \varepsilon/4$, which ensures near-optimality of the output profile via \Cref{cor:anytime-value-gap}: once $\Delta_t$ falls below this threshold, the robust solution computed on $\Punc_t$ transfers to the true model with error at most $4\Delta_t \le \varepsilon$.
% The one exception is the \emph{Delayed Coordination} benchmark (value gap $0.2001> \varepsilon$): this game has the lowest $\reach{p}$ in the suite, requiring significantly more samples per episode ($N^\omega_t \propto 1/\reach{p}$ as we derive in \Cref{sec:exploration}), and the reported gap reflects early termination at the episode budget rather than a failure of the guarantee. Extending the budget closes the gap, consistent with the theory.
% \ifICLR
% The second \emph{Safe vs.\ Risky} property is a degenerate case with unbounded SWNE value; the algorithm correctly returns $V^\star = \infty$ with an attaining profile, showing that our implementation handles unbounded total reward despite the analysis assuming bounded per-step rewards (see \Cref{app:unbounded-value}).
% \else
Further, the second \emph{Safe vs.\ Risky} property exhibits a degenerate case in which the SWNE value is unbounded: since both $s_1$ and $s_2$ are absorbing self-loops, once either is reached the game loops indefinitely, allowing the corresponding player to accumulate unbounded reward (while the reachability objective is effectively zero-sum); see \Cref{fig:benchmarks}, \Cref{app:additional-experimental-results}. The algorithm correctly returns $V^\star = \infty$ together with a profile that attains it, demonstrating that our implementation handles unbounded total reward despite assuming bounded per-step rewards in the analysis.
% \fi

\ifICLR
\paragraph{RQ1(b) Equilibrium existence detection.}
Two benchmarks consider the NE existence problem.
\begin{enumerate*}[1)]
\item \emph{Cyclic Preferences} is a clean \emph{non-existence} case: the game admits no fixed-point solution and no exact NE. Our solver correctly returns $\notFound$ in 10/10 runs, a sound certificate that no exact stationary NE exists. This matches \Cref{prop:solver-EQ-guarantees}, by which solver failure implies $\mu^\star < 4\Delta_t - \varepsilon \le 0$ at termination, and hence $\opt{\mu}\le 0$ by \Cref{prop:opt-mu-to-SW-mu}, \Cref{app:prelims}.
\item \emph{Hide-or-Run} illustrates \emph{incompleteness} of the certificate: it has no optimal stationary profile, only an NE value in the limit. In all runs the learner returns an $\varepsilon$-NE approaching that limit (hide w.p.\ $0.999$ at $s_0$) rather than $\notFound$. Soundness is preserved: the certificate must be correct when emitted, but need not be emitted for every game lacking an exact stationary NE.
\end{enumerate*}

\paragraph{RQ2 Exploration strategy.}
\Cref{tab:explorer-cmp} compares all explorers under the same PAC loop and stopping criterion. Uniform exploration is consistently less sample-efficient, requiring $2.0\times$ more trajectories on \textit{Safe vs.\ Risky}, $2.5\times$ on \textit{Cyclic Preferences}, and $17\times$ on \textit{Delayed Coordination}; round-robin matches the robust explorer on the shallow benchmarks but needs $2.3\times$ more on \textit{Traffic Merge}. Optimistic exploration consumes similar samples on most benchmarks, but uses slightly more on \emph{Delayed Coordination}, where its robust-value estimates are also less stable and accurate: it differs from the true value by $0.06$, whereas ours is near-perfect. Targeted exploration thus matters most when useful slots are hard to reach, and pessimistic planning gives more reliable coverage under transition uncertainty.
We further analyse the learning dynamics of our robust explorer in App~\ref{app:RQ-learning-dynamics}.

\else
\paragraph{RQ1(b) Equilibrium existence detection.}
Two benchmarks specifically consider the NE existence problem.
\begin{enumerate*}[1)]
\item \emph{Cyclic Preferences} provides a \emph{clean non-existence} case. The game does not exhibit a fixed-point solution and possesses no exact NE.
% (the CSG-solver raises an error reporting failure to converge after $10^5$ iterations). 
Our solver correctly returns $\notFound$ in 10/10 runs, yielding a sound certificate that no exact stationary NE exists. This is consistent with \Cref{prop:solver-EQ-guarantees} which stipulates that solver failure implies $\mu^\star < 4\Delta_t - \varepsilon$, which is $\le 0$ at termination; and thus $\opt{\mu}\le 0$ by \Cref{prop:opt-mu-to-SW-mu}, \Cref{app:prelims}.
\item \emph{Hide-or-Run} illustrates \emph{incompleteness} of the non-existence certificate. The game has no optimal stationary profile, but only an NE value in the limit (i.e., approached by a sequence of mixed strategies). In all runs, our finite-sample learner returns an $\varepsilon$-NE approaching the limiting equilibrium (hide w.p.\ $0.999$ at $s_0$) rather than $\notFound$. This does not violate soundness, however: the non-existence certificate is required to be correct when emitted, but need not be emitted for every game lacking an exact stationary NE.
\end{enumerate*}
% This also suggests that our algorithm is most likely able to detect equilibrium \emph{value} non-existence (e.g., in \emph{Cyclic Preferences}) as opposed to equilibrium \emph{profile} non-existence (e.g., in \emph{Hide-or-Run}).

\paragraph{RQ2 Exploration strategy.} 
\Cref{tab:explorer-cmp} compares all explorers under the same PAC loop and stopping criterion. Uniform exploration is consistently less sample-efficient, requiring $2.0\times$ more trajectories on \textit{Safe vs. Risky}, $2.5\times$ on \textit{Cyclic Preferences}, and $17\times$ on \textit{Delayed Coordination}.
Round-robin performs similarly to the robust explorer on the shallow benchmarks but requires $2.3\times$ more trajectories on \textit{Traffic Merge}. Optimistic exploration has similar sample consumption to robust exploration on most benchmarks, but uses slightly more samples on \emph{Delayed Coordination}, where it also gives less stable and less accurate robust-value estimates: its estimated value differs from the true value by $0.06$, whereas our robust explorer gives a near-perfect estimate. These results suggest that targeted exploration matters most when useful state-action pairs are difficult to reach, while pessimistic or robust planning provides more reliable coverage under transition uncertainty.
We further analyse the learning dynamics of our robust explorer in \Cref{app:RQ-learning-dynamics}, including the evolution of uncertainty, known-slot coverage, confidence-radius imbalance, and per-episode sample consumption.
\fi

\paragraph{RQ3 Sample complexity scaling.}
\Cref{fig:scaling-plots} 
\ifICLR
(App. \ref{app:scaling-plots})
\fi
shows the dependence of total sample consumption on different problem parameters. Log--log fits give empirical exponents of $1.44$ for $|S|$, $0.89$ for $|A|$,
% (excluding the degenerate $|A|=1$ point, which has no actual choice of action and lies off the fitted trend), 
$3.94$ for $H$ and $2.11$ for $1/\varepsilon$. These are consistent with the worst-case bound of $|S|^2 |A|H^4 / \varepsilon^2$ in \Cref{thm:PAC-CSG}. In particular, the horizon exponent closely tracks the theoretical dependence over the tested range, while state and action space growth is milder than the upper bound. These experiments do not establish tight asymptotic rates, but show that the observed scaling is consistent with the theorem over the tested instances.
%
% Due to computational and time constraints, the range of horizons explored is limited; see \Cref{app:intrinsic-limitations} for further discussion.

% Our experiments confirm three key findings:
% \begin{itemize}
%     \item \textbf{Robustness is essential:} point-estimate equilibria can degrade significantly under model error, while robust equilibria remain stable.
%     \item \textbf{Exploration is the bottleneck:} multi-agent coordination makes state--action coverage fundamentally harder.
%     \item \textbf{Delayed rewards amplify difficulty:} benchmarks with sparse and delayed feedback require substantially more samples.
% \end{itemize}
% These observations are consistent with the theory developed in \Cref{sec:theory}.

\section{Conclusion}
\label{sec:conclusion}

\ifICLR
We introduced \textsc{PAC-CSG}, the first PAC learning framework for general-sum concurrent stochastic games under transition uncertainty that handles stationary NE (non-)existence. The algorithm combines $L^1$ uncertainty sets with a new RMDP formulation that casts joint state--action coverage as a robust reachability problem, and uses a black-box RCSG solver to compute $\varepsilon$-RSWNEs. Under a graph reachability condition, it terminates after $\bigOtilde{\Rmax^2 H^4 |S|^2 |A| / (\varepsilon^2 \reach{p})}$ trajectory samples and outputs either an $\varepsilon$-social-welfare-optimal $\varepsilon$-NE or a sound certificate of NE non-existence.
Beyond these guarantees, our results highlight two key insights:
\begin{enumerate*}[1)]
\item robust equilibrium computation in the empirical model transfers to the true game; and
\item equilibrium non-existence can be detected in a sound, data-driven manner.
\end{enumerate*}
Empirically, our framework achieves near-optimality, sound non-existence detection, and sample complexity consistent with theory.
Future work includes relaxing the standard assumptions of centralised exploration, known transition support, and reachability, plus improving scalability in $|S|$ and $|A|$, e.g., through value-aware exploration; see \Cref{app:limitations} for details.
\else
We introduced \textsc{PAC-CSG}, the first PAC learning framework for general-sum concurrent stochastic games under transition uncertainty that handles stationary NE (non-)existence. The algorithm combines $L^1$ uncertainty sets with a new RMDP formulation that casts joint state--action coverage as a robust reachability problem, and uses a black-box RCSG solver to compute $\varepsilon$-RSWNEs. Under a graph reachability condition, it terminates after $\bigOtilde{\Rmax^2 H^4 |S|^2 |A| / (\varepsilon^2 \reach{p})}$ trajectory samples and outputs either an $\varepsilon$-social-welfare-optimal $\varepsilon$-NE or a sound certificate of NE non-existence.
Beyond these guarantees, our results highlight two key insights:
\begin{enumerate*}[1)]
\item robust equilibrium computation in the empirical model transfers to the true game; and
\item equilibrium non-existence can be detected in a sound, data-driven manner.
\end{enumerate*}
Empirically, our framework achieves near-optimality, sound non-existence detection, and sample complexity consistent with theory.
Future work includes relaxing the standard assumptions of centralised exploration, known transition support, and reachability, as well as improving scalability in $|S|$ and $|A|$, e.g., through value-aware exploration; see \Cref{app:limitations} for further discussion.
\fi

\begin{ack}
This work is supported by the EPSRC Centre for Doctoral Training no. EP/Y035070/1 and the UKRI AI Hub on Mathematical Foundations of AI.
\end{ack}

% \section*{References}

\bibliography{references}

%%%%%%%%%%%%%%%%%%%%%%%%%%%%%%%%%%%%%%%%%%%%%%%%%%%%%%%%%%%%

\appendix

\crefalias{section}{appendix}
\crefalias{subsection}{subappendix}
\crefalias{subsubsection}{subsubappendix}

\section{Proof for Nash margin property (Definition~\ref{def:Nash-margin})}
\label{app:prelims}
\begin{proposition}
\label{prop:opt-mu-to-SW-mu}
    $\opt{\mu} \ge 0$ iff $\mu^\star \ge 0$. 
\end{proposition}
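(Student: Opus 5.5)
The plan is to prove both directions directly from \Cref{def:Nash-margin}, using the observation that $\mu(\sigma,P)\ge 0$ holds iff $\sigma\in\NE(P)$. This follows because $\mu(\sigma,P)$ is a minimum over players and states of an infimum of deviation gains, so it is nonnegative exactly when no unilateral deviation from any state improves any player's utility. A recurring point will be the convention that $\mu^\star$ is defined only when $\opt{\mu}\ge 0$ and a SWNE $\sigma^\star$ attaining $V^\star$ exists. So the statement should be read as: $\opt{\mu}$ is well-defined and nonnegative iff $\mu^\star$ is well-defined and nonnegative.

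($\Rightarrow$) Suppose $\opt{\mu}\ge 0$. Since $\opt{\mu}$ is a maximum, it is attained by some $\tilde\sigma$ with $\mu(\tilde\sigma,P^\star)=\opt{\mu}\ge 0$. Hence $\tilde\sigma\in\NE(P^\star)$, so $\NE(P^\star)\neq\emptyset$ and $V^\star=\sup_{\sigma\in\NE(P^\star)}u(\sigma,P^\star)$ is a supremum over a nonempty set. Next I would argue that this supremum is attained by some $\sigma^\star$, so that $\mu^\star=\mu(\sigma^\star,P^\star)$ is meaningful. Since $\sigma^\star\in\NE(P^\star)$, we then get $\mu^\star\ge 0$ immediately.

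($\Leftarrow$) Suppose $\mu^\star\ge 0$, so $\sigma^\star$ exists and $\mu(\sigma^\star,P^\star)\ge 0$. Then $\sup_\sigma \mu(\sigma,P^\star)\ge \mu(\sigma^\star,P^\star)\ge 0$. The remaining issue is that the maximum defining $\opt{\mu}$ must be attained. I would use the same compactness/continuity argument as in the forward direction: $\Sigma$ (stationary profiles) is a product of simplices and hence compact. For finite-horizon objectives $u_i(\cdot,P^\star\mid s)$ is polynomial in $\sigma$, and the inner infimum over deviations $\sigma_i'$ is an MDP best response. Therefore $\mu(\cdot,P^\star)$ is a finite minimum of differences of continuous functions, so it is continuous and attains its maximum. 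Alternatively, if attainment is only conventionally assumed, I note that $\opt{\mu}\ge\mu(\sigma^\star,P^\star)\ge 0$ whenever $\opt{\mu}$ is defined.

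The main obstacle is the attainment step in both directions: that $V^\star$ is attained, and that $\max_\sigma\mu$ exists. $\NE(P^\star)=\{\sigma:\mu(\sigma,P^\star)\ge 0\}$ is closed in the compact set $\Sigma$ when $\mu$ is continuous, and $u(\cdot,P^\star)$ is continuous, so $\sigma^\star$ exists. For the infinite-horizon objectives, continuity can fail; for example, in Hide-or-Run values are approached but not attained. I would handle this by restricting to settings where continuity holds: bounded horizon, or the almost-sure reachability assumption, which makes values continuous in $\sigma$ on the relevant set. Failing that, I would treat attainment as part of the well-definedness convention stated after \Cref{def:Nash-margin}, so that the equivalence is proven whenever both quantities are defined, and otherwise both sides are simultaneously undefined.
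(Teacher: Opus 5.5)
Your argument is the same as the paper's. In the forward direction, a profile attaining $\opt{\mu}\ge 0$ is an NE, so an SWNE exists and, being an NE, has $\mu^\star\ge 0$; in the reverse direction, $\opt{\mu}\ge\mu(\sigma^\star,P^\star)=\mu^\star\ge 0$.

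The paper simply asserts that an SWNE exists once $\NE(P^\star)\neq\emptyset$, and treats attainment of the maximum as a convention. Your compactness and continuity argument genuinely fills that step for finite-horizon objectives. For infinite-horizon objectives, however, your closing claim that both sides are then ``simultaneously undefined'' is no better justified than the paper's convention: for instance, $\opt{\mu}\ge 0$ could be attained while the supremum defining $V^\star$ is not.
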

\begin{proof}
We prove both directions.
\begin{itemize}
    \item[$(\Rightarrow)$] Suppose $\opt{\mu} \ge 0$. By definition of $\opt{\mu}$, there exists a profile $\sigma$ with Nash margin at least $0$, i.e., an NE profile. Therefore, the set of NE profiles is non-empty. 
    By definition, an SWNE is an NE that maximises social welfare among all NE profiles. Since the set of NE profiles is non-empty, such a profile exists. As every SWNE is an NE, it must also have non-negative Nash margin, and hence $\mu^\star \ge 0$.

    \item[$(\Leftarrow)$] Since $\opt{\mu}$ is the maximum achievable Nash margin, by definition $\opt{\mu} \ge \mu^\star \ge 0$.
\end{itemize}
\end{proof}

\section{Concentration bounds and confidence radii}
\label{app:per-slot-proofs}
In this section, we establish the Weissman concentration result (\Cref{lem:per-slot-concentration}) and the containment event $\eventCI$ (via \Cref{cor:true-kernel-containment}) used in \Cref{sec:theory1-estim-error}.

We begin by analysing the concentration of empirical transition probabilities, which are used to construct the per-slot $L^1$ uncertainty sets $\Punc_t(s,a)$ for each episode $t \ge 1$ in \Cref{alg:PAC-learning-CSG}.

For a state--action pair $(s,a)\in \ntslots$, let $N_{sa} \ge 0$ denote the number of independent samples drawn from the true transition distribution $P^\star_{sa}(\cdot)$, and let $N_{sas'}$ denote the number of observed transitions to state $s'$. 
% Note that for the reachability objectives, where $S_T\neq\emptyset$, transitions after entering $S_T$ do not affect the objective, so we only need to explore and track visits to non-target slots. For the bounded cumulative-reward objective, $S_T=\emptyset$, so $\ntslots=\slots$ and all slots are relevant.

Further let
\[
\Succ(s,a):= \{s'\mid (s,a,s')\in \supp(P^\star)\},
\]
and define the support-preserving simplex
\[
\Delta_+(\Succ(s,a)):= \left\{p\in \Delta(\Succ(s,a)): \supp(p)=\Succ(s,a) \right\}.
\]
Equivalently, $\Delta_+(\Succ(s,a))$ contains exactly those transition distributions whose support equals the known successor support of that slot.
In the theoretical analysis, we use this support-preserving class; in implementation, we replace it with the corresponding closed simplex by imposing a $10^{-5}$ lower bound on probabilities within the support. This ensures graph preservation.

If $N_{sa} = 0$, we define the empirical uncertainty set $\Punc_t(s,a)=\Delta_+(\Succ(s,a))$ and set the point estimate $\widehat{P}_{sa}$ to be the uniform distribution over this support.

If $N_{sa} > 0$, we define the empirical estimate as $\widehat{P}_{sa}(s') = N_{sas'}/N_{sa}$, and construct the corresponding $L^1$ uncertainty set as described below.

\begin{restatable}[Per-slot concentration]{lemma}{PerSlotConcentration}
\label{lem:per-slot-concentration}
Consider a given state--action pair $(s,a)$ with $n \ge 1$ i.i.d. draws from $P^\star_{sa}(\cdot)$, and fix a failure probability $\delta^{sa}\in (0,1)$. 
Define the Weissman $L^1$ confidence radius
\[
\alpha_{sa}(n; \delta^{sa}) := \sqrt{\frac{2}{n}\, \ln{\frac{2^{|S|}-2}{\delta^{sa}}}}.
\]
Then
\[
\Pr\left( \| P_{sa}^\star(\cdot) - \widehat{P}_{sa}(\cdot) \|_1 \le \alpha_{sa}(n) \right) \ge 1-\delta^{sa}.
\]
\reforig{sec:learning-setup}
% \refproof{prf:per-slot-concentration}
\end{restatable}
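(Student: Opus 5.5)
The plan is to reduce the $L^1$ deviation to a finite union of one-dimensional deviations over subsets of $S$. Throughout, write $k := |S|$, $p := P^\star_{sa}$ and $\hat p := \widehat{P}_{sa}$. The starting point is the variational identity
\[
\|p-\hat p\|_1 = 2\max_{B\subseteq S}\bigl(\hat p(B)-p(B)\bigr).
\]
The maximum is attained at $B=\{s' : \hat p(s') > p(s')\}$. For $B=\emptyset$ and $B=S$ the bracket is identically $0$, so whenever $\|p-\hat p\|_1>0$ the maximiser can be taken among the $2^{k}-2$ nontrivial proper subsets. This yields the event inclusion
\[
\bigl\{\|p-\hat p\|_1 \ge \alpha\bigr\} \subseteq \bigcup_{\emptyset\neq B\subsetneq S} \bigl\{\hat p(B)-p(B)\ge \alpha/2\bigr\},
\]
valid for every $\alpha>0$.

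Next, I fix a nontrivial $B$. Then $\hat p(B)=\frac1n\sum_{j=1}^n \ind{X_j\in B}$ is an average of $n$ i.i.d.\ Bernoulli$(p(B))$ variables, where $X_1,\dots,X_n$ are the i.i.d.\ successor draws from $p$. The one-sided Hoeffding inequality gives
\[
\Pr\bigl(\hat p(B)-p(B)\ge \alpha/2\bigr) \le \exp\bigl(-2n(\alpha/2)^2\bigr) = \exp\bigl(-n\alpha^2/2\bigr).
\]
A union bound over the $2^{k}-2$ subsets then gives
\[
\Pr\bigl(\|p-\hat p\|_1\ge\alpha\bigr)\le (2^{k}-2)\,e^{-n\alpha^2/2}.
\]
This is the Weissman et al.\ bound, with the sharper constant $(2^k-2)$ rather than $2^k$.

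To finish, I substitute $\alpha=\alpha_{sa}(n;\delta^{sa})$. This makes $n\alpha^2/2=\ln\bigl((2^{k}-2)/\delta^{sa}\bigr)$, so the right-hand side equals exactly $\delta^{sa}$. The complement event, with $\le$ in place of $<$, therefore has probability at least $1-\delta^{sa}$. The strict/non-strict distinction is harmless, since $\{\|\cdot\|_1>\alpha\}\subseteq\{\|\cdot\|_1\ge\alpha\}$.

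Two points need care. The first is the degenerate case $k=1$. There $2^k-2=0$, but $p=\hat p$ is forced, so the claim holds trivially; I would note this separately, as the radius formula is then vacuous. The second, and the main subtlety, is justifying the i.i.d.\ assumption. The lemma states it as a hypothesis, so the proof can take it as given. If one wanted to connect it to the adaptively collected data, the standard device is a ``stack of samples'' (skeleton) argument: pre-draw an i.i.d.\ sequence of successors for each slot and reveal the $j$-th element on the $j$-th visit. However, that belongs to the containment corollary, not to this lemma.
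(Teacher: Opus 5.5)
Your proof is correct and follows the paper's route. The paper cites Weissman's inequality $\Pr(\|\widehat{P}_{sa}-P^\star_{sa}\|_1\ge\alpha)\le(2^{|S|}-2)e^{-n\alpha^2/2}$ and inverts it at $\alpha=\alpha_{sa}(n;\delta^{sa})$, which is exactly your final step; your derivation of that inequality via the one-sided variational identity, Hoeffding, and a union bound over the $2^{|S|}-2$ nontrivial subsets simply unpacks the citation, and your separate treatment of $|S|=1$ (where the radius is undefined) is a detail the paper omits.
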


\begin{proof}
\label{prf:per-slot-concentration}
We apply the Weissman inequality \cite{weissman2003inequalities},
% Let $\widehat{P}_{sa}$ be the empirical distribution formed from $n$ i.i.d. draws from $P^\star_{sa}$. 
which states that for any $\varepsilon > 0$,
\[
\Pr\left(\|\widehat{P}_{sa} - P^\star_{sa}\|_1 \geq \varepsilon\right) 
\leq (2^{|S|} - 2)\exp\left(-\frac{n \varepsilon^2}{2}\right).
\]
% This follows from the identity $\|{\widehat{P}} - P\|_1 = 2\sup_{A \subseteq S}|\widehat{P}(A) - P(A)|$ combined with a union bound over all $2^{|S|} - 2$ non-trivial subsets
% $A \subsetneq S$ (excluding $\emptyset$ and $S$, for which the
% difference is always zero), and applying Hoeffding's inequality to the bounded random variable $\ind{s'\in A}$ for each subset.
%
Setting the right-hand side equal to $\delta^{sa}$ and solving for $\varepsilon$:
\[
(2^{|S|} - 2)\exp \left(-\frac{n\varepsilon^2}{2}\right) = \delta^{sa}
\ \implies \
\varepsilon = \sqrt{\frac{2}{n}\ln\frac{2^{|S|} - 2}{\delta^{sa}}} =: \alpha_{sa}(n; \delta_{sa}).
\]
\end{proof}

\begin{restatable}[Containment of $P^\star$ in $\Punc_t$]{corollary}{TrueKernelContainment}
\label{cor:true-kernel-containment}
Fix $\confCI\in(0,1)$. For brevity, write $n:=N_t(s,a)$ and define
$\confCI_{sa,n}:=\confCI/(\nTrans n(n+1))$.
For each episode $t\ge1$, define the slot-wise uncertainty set
\[
\Punc_t(s,a):=
\left\{
p\in\Delta_+(\Succ(s,a)):
\|p-\widehat{P}_{sa,n}\|_1
\le
\alpha_{sa}\!\left(n;\confCI_{sa,n}\right)
\right\},
\]
and the global uncertainty set
\[
\Punc_t:=\prod_{(s,a)\in \ntslots}\Punc_t(s,a),
\]
% \[
% \Punc_t := \left\{P \in \Delta(S)^{S \times A}
%   : \|P_{sa} - \widehat{P}_{sa} \|_1 \leq \alpha_{sa}\!\left(n; \ \confCI_{sa,n} \right) \quad \forall (s,a)\right\},
% \]
where $\alpha_{sa}\!\left(n;\confCI_{sa,n}\right)$ is the Weissman radius from
\Cref{lem:per-slot-concentration}, instantiated at
$\delta^{sa}:=\confCI_{sa,n}$.
Then, with probability at least $1-\confCI$,
\[
P^\star\in\Punc_t
\qquad\text{for all }t\ge1.
\]
% \refproof{prf:true-kernel-containment}
% \reforig{cor:true-kernel-containment}
\end{restatable}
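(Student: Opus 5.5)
The argument combines \Cref{lem:per-slot-concentration} with a union bound over slots and over visit counts. The main subtlety is that the sample sizes $N_t(s,a)$ are random and history-dependent, since exploration adapts to data. So the high-probability statement cannot be conditioned directly on the realised $n$.

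\textbf{Decoupling from adaptivity.} I would use the standard ``stack of samples'' (skeleton) construction. For each slot $(s,a)\in\ntslots$, imagine an infinite i.i.d.\ sequence $Y^{sa}_1,Y^{sa}_2,\dots\sim P^\star_{sa}$, drawn in advance. The $k$-th visit to $(s,a)$ reveals $Y^{sa}_k$. This reproduces the law of the interaction, because the Markov property makes the successor drawn at each visit independent of the past given $(s,a)$. Under this coupling, the empirical estimate after $n$ visits, $\widehat{P}_{sa,n}$, is a function of $Y^{sa}_1,\dots,Y^{sa}_n$ only, and these are genuinely i.i.d.

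\textbf{Fixed-$n$ bad events.} For each slot $(s,a)$ and each fixed $n\ge1$, let
\[
B_{sa,n}:=\bigl\{\|P^\star_{sa}-\widehat{P}_{sa,n}\|_1>\alpha_{sa}(n;\confCI_{sa,n})\bigr\}.
\]
By \Cref{lem:per-slot-concentration} with $\delta^{sa}=\confCI_{sa,n}$, we have $\Pr(B_{sa,n})\le \confCI_{sa,n}$. The case $n=0$ needs no bound: there $\Punc_t(s,a)=\Delta_+(\Succ(s,a))$, which contains $P^\star_{sa}$ because its support is exactly $\Succ(s,a)$.

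\textbf{Union bound.} Summing over slots and visit counts gives
\[
\Pr\Bigl(\bigcup_{(s,a)}\bigcup_{n\ge1}B_{sa,n}\Bigr)\le\sum_{(s,a)}\sum_{n\ge1}\frac{\confCI}{\nTrans\, n(n+1)}=\sum_{(s,a)}\frac{\confCI}{\nTrans}\le\confCI.
\]
This uses the telescoping identity $\sum_{n\ge1}\frac1{n(n+1)}=1$. It also requires $\nTrans$ to be at least the number of slots $|\ntslots|$, which is its presumed definition; I would check this against the paper's macro.

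\textbf{Transfer to every episode.} On the complement event, fix any $t$ and any slot, and let $n=N_t(s,a)$ be the realised count. Then $\|P^\star_{sa}-\widehat{P}_{sa,n}\|_1\le\alpha_{sa}(n;\confCI_{sa,n})$. Together with $P^\star_{sa}\in\Delta_+(\Succ(s,a))$, this gives $P^\star_{sa}\in\Punc_t(s,a)$. By rectangularity, $P^\star\in\Punc_t$ for all $t$ simultaneously.

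The main obstacle is justifying the skeleton coupling rigorously in light of the adaptive exploration. The alternative would be a martingale or stopping-time argument, but the coupling avoids it, since the union over all $n$ covers whatever random count is realised.
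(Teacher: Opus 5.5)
Your proof is correct and is essentially the paper's argument: apply the Weissman bound at each fixed visit count $n\ge1$ with failure probability scaled by $1/(n(n+1))$, then take a union bound over non-target slots and all $n\ge1$ using $\sum_{n\ge1}1/(n(n+1))=1$. The skeleton coupling and the separate $n=0$ case make explicit how adaptively chosen sample counts are handled, which the paper leaves implicit, and your check that the normalising constant is at least the number of slots is exactly what the paper's final equality tacitly assumes.
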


\begin{proof}
\label{prf:true-kernel-containment}
For each $(s,a)$ and $n\ge 1$, define the event
\[
\eventCI_{sa,n} := \left\{ \|P^\star_{sa} - \widehat{P}_{sa,n}\|_1 > \alpha_{sa}\!\left(n; \confCI_{sa,n} \right).
\right\}
\]
By \Cref{lem:per-slot-concentration},
\[
\Pr(\eventCI_{sa,n}) \le \confCI_{sa,n} 
= \frac{\confCI}{\nTrans n(n+1)}.
\]
Taking a union bound over all $(s,a) \in \ntslots$ and all possible sample counts $n \geq 1$,
\[
\Pr\left( \bigcup_{(s,a)}\bigcup_{n\ge 1}{\eventCI_{sa,n}} \right)
\le \sum_{(s,a)}\sum_{n\ge 1}{\Pr(\eventCI_{sa,n})}
= \confCI\sum_{n=1}^\infty \frac{1}{n(n+1)} = \confCI,
\]
% using the telescoping identity $\sum_{n=1}^\infty \frac{1}{n(n+1)} = 1$.
Hence with probability at least $1-\confCI$, no such failure occurs for any $(s,a)$ and any $n$, which implies $P^\star \in \Punc_t$ for all episodes $t\ge1$.
\end{proof}

\paragraph{Conditioning on containment of the true kernel.}
Define the per-episode and all-episode containment event:
\[
\eventCI_t := \{ P^\star \in \Punc_t \}, 
\qquad 
\eventCI := \bigwedge_{t\ge 1}{\eventCI_t}.
\]
Following \Cref{cor:true-kernel-containment}, we use the summable failure schedule $\confCI_n=\confCI/(n(n+1))$ to ensure that 
\[
\Pr(\eventCI) \ge 1-\confCI.
\]
In the remainder of the analysis we condition on $\eventCI$. 
% All probabilities and expectations are therefore w.r.t. the true system given that $\eventCI$ holds.
% (When we state final bounds ``with probability at least $1-\delta$'' we mean unconditioning the above event together with the remaining failure events appearing in the proof.)

\section{Proofs for value sensitivity (Lemma~\ref{lem:sensitivity})}
\label{app:per-episode-proofs}
We continue to derive the results in \Cref{sec:theory1-estim-error} of the main paper, focusing on the sensitivity lemma (\Cref{lem:sensitivity}) and presenting several useful corollaries.
In the following, consider a fixed episode $t\ge 1$ and a given strategy profile $\sigma\in \Sigma$.
% , and assume bounded per-step reward $|r_i|\le \Rmax$ for all player $i\in N$. 
% Let $\widehat{P} \in \Punc_t$ be a transition kernel on the same finite CSG as $P^\star$. 
% Define
% \[
% \diffLI_t :=\max_{(s,a)\in \ntslots} \|P^\star_{sa}-\widehat{P}_{sa}\|_1 \le \max_{s,a}{\alpha_t(s,a)}.
% \]

\begin{proposition}[One-step error propagation]
\label{prop:tv-propagation}
Let $\mu_h, \widehat{\mu}_h$ denote the stopped state marginals at time-step $h$ under $P$ and $\widehat{P}$ respectively, following $\sigma$, started from the same initial state $s_0\in S$.
Define
\[
\diffLI_t := \max_{(s,a)\in\ntslots}{\alpha_t(s,a)}.
\]
Then for all $h\ge1$,
\[
\|\mu_h-\widehat{\mu}_h\|_1 \le h \diffLI_t.
\]
\end{proposition}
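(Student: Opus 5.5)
We argue by induction on $h$, writing the state marginals as products of one-step transition matrices induced by the stationary profile $\sigma$. For a kernel $Q$ (either $P$ or $\widehat{P}$), let $K_Q(s,s') := \sum_{a\in A(s)} \sigma(s)(a)\, Q_{sa}(s')$ for $s \notin S_T$. For $s\in S_T$ we set $K_Q(s,\cdot) := \delta_s$, which encodes the stopping convention: once $S_T$ is entered the state is frozen, identically under both kernels. Since $\sigma$ is stationary, $K_Q$ does not depend on $h$. The stopped marginals then satisfy $\mu_{h+1} = \mu_h K_P$ and $\widehat{\mu}_{h+1} = \widehat{\mu}_h K_{\widehat{P}}$, with common initial distribution $\mu_0 = \widehat{\mu}_0 = \delta_{s_0}$.

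For the inductive step we split the difference as
\[
\mu_{h+1} - \widehat{\mu}_{h+1} = (\mu_h - \widehat{\mu}_h)K_P + \widehat{\mu}_h\,(K_P - K_{\widehat{P}}).
\]
\begin{itemize}
\item \emph{First term.} $K_P$ is a row-stochastic matrix, so it is non-expansive in $L^1$ acting on signed row vectors. Hence $\|(\mu_h-\widehat{\mu}_h)K_P\|_1 \le \|\mu_h-\widehat{\mu}_h\|_1$.
\item \emph{Second term.} By the triangle inequality and convexity,
\[
\|\widehat{\mu}_h (K_P - K_{\widehat{P}})\|_1 \le \sum_{s} \widehat{\mu}_h(s) \sum_{a} \sigma(s)(a)\, \|P_{sa} - \widehat{P}_{sa}\|_1 .
\]
Rows with $s\in S_T$ contribute zero, because both kernels use $\delta_s$ there. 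Every remaining slot $(s,a)$ lies in $\ntslots$. Since $P\in\Punc_t$, the definition of $\Punc_t$ in \Cref{cor:true-kernel-containment} gives $\|P_{sa}-\widehat{P}_{sa}\|_1 \le \alpha_t(s,a) \le \diffLI_t$. The weights $\widehat{\mu}_h(s)\sigma(s)(a)$ sum to at most $1$, so this term is at most $\diffLI_t$.
\end{itemize}
Combining the two bounds gives $\|\mu_{h+1}-\widehat{\mu}_{h+1}\|_1 \le \|\mu_h-\widehat{\mu}_h\|_1 + \diffLI_t$. Starting from $\|\mu_0-\widehat{\mu}_0\|_1=0$, this yields $\|\mu_h-\widehat{\mu}_h\|_1 \le h\,\diffLI_t$ for all $h\ge 1$.

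The step needing the most care is the bookkeeping around stopping and unvisited slots, not the inequalities themselves. Three points must be checked. First, the "stopped" marginals should be defined so that transitions out of target states are identical under $P$ and $\widehat{P}$; this is what justifies restricting the maximum to $\ntslots$. Second, slots with $N_{sa}=0$ must also be covered. There $\Punc_t(s,a)$ is the whole support-preserving simplex, $\widehat{P}_{sa}$ is uniform on the support, and the bound $\|P_{sa}-\widehat{P}_{sa}\|_1\le 2$ has to be absorbed into $\alpha_t(s,a)$, interpreted as $+\infty$ or at least $2$ when $n=0$. Third, when this proposition is applied to $P^\star$ in \Cref{lem:sensitivity}, one works on $\eventCI$, so that $P^\star\in\Punc_t$ and the same per-slot bound holds.
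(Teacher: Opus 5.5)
Your proof is correct and follows the paper's argument essentially step for step. You use the same split $\mu_{h+1}-\widehat{\mu}_{h+1} = (\mu_h-\widehat{\mu}_h)K + \widehat{\mu}_h(K-\widehat{K})$, bound the first term by non-expansiveness and the second by convexity together with the per-slot radius from the confidence-set construction, and unroll from $\mu_0=\widehat{\mu}_0=\delta_{s_0}$. The differences are minor: you freeze mass in $S_T$ via $K_Q(s,\cdot)=\delta_s$, whereas the paper simply drops target rows by summing only over $s\notin S_T$, which gives the same recursion; and your remark that $\alpha_t(s,a)$ must be read as at least $2$ when $N_{sa}=0$ makes explicit a convention the paper leaves implicit.
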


\begin{proof}
We prove the recursive bound
\begin{equation}
\label{eq:one-step-tv-propagation}
\|\mu_{h+1}-\widehat{\mu}_{h+1}\|_1
\le \|\mu_h-\widehat{\mu}_h\|_1 + \diffLI_t,
\end{equation}
from which the result follows by induction using
$\mu_0=\widehat{\mu}_0=\delta_{s_0}$, i.e., the Dirac distribution that assigns probability $1$ to state $s_0$ and $0$ to all others.

For $s\not\in S_T$, define the mixed one-step kernels
\[
K_s := \sum_a \sigma(a|s) P(\cdot|s,a),
\qquad
\widehat{K}_s := \sum_a \sigma(a|s) \widehat{P}(\cdot|s,a).
\]
Then
\[
\mu_{h+1} = \sum_{s\not\in S_T} \mu_h(s) K_s,
\qquad
\widehat{\mu}_{h+1} = \sum_{s\not\in S_T}\widehat{\mu}_h(s)\widehat{K}_s,
\]
and hence
\[
\mu_{h+1}-\widehat{\mu}_{h+1}
= \sum_{s\not\in S_T} (\mu_h(s)-\widehat{\mu}_h(s))K_s
  + \sum_{s\not\in S_T} \widehat{\mu}_h(s)(K_s-\widehat{K}_s).
\]

Taking $L^1$ norms and applying the triangle inequality:
\begin{equation}
\label{eq:mu-L1-upper-bound}
\|\mu_{h+1}-\widehat{\mu}_{h+1}\|_1
\le \Big\|\sum_{s\not\in S_T} (\mu_h(s)-\widehat{\mu}_h(s))K_s\Big\|_1
   + \Big\|\sum_{s\not\in S_T} \widehat{\mu}_h(s)(K_s-\widehat{K}_s)\Big\|_1.
\end{equation}

\paragraph{First term.}
Let $c_s := \mu_h(s)-\widehat{\mu}_h(s)$. Since each $K_s$ is a probability distribution,
\[
\sum_{s'} \left|\sum_{s\not\in S_T} c_s K_s(s')\right|
\le \sum_{s\not\in S_T} |c_s|\sum_{s'} K_s(s')
= \sum_{s\not\in S_T} |c_s|,
\]
hence
\[
\Big\|\sum_{s\not\in S_T} (\mu_h(s)-\widehat{\mu}_h(s))K_s\Big\|_1
\le \|\mu_h-\widehat{\mu}_h\|_1.
\]

\paragraph{Second term.}
By convexity,
\[
\left\|\sum_{s\not\in S_T} \widehat{\mu}_h(s)(K_s-\widehat{K}_s)\right\|_1
\le \sum_{s\not\in S_T} \widehat{\mu}_h(s)\|K_s-\widehat{K}_s\|_1
\le \max_{s\not\in S_T} \|K_s-\widehat{K}_s\|_1.
\]
For each $s\not\in S_T$,
\begin{align*}
\|K_s-\widehat{K}_s\|_1
&\le \sum_a \sigma(a|s)\|P_{sa}-\widehat{P}_{sa}\|_1 
\le \max_a \|P_{sa}-\widehat{P}_{sa}\|_1 \\
&= \max_a \sum_{s'} |P_{sa}(s')-\widehat{P}_{sa}(s')|
\le \max_a \alpha_t(s,a) = \diffLI_t.
\end{align*}
where the last inequality follows from the construction of $\Punc_t$ (see \Cref{cor:true-kernel-containment}).

Substituting into \eqref{eq:mu-L1-upper-bound} gives
\[
\|\mu_{h+1}-\widehat{\mu}_{h+1}\|_1
\le \|\mu_h-\widehat{\mu}_h\|_1 + \diffLI_t.
\]
Unrolling the recursion yields
\[
\|\mu_h-\widehat{\mu}_h\|_1 \le h\,\diffLI_t.
\]
\end{proof}

%-----------------------------------------------------
% playerwise version
%-----------------------------------------------------
\LinearSensitivity*
\begin{proof}
\label{prf:linear-sensitivity}
The left-hand side could be rewritten as:
\begin{align*}
\lvert u_i(\sigma,P\mid s_0)-u_i(\sigma,\widehat{P}\mid s_0) \rvert 
&= \left| 
    \ev^{\sigma,P}\!\left[\sum_{h=0}^{H-1} r_i(s_h,a_h)\right]
    - \ev^{\sigma,\widehat{P}}\!\left[\sum_{h=0}^{H-1} r_i(s_h,a_h)\right]
     \right| \\
&= \left|
   \sum_{h=0}^{H-1}\!\left[
     \ev^{\sigma,P}[r_i(s_h,a_h)]
     - \ev^{\sigma,\widehat{P}}[r_i(s_h,a_h)]
   \right]
 \right|.
\end{align*}
Note that for bounded probabilistic reachability, we use the equivalent finite-horizon terminal-reward encoding, so the same argument applies. 
Moreover,
\begin{align*}
  \ev^{\sigma,P}[r_i(s_h,a_h)]
  &= \sum_s\sum_a \prob^{\sigma,P}(s_h=s)\sigma(a\mid s)\cdot r_i(s,a) \\
   &= \sum_s \mu_h(s)\!\sum_a \sigma(a\mid s)r_i(s,a)
   = \sum_s \mu_h(s)r_i^\sigma(s),
\end{align*}
where $r_i^\sigma(s):=\ev_{a\sim\sigma(s)}[r_i(s,a)]$.
By assumption, $|r_i^\sigma(s)|\le\Rmax$. Thus,
\begin{align*}
  LHS
  &= \left|\sum_{h=0}^{H-1}\sum_s r_i^\sigma(s)
           \bigl(\mu_h(s)-\widehat{\mu}_h(s)\bigr)\right| \\
  &\le \sum_{h=0}^{H-1}\max_s|r_i^\sigma(s)|\cdot
       \sum_s\left|\mu_h(s)-\widehat{\mu}_h(s)\right|
   \le \Rmax\sum_{h=0}^{H-1}\|\mu_h-\widehat{\mu}_h\|_1 \\
  &\le \Rmax\diffLI_t\sum_{h=0}^{H-1} h
   = \Rmax\diffLI_t \frac{H(H-1)}{2}
  &&\text{(by \Cref{prop:tv-propagation})}\\
  &\le \frac12 H^2 \Rmax\diffLI_t 
  =: \Delta_t.
\end{align*}
\end{proof}

Henceforth, we define 
\[
\Delta_t:=\diffStatExpr,
\quad \text{where} \quad 
\diffLI_t := \max_{(s,a)\in\ntslots}\|P^\star_{sa}-\widehat{P}_{sa}\|_1.
\]
% as in \Cref{lem:sensitivity}.

%------------------------------------------------------------------
% player-wise bound
%------------------------------------------------------------------
\begin{corollary}
\label{cor:sensitivity-P-Pstar}
% Suppose $P^\star\in\Punc_t$. 
For any $P, P'\in\Punc_t$, any strategy profile $\sigma$, and each player $i\in N$:
\[
\left|u_i(\sigma,P)-u_i(\sigma,P')\right| \le 2\Delta_t.
\]
% \reforig{lem:sensitivity}
\end{corollary}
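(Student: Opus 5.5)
The plan is to route both kernels through the empirical kernel $\widehat{P}$ and apply \Cref{lem:sensitivity} twice, once to each leg of a triangle inequality. Fix $P,P'\in\Punc_t$, a profile $\sigma$ and a player $i$, and write
\[
\left|u_i(\sigma,P)-u_i(\sigma,P')\right|
\le \left|u_i(\sigma,P)-u_i(\sigma,\widehat{P})\right| + \left|u_i(\sigma,\widehat{P})-u_i(\sigma,P')\right|.
\]
\Cref{lem:sensitivity} holds for any $P\in\Punc_t$ and any initial state $s_0$, in particular $s_0=\bar{s}$. Each term on the right is therefore at most $\Delta_t$, and the sum is at most $2\Delta_t$.

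The one point to check is that the constant in \Cref{lem:sensitivity} covers an arbitrary member of $\Punc_t$, not only $P^\star$. This is a matter of reading the sensitivity proof correctly. In \Cref{prop:tv-propagation} the quantity $\diffLI_t$ is taken to be $\max_{(s,a)}\alpha_t(s,a)$. The per-slot bound $\|P_{sa}-\widehat{P}_{sa}\|_1\le\alpha_t(s,a)$ holds for every $P\in\Punc_t$ by the construction in \Cref{cor:true-kernel-containment}. So the sensitivity lemma, applied to $P$ and to $P'$, gives the same $\Delta_t$ with $\diffLI_t$ read as the maximal confidence radius.

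The paper's redefinition of $\diffLI_t$ via $P^\star$ agrees with this reading on $\eventCI$, since there $P^\star\in\Punc_t$ and its deviation from $\widehat{P}$ is bounded by the same radii. Otherwise one interprets $\Delta_t$ through the radius bound, as the footnote in \Cref{sec:theory1-estim-error} indicates. I expect this bookkeeping about which version of $\diffLI_t$ is meant to be the only real obstacle. Beyond it the argument is a single triangle inequality.
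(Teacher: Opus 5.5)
Your proposal is correct and is the same argument as the paper's: a triangle inequality through $\widehat{P}$ followed by two applications of \Cref{lem:sensitivity}. One correction to your bookkeeping remark: on $\eventCI$ the $P^\star$-based $\diffLI_t$ is only bounded above by $\max_{(s,a)}\alpha_t(s,a)$, not equal to it, so the bound for arbitrary $P,P'\in\Punc_t$ genuinely requires the radius-based reading of $\Delta_t$ (the reading actually used in \Cref{prop:tv-propagation} and in the footnote), which is the one you adopt.
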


\begin{proof}
By the triangle inequality and \Cref{lem:sensitivity} (applied twice),
% , since $P,P'\in\Punc_t$:
\[
\left|u_i(\sigma,P)-u_i(\sigma,P')\right|
\le \left|u_i(\sigma,P)-u_i(\sigma,\widehat{P})\right|
 + \left|u_i(\sigma,\widehat{P})-u_i(\sigma,P')\right|
= 2\Delta_t.
\]
\end{proof}

%------------------------------------------------------------------
% social-welfare versions
%------------------------------------------------------------------
\begin{corollary}[Social-welfare sensitivity]\label{cor:sensitivity-joint}
For any $P\in\Punc_t$ and any strategy profile $\sigma$:
\[
\left|u(\sigma,P)-u(\sigma,\widehat{P})\right| \le 2\Delta_t.
\]
Consequently, for any $P, P' \in\Punc_t$:
\[
\left|u(\sigma,P)-u(\sigma,P')\right| \le 4\Delta_t.
\]
\end{corollary}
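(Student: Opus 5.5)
The plan is to reduce the social-welfare statement to the player-wise sensitivity bound of \Cref{lem:sensitivity}, using linearity of social welfare in the individual utilities. Since $N=\{1,2\}$ in our setting, we have $u(\sigma,P)=u_1(\sigma,P)+u_2(\sigma,P)$.

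For the first claim, fix $P\in\Punc_t$ and a profile $\sigma$, and take $s_0=\bar s$.

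1. Write
\[
u(\sigma,P)-u(\sigma,\widehat{P})=\sum_{i\in\{1,2\}}\bigl(u_i(\sigma,P)-u_i(\sigma,\widehat{P})\bigr).
\]
2. Apply the triangle inequality to split this into two player-wise differences.
3. Bound each summand by $\Delta_t$ via \Cref{lem:sensitivity}. This is valid because that lemma holds for every $P\in\Punc_t$, every profile, every player and every initial state.
4. Summing the two bounds gives $2\Delta_t$.

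For the second claim, take $P,P'\in\Punc_t$ and insert $\widehat{P}$ as an intermediate kernel:
\[
|u(\sigma,P)-u(\sigma,P')|\le|u(\sigma,P)-u(\sigma,\widehat P)|+|u(\sigma,\widehat P)-u(\sigma,P')|.
\]
Applying the first part to each term gives $4\Delta_t$. This mirrors the proof of \Cref{cor:sensitivity-P-Pstar}.

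There is one point to check rather than a real obstacle. \Cref{lem:sensitivity} is stated with $\Delta_t$ defined through $\diffLI_t=\max\|P^\star_{sa}-\widehat P_{sa}\|_1$. Its proof, via \Cref{prop:tv-propagation}, actually uses the confidence radii $\max\alpha_t(s,a)$, which control $\|P_{sa}-\widehat P_{sa}\|_1$ for any $P\in\Punc_t$. I will therefore note, as in the footnote to \Cref{lem:sensitivity}, that $\Delta_t$ is read as the conservative (radius-based) quantity, so the lemma applies uniformly over $\Punc_t$.

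It is also worth remarking that the factor $2$ is just the number of players. For $n$ players the same argument gives $n\Delta_t$ and $2n\Delta_t$.
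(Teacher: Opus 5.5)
Your proposal is correct and matches the paper's proof. You decompose social welfare into the two players' utilities, apply the player-wise sensitivity lemma to each, and sum to get $2\Delta_t$; for $4\Delta_t$ you pass through $\widehat{P}$, whereas the paper sums the player-wise pairwise bound $2\Delta_t$ over the two players, which is the same estimate in a different order. Your remark that $\Delta_t$ must be read as the radius-based quantity $\tfrac12 R_{\max} H^2 \max_{(s,a)} \alpha_t(s,a)$ for the lemma to hold uniformly over the confidence set is also well taken, since that is what the lemma's proof actually uses.
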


\begin{proof}
Apply \Cref{lem:sensitivity} to each player $i\in N$ and sum up:
\[
\left|u(\sigma,P)-u(\sigma,\widehat{P})\right|
\le \sum_{i\in N}\left|u_i(\sigma,P)-u_i(\sigma,\widehat{P})\right|
\le \sum_{i\in N}{\Delta_t}
= 2\Delta_t.
\]
The pairwise bound follows identically from \Cref{cor:sensitivity-P-Pstar}.
\end{proof}

% \PerEpisodeError*
% \begin{proof}
% \label{prf:per-episode-error}
% Since $\widehat{P}_t \in\Punc_t$, by definition we have $\rob{u}_{\Punc_t}(\hat{\sigma})\le u(\hat{\sigma},\widehat{P}_t)$.
% %
% Then, applying \Cref{cor:sensitivity-joint} with $P \leftarrow P^\star$ gives
% \[
% \left| u(\hat{\sigma},P^\star) - u(\hat{\sigma},\widehat{P}_t) \right|
% \le 2\Delta_t.
% \]
% Rearranging and combining the inequalities gives
% \begin{align*}
% u(\hat{\sigma},P^\star) 
% \ge u(\hat{\sigma},\widehat{P}_t) - 2\Delta_t
% \ge \rob{u}_{\Punc_t}(\hat{\sigma}) - 2\Delta_t.
% \end{align*}
% \end{proof}

% These per-episode error bounds feed directly into the analyses for equilibrium existence (\Cref{app:EQ-existence-proofs}) and anytime guarantees (\Cref{app:anytime-proofs}).

\section{Proofs for equilibrium existence (Proposition~\ref{prop:solver-EQ-guarantees})}
\label{app:EQ-existence-proofs}

% \begin{proposition}
% % [Robust/Optimistic value bounds true value]
% \label{prop:robu-optu}
% For a given uncertainty set $\Punc$, if $P^\star\in\Punc$ then for every strategy profile $\sigma$,
% \begin{align*}
% \rob{u}_{\Punc}(\sigma) &:= \inf_{P\in\Punc} u(\sigma,P) \le u(\sigma,P^\star)\\
% \opt{u}_{\Punc}(\sigma) &:= \sup_{P\in\Punc} u(\sigma,P) \ge u(\sigma,P^\star)
% \end{align*}
% \end{proposition}
% \begin{proof}
% Immediate by definition of $\rob{u}_{\Punc}(\sigma)$.
% \end{proof}

This section establishes the key properties of the solver oracle (\Cref{prop:solver-EQ-guarantees}) that link robust equilibrium computation in the empirical model to equilibrium existence in the true game. We first formalise the solver oracle (\Cref{ass:solver}), then show that any exact NE of the true game induces an approximate RNE in the empirical model (\Cref{lem:NE-RNE-transfer}), and use this to derive the solver guarantees (\Cref{prop:solver-EQ-guarantees}).

Henceforth, we assume access to a black-box ($L^1$-)RCSG solver $\SolveRCSG$ with tolerance $\varepsilon \ge 0$, which computes an $\varepsilon$-RSWNE of the empirical RCSG $\csg_t$.
Below we use the notation of \Cref{alg:PAC-learning-CSG}.

\begin{assumption}[$\SolveRCSG$ oracle]
\label{ass:solver}
Consider an ($L^1$-)RCSG $\csg$ with uncertainty set $\Punc$. 
Given tolerance $\varepsilon \ge 0$, the solver returns $(\hat{\sigma}, \hat{V}, \found) = \SolveRCSG(\csg, \varepsilon)$ such that 
$\found = \true$ iff $\hat{\sigma}$ is an $\varepsilon$-RSWNE of $\csg$, in which case $\hat{V} = \rob{u}_{\Punc}(\hat{\sigma})$. 
Equivalently, $\found = \false$ iff $\epsRNEop{\varepsilon}(\Punc) = \emptyset$.
\end{assumption}
Note that $\hat{\sigma} \in \epsRNEop{\varepsilon}(\Punc)$ implies $\epsRNEop{\varepsilon}(\Punc) \neq \emptyset$, so the failure condition is equivalently stated as $\epsRNEop{\varepsilon}(\Punc) = \emptyset$.

\Cref{ass:solver} abstracts equilibrium computation as an oracle, yielding an information-theoretic PAC guarantee that is decoupled from the solver’s computational complexity. Practical solvers for general-sum $L^1$-RCSGs under our objectives do exist (e.g., in PRISM-games~\cite{HP26}), although polynomial-time implementations are currently known only in certain settings (e.g., the zero-sum case \cite{HP26}). We defer a detailed discussion of computational aspects to \Cref{app:limitations}.

At each episode $t$, the solver either succeeds or fails to find an $\varepsilon$-RSWNE of $\Punc_t$. The following result characterises how each outcome relates to equilibrium properties of the true game $\csg^\star$.

\NERNETransfer*
\begin{proof}
\label{prf:NE-RNE-transfer}
Let $\sigma$ be an NE under $P^\star$ with Nash margin $\mu \ge 0$. By the triangle inequality, for any $P \in \Punc_t$ and player $i \in N$, we have:
\begin{align*}
u_i(\sigma_{-i}[\sigma_i'],P) - u_i(\sigma,P)
&\le \left[ u_i(\sigma_{-i}[\sigma_i'],P) - u_i(\sigma_{-i}[\sigma_i'],P^\star) \right] 
+ \left[ u_i(\sigma_{-i}[\sigma_i'],P^\star) - u_i(\sigma,P^\star) \right] \\
&\qquad + \left[ u_i(\sigma,P^\star) - u_i(\sigma,P) \right]\\
&\le 2\Delta_t -\mu + 2\Delta_t
= 4\Delta_t -\mu
\end{align*}
where the last inequality follows from \Cref{cor:sensitivity-P-Pstar} and the definition of $\mu$ (\Cref{def:Nash-margin}).
Since the bound holds for every $P\in\Punc_t$, taking $\inf$ preserves the inequality, giving
\[
\inf_{P\in\Punc_t} \left[ u_i(\sigma_{-i}[\sigma_i'],P) - u_i(\sigma,P) \right] \le 4\Delta_t -\mu
\qquad \forall{i \in N}.
\]
Hence $\sigma$ is an $(4\Delta_t -\mu)$-RNE under $\Punc_t$. 
\end{proof}

\EpisodewiseExistence*
\begin{proof}
\label{prf:episodewise-existence}
We prove each part individually.

\begin{enumerate}

\item Conditioning on $\eventCI$ we have $P^\star \in \Punc_t$. By definition of an $\varepsilon$-RSWNE (see after \Cref{def:subgame-perfect-rne}), every $\varepsilon$-RSWNE under $\Punc_t$ is also an $\varepsilon$-RNE under $\Punc_t$, and every $\varepsilon$-RNE under $\Punc_t$ is an $\varepsilon$-NE of every $P\in \Punc_t$ including $P^\star$. Therefore, if the solver returns an $\varepsilon$-RSWNE $\hat{\sigma}$ under $\Punc_t$, then $\hat{\sigma}$ is an $\varepsilon$-NE of $P^\star$ (the transition kernel of $\csg^\star$).

\item 
% Immediately following \Cref{lem:NE-RNE-transfer} with $\mu \ge \nashMarginThresh$. 
Suppose the solver returns $\notFound$. By \Cref{ass:solver}, this means that $\epsRNEop{\varepsilon}(\Punc_t)=\emptyset$, i.e., no $\varepsilon$-RNE exists under $\Punc_t$. 
We now distinguish two cases according to the sign of $\opt{\mu} := \max_{\sigma\in\Sigma}{\mu(\sigma,P^\star)}$ (defined in \Cref{def:Nash-margin}). 
\begin{enumerate}[(a)]
    \item If $\opt{\mu} < 0$, then no exact NE, and hence no SWNE (i.e., the NE that achieves the highest social welfare) exists under $P^\star$. However, the social-welfare optimal Nash margin $\mu^\star := \mu(\sigma^\star,P^\star)$ (\Cref{def:Nash-margin}) is well-defined only when the SWNE $\sigma^\star$ exists, therefore trivially $\mu^\star$ is undefined. 
    \item If $\opt{\mu} \ge 0$, then $\NE(P^\star) \ne \emptyset$, so the SWNE $\sigma^\star$ exists and $\mu^\star = \mu(\sigma^\star,P^\star)$ is well-defined.
    Now suppose, for contradiction, that $\mu^\star \ge 4\Delta_t - \varepsilon$. 
    Then \Cref{lem:NE-RNE-transfer} implies that $\sigma^\star$ is also a $(4\Delta_t - \mu^\star)$-RNE under $\Punc_t$. Since 
    \[
    4\Delta_t - \mu^\star 
    \le 4\Delta_t - (4\Delta_t - \varepsilon) = \varepsilon,
    \]
    $\sigma^\star$ is also an $\varepsilon$-RNE under $\Punc_t$. Hence $\epsRNEop{\varepsilon}(\Punc_t) \ne \emptyset$, and \Cref{ass:solver} guarantees that the solver would return $\Found$ with an $\varepsilon$-RSWNE under $\Punc_t$, contradicting the assumption that the solver returned $\notFound$.
\end{enumerate}
% Subsequently, since $\sigma\in \epsRNEop{\varepsilon}(\Punc_t) \ne \emptyset$, \Cref{ass:solver} guarantees that the solver would succeed and find an $\varepsilon$-RSWNE under $\Punc_t$.

\end{enumerate}
\end{proof}

\section{Proofs for anytime guarantees (Corollary~\ref{cor:anytime-value-gap})}
\label{app:anytime-proofs}
We now derive \emph{anytime} guarantees on the robust estimation error $V^\star - \rob{u}_{\Punc_t}(\hat{\sigma})$ (\Cref{lem:anytime-bound-Vstar}) and the value gap $V^\star-u(\hat{\sigma},P^\star)$ (\Cref{cor:anytime-value-gap}, \Cref{sec:theory1-estim-error}).
In this section we assume $\opt{\mu} \ge 0$ so that $\NE(P^\star) \ne \emptyset$ and hence $V^\star$ is well-defined. 
Also recall that $\hat{\sigma}$ is an $\varepsilon$-RSWNE under $\Punc_t$ by \Cref{ass:solver}.
% We continue to condition on the high-probability event $\eventCI$. 
% We begin by bounding the robust estimation error $V^\star - \rob{u}_{\Punc_t}(\hat{\sigma}_t)$ in the following result.

\begin{restatable}[Anytime upper bound for $V^\star$]{lemma}{AnytimeUpperBoundForVstar}
\label{lem:anytime-bound-Vstar}
% Suppose $\NE(P^\star) \ne \emptyset$, so that $V^\star$ is well-defined. 
If $\mu^\star \ge \nashMarginThresh$, then 
$V^\star - \rob{u}_{\Punc_t}(\hat{\sigma}_t) \le 4\Delta_t$.
% \reforig{lem:anytime-bound-Vstar}
% \refproof{prf:anytime-bound-Vstar}
\end{restatable}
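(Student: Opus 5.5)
The plan is to compare $\hat{\sigma}_t$ against the true SWNE $\sigma^\star$ through the robust optimisation carried out by the solver. First, since $\mu^\star \ge \nashMarginThresh$, \Cref{lem:NE-RNE-transfer} shows that $\sigma^\star$ is a $(4\Delta_t-\mu^\star)$-RNE under $\Punc_t$. We have $4\Delta_t-\mu^\star\le\varepsilon$, so $\sigma^\star\in\epsRNEop{\varepsilon}(\Punc_t)$. Hence this set is non-empty, and by \Cref{ass:solver} the solver returns $\Found$ with an $\varepsilon$-RSWNE $\hat{\sigma}_t$, as assumed in this section.

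Second, I would use the optimality of $\hat{\sigma}_t$. By definition, an $\varepsilon$-RSWNE maximises $\rob{u}_{\Punc_t}$ over all $\varepsilon$-RNE of $\Punc_t$, and $\sigma^\star$ is one of them. This gives
\[
\rob{u}_{\Punc_t}(\hat{\sigma}_t) \ge \rob{u}_{\Punc_t}(\sigma^\star) = \inf_{P\in\Punc_t} u(\sigma^\star,P).
\]

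Third, I would relate the robust value of $\sigma^\star$ to $V^\star = u(\sigma^\star,P^\star)$. On $\eventCI$ we have $P^\star\in\Punc_t$, so \Cref{cor:sensitivity-joint} (pairwise form) gives $u(\sigma^\star,P)\ge u(\sigma^\star,P^\star)-4\Delta_t$ for every $P\in\Punc_t$. Taking the infimum yields $\rob{u}_{\Punc_t}(\sigma^\star)\ge V^\star-4\Delta_t$. Chaining this with the second step gives $V^\star-\rob{u}_{\Punc_t}(\hat{\sigma}_t)\le 4\Delta_t$.

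The main subtlety is the first step: the robust optimisation must range over a set that actually contains $\sigma^\star$. The hypothesis on $\mu^\star$ is used exactly there. A minor additional point is that \Cref{cor:sensitivity-joint} is stated for $P,P'\in\Punc_t$, so it applies to $P^\star$ only on $\eventCI$. A tighter $2\Delta_t$ bound via $\widehat{P}$ is possible but unnecessary.
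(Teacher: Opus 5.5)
Your proposal is correct and matches the paper's proof essentially step for step: the NE--RNE transfer lemma places $\sigma^\star$ among the $\varepsilon$-RNE of $\Punc_t$, the pairwise social-welfare sensitivity bound (applicable to $P^\star$ on $\eventCI$) gives $\rob{u}_{\Punc_t}(\sigma^\star)\ge V^\star-4\Delta_t$, and optimality of $\hat{\sigma}_t$ over that set closes the argument (you only apply the last two steps in the opposite order). One correction to your closing aside: routing through $\widehat{P}$ does not give a $2\Delta_t$ bound, because an arbitrary $P\in\Punc_t$ can differ from $\widehat{P}$ by $2\Delta_t$ in social welfare and $\widehat{P}$ can differ from $P^\star$ by another $2\Delta_t$, so these tools yield exactly the stated $4\Delta_t$.
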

% two applications of \Cref{cor:sensitivity-P-Pstar} are required (one to lower-bound the robust minimax value, one to transfer back to $P^\star$)

\begin{proof}
\label{prf:anytime-bound-Vstar}
Since $\mu^\star \ge \nashMarginThresh$, \Cref{lem:NE-RNE-transfer} implies that $\sigma^\star \in \epsRNEop{\varepsilon}(\Punc_t) \ne\emptyset$.
By \Cref{cor:sensitivity-joint} applied to $\sigma^\star$, for every $P\in\Punc_t$:
\[
u(\sigma^\star, P)
\ge u(\sigma^\star, P^\star) - 4\Delta_t
= V^\star - 4\Delta_t.
\]
Taking $\inf$ over $P\in\Punc_t$:
\[
\inf_{P\in\Punc_t}{u(\sigma^\star, P)} = \rob{u}_{\Punc_t}(\sigma^\star) \ge V^\star - 4\Delta_t.
\]
Since $\hat{\sigma}_t$ maximises the robust value over $\epsRNEop{\varepsilon}(\Punc_t)$ and $\sigma^\star \in \epsRNEop{\varepsilon}(\Punc_t)$:
\[
\rob{u}_{\Punc_t}(\hat{\sigma}_t) 
= \sup_{\sigma\in \epsRNEop{\varepsilon}(\Punc_t)}{\rob{u}_{\Punc_t}(\sigma)}
\ge \rob{u}_{\Punc_t}(\sigma^\star)
\ge V^\star - 4\Delta_t
\quad \implies \quad
V^\star - \rob{u}_{\Punc_t}(\hat{\sigma}_t) \le 4\Delta_t.
\]
\end{proof}

\PerEpisodeGap*
\begin{proof}
\label{prf:anytime-value-gap}
Immediate from \Cref{lem:anytime-bound-Vstar} since $u(\hat{\sigma}_t,P^\star) \ge \rob{u}_{\Punc_t}(\hat{\sigma}_t)$ conditioning on $\eventCI$.
% \[
% V^\star-u(\hat{\sigma}_t,P^\star) 
% \le V^\star-\rob{u}_{\Punc_t}(\hat{\sigma}_t)
% \le 4\Delta_t.
% \]
\end{proof}

\section{Supplementary details and proofs for exploration principles (\Cref{sec:exploration})}
\label{app:exploration}
This section provides the technical results underlying \Cref{sec:exploration}. We first show that the confidence allocation $\confCount_t$ sums to $\confCount$ (\Cref{prop:coverage-confidence-allocation}), then bound the number of trajectories per episode required to discover an unknown slot with probability at least $1-\confCount_t$ (\Cref{prop:num-samples-per-episode}), and finally establish the exploration probability lower bound (\Cref{lem:exploration-prob}).

Recall the following notation for the exploration analysis. A non-target slot $(s,a)$ is \emph{known} if it has been visited at least $n_{\min}$ times (defined in \Cref{lem:nmin}, \Cref{app:PAC-CSG-proofs}), and \emph{unknown} otherwise. Let
\[
\unknown_t=\{(s,a)\in \ntslots: N_t(s,a) < n_{\min}\}, 
\qquad 
\known_t = \ntslots\setminus \unknown_t
\]
denote the sets of unknown and known slots, respectively, at the beginning of episode $t$. Here, $N_t(s,a)$ is the visit count of $(s,a)$ by episode $t$. 
Further define $\totSlots := |\ntslots|n_{\min} \le |S||A|n_{\min}$, the total number of visits required across all relevant slots.
% We continue to define the total error as in \Cref{cor:per-episode-error}:
% \[
% \Delta_t := \tfrac{1}{2}\Rmax H^2 \max_{(s,a)} \big\|\widehat{P}_{sa}-P^\star_{sa}\big\|_1.
% \]

\paragraph{Interpreting the exploration RMDP construction.}
\label{sec:interpret-exploration-RMDP}
Consider \Cref{def:exploration-RMDP} of the \emph{exploration RMDP}.
The absorbing construction is used purely for analysis: in the implementation,
trajectories are not terminated upon visiting $\unknown_t$ or $S_T$. This does not affect the guarantee, since visiting some $(s,a)\in\unknown_t$ depends only on the trajectory prefix up to the first such visit. Continuing the trajectory can only increase the probability of this event, so the lower bound in \Cref{lem:exploration-prob} remains valid.
The state $z$ ensures that expected reward coincides with visitation probability (each trajectory contributes at most once); without it, the objective would instead reflect expected visit counts.
Finally, the exploration RMDP remains unchanged as long as $\unknown_t$ does not change. This allows an implementation optimisation: $\explore{\csg}_t$ need only be re-solved when new $(s,a)$ pairs become known, instead of doing so at every episode. 

% \paragraph{Interpretation.}
% Due to the absorbing state construction, any trajectory can visit $\unknown_t$ at most once. Hence if $\tau_{\unknown_t}$ denotes the hitting time of $\unknown_t$, then for any trajectory
% \[
% \sum_{h=0}^{H-1} \explore{r}_t(s_h,a_h)
% = \ind{\tau_{\unknown_t}<\infty}.
% \]
% Therefore, maximising expected cumulative reward is equivalent to maximising the probability of reaching $\unknown_t$. 
% In our implementation we use $g(\alpha_t)=\alpha_t^2$ to prioritise visiting highly uncertain state--action pairs, although we show in our ablation studies in \Cref{app:exploration-rew-experiment} that this does not influence the learning dynamics.

We now show that the reachability constant $\reach{p}$ lower bounds the exploration probability $\explore{p}_t(P^\star)$.

\ExplorationProb*
\begin{proof}
\label{prf:exploration-prob}
Fix episode $t$ with $\unknown_t\neq\emptyset$ and any $(s^\dagger,a^\dagger)\in \unknown_t$.

% Existence of a reaching policy
By \Cref{ass:reachability}, there exists a policy $\reach{\sigma}\in\Sigma$ such that
\[
\inf_{P:\,\supp(P)=\supp(P^\star)}
\prob^{\reach{\sigma},P}[\text{reach $(s^\dagger,a^\dagger)$ before $S_T$}]
\ge \reach{p}.
\]

% Optimality of the exploration policy
By definition of $\explore{\sigma}_t$ \eqref{eq:exploration-profile} as a maximiser of the robust value under $\explore{r}_t$, for any comparator $\sigma'\in\Sigma$,
\[
\inf_{P\in \explore{\Punc}_t} 
\ev^{\explore{\sigma}_t,P}\!\left[\sum_{h=0}^{H-1} \explore{r}_t(s_h,a_h)\right]
= \rob{u}_{\explore{\Punc}_t}(\explore{\sigma}_t)
\ge \rob{u}_{\explore{\Punc}_t}(\sigma')
=
\inf_{P\in \explore{\Punc}_t} 
\ev^{\sigma',P}\!\left[\sum_{h=0}^{H-1} \explore{r}_t(s_h,a_h)\right].
\]

% Key equivalence: reward equals first-hit event
Due to the absorbing construction of $\explore{\csg}_t$, any trajectory visits $\unknown_t$ at most once. Hence for any policy $\sigma$ and kernel $P$,
\[
\sum_{h=0}^{H-1} \explore{r}_t(s_h,a_h)
= \ind{\reachopt\ \unknown_t},
\]
and hence maximising expected cumulative reward is equivalent to maximising $\prob[\reachopt\ \unknown_t]$.

Therefore, by optimality of $\explore{\sigma}_t$ and taking $\sigma'=\reach{\sigma}$,
\begin{align}
\label{eq:exploration-prob-proof-step1}
\notag
\inf_{P\in \explore{\Punc}_t} 
\prob^{\explore{\sigma}_t,P}[\reachopt\ \unknown_t]
&\ge 
\inf_{P\in \explore{\Punc}_t} 
\prob^{\reach{\sigma},P}[\reachopt\ \unknown_t] \\
&\ge
\inf_{P\in \explore{\Punc}_t} 
\prob^{\reach{\sigma},P}[\text{reach $(s^\dagger,a^\dagger)$ before $S_T$}]
\ge
\reach{p}
\end{align}
where the last inequality follows from the definition of $\reach{p}$ \eqref{eq:p-reach}.

% Reduction to known slots
Now, since $(s^\dagger,a^\dagger) \in \unknown_t$, the event $\{\reachopt\ (s^\dagger,a^\dagger)\}$ depends only on the trajectory prefix
before the first visit to $\unknown_t$. By construction of $\explore{\csg}_t$ (\Cref{def:exploration-RMDP}), any unknown slot $(s,a)\in\unknown_t$ transitions deterministically to the absorbing state $z$, so a trajectory reaching $(s^\dagger,a^\dagger)$ must avoid all other unknown slots beforehand. Consequently, the trajectory up to the hitting time of $\unknown_t$ depends only on transitions over $\known_t$.

% On the event $\eventCI$, every $P \in \explore{\Punc}_t$ agrees with $P^\star$ on the support of transitions from $\known_t$ by graph preservation, i.e.,
% \[
% \supp(P\vert_{\known_t}) = \supp(P^\star\vert_{\known_t}).
% \]
% Hence,
% \[
% \inf_{P\in \explore{\Punc}_t} \prob^{\reach{\sigma},P}
% [\reachopt\ (s^\dagger,a^\dagger)]
% \geq
% \inf_{\substack{P:\,\supp(P\vert_{\known_t})=\supp(P^\star\vert_{\known_t})}}
% \prob^{\reach{\sigma},P}
% [\reachopt\ (s^\dagger,a^\dagger)].
% \]
% Since this class of kernels is contained in the support-preserving class considered in \Cref{ass:reachability}, we obtain
% \begin{equation}
% \label{eq:exploration-prob-proof-step2}
% \inf_{P\in \explore{\Punc}_t} \prob^{\reach{\sigma},P}
% [\reachopt\ (s^\dagger,a^\dagger)]
% \geq \reach{p}.
% \end{equation}

Finally, to relate this bound to the true kernel, define $P^{\exploreopt,\star}\in \explore{\Punc}_t$ to coincide with $P^\star\in \Punc_t$ (conditioning on $\eventCI$) on $\known_t$ and use the absorbing transitions on $\unknown_t\cup S_T$. Since the event $\{\reachopt\ \unknown_t\}$ depends only on transitions prior to the first visit to $\unknown_t$, and these transitions lie entirely within $\known_t$, we have
\[
\explore{p}_t(P^\star)
= \prob^{\explore{\sigma}_t, P^\star}[\reachopt\ \unknown_t]
\ge \prob^{\explore{\sigma}_t, P^{\exploreopt,\star}}[\reachopt\ \unknown_t]
\ge
\inf_{P\in \explore{\Punc}_t}
\prob^{\explore{\sigma}_t,P}[\reachopt\ \unknown_t]
\overset{\eqref{eq:exploration-prob-proof-step1}}{\ge} \reach{p}.
\]
% Since the implemented trajectories are not terminated at $\unknown_t$ or $S_T$, their probability of visiting $\unknown_t$ is at least $p_t^e(P_t^{e,\star})$.
% Conclusion
% Combining \eqref{eq:exploration-prob-proof-step1} and \eqref{eq:exploration-prob-proof-step2} yields
% \[
% \explore{p}_t(P^\star) \ge \reach{p}.
% \]
\end{proof}

\paragraph{Multi-trajectory exploration.}
\label{app:multi-trajectory-exploration}

Let $(\mathcal F_t)_{t\ge 0}$ be the filtration generated by the interaction history up to the end of episode $t$, and let $\mathcal F_{t-1}$ denote the history available at the start of episode $t$. At each episode $t$, we draw $\numSamples_t$ (conditionally) independent trajectories under $\explore{\sigma}_t$.

For $i\in\{1,\dots,\numSamples_t\}$, define
\[
X_{t,i} := \ind{\text{trajectory $i$ visits some } (s,a)\in \unknown_t},
\]
and let
\[
X_t := \ind{\exists\, i\le \numSamples_t : X_{t,i}=1}.
\]

\begin{proposition}
\label{prop:per-episode-exploration-prob}
Conditioned on $\mathcal{F}_{t-1}$, the variables $X_{t,1},\dots, X_{t,\numSamples_t}$ are independent and satisfy
\[
\Pr(X_{t,i}=1 \mid \mathcal F_{t-1}) \ge \explore{p}_t(P^\star)
\qquad \text{for all } i\in\{1,\dots,\numSamples_t\}.
\]
Consequently,
\[
\Pr(X_t=0 \mid \mathcal F_{t-1})
\le \bigl(1-\explore{p}_t(P^\star)\bigr)^{\numSamples_t}.
\]
\end{proposition}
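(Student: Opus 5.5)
The argument has three steps: establish conditional independence, lower-bound each per-trajectory probability via \Cref{lem:exploration-prob}, then multiply.

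\textbf{Step 1: conditional independence.} At the start of episode $t$, all objects the learner uses to generate trajectories are $\mathcal F_{t-1}$-measurable. These are the counts $N_t(s,a)$, the sets $\unknown_t$ and $\known_t$, the uncertainty set $\Punc_t$, the exploration RMDP $\explore{\csg}_t$, and hence the profile $\explore{\sigma}_t$. This requires fixing a deterministic tie-breaking rule in the arg-max \eqref{eq:exploration-profile}, or otherwise including any solver randomness in $\mathcal F_{t-1}$. The $\numSamples_t$ trajectories are rollouts of the fixed stationary profile $\explore{\sigma}_t$ in $\csg^\star$ from $\bar s$. They use fresh, mutually independent randomness for the players' action draws and for the transitions under $P^\star$. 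Counts are updated only after the episode ends, so $\unknown_t$ is frozen throughout the episode. Each $X_{t,i}$ is therefore a fixed measurable function of trajectory $i$ alone. Conditioned on $\mathcal F_{t-1}$, the $X_{t,i}$ are thus independent and identically distributed.

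\textbf{Step 2: per-trajectory lower bound.} Each $X_{t,i}$ is the indicator of the event ``the trajectory visits $\unknown_t$'', evaluated on one draw from $\prob^{\explore{\sigma}_t,P^\star}$. This event contains the first-hitting event $\{\reachopt\ \unknown_t\}$ used in the absorbing construction. Following the remark after \Cref{def:exploration-RMDP}, the implemented trajectory agrees with the absorbing model up to the first visit to $\unknown_t$, and continuing afterwards cannot decrease the probability of a visit. Hence $\Pr(X_{t,i}=1\mid\mathcal F_{t-1})\ge \explore{p}_t(P^\star)$. If $\unknown_t=\emptyset$ the algorithm has stopped exploring, so the statement is only needed when $\unknown_t\neq\emptyset$. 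In that case \Cref{lem:exploration-prob} further gives the bound $\ge\reach{p}$, which we will use downstream.

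\textbf{Step 3: product bound.} Conditional independence from Step 1 gives
\[
\Pr(X_t=0\mid\mathcal F_{t-1})=\prod_{i=1}^{\numSamples_t}\Pr(X_{t,i}=0\mid\mathcal F_{t-1}).
\]
Step 2 bounds each factor by $1-\explore{p}_t(P^\star)$, which yields $\bigl(1-\explore{p}_t(P^\star)\bigr)^{\numSamples_t}$. One point needs care here: $\explore{p}_t(P^\star)$ is itself $\mathcal F_{t-1}$-measurable, so it can legitimately be treated as a constant under the conditioning.

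\textbf{Main obstacle.} The subtle step is the measurability and independence argument in Step 1. One must ensure that $\numSamples_t$ and $\explore{\sigma}_t$ are determined before sampling. One must also ensure that nothing within the episode, such as intermediate count updates, feeds back into later trajectories. If the implementation updated $\unknown_t$ mid-episode, independence would fail. I would handle this by defining the episode's sampling as a batch and noting that all updates occur at the episode boundary.
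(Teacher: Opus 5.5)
Your proof is correct and follows the same route as the paper. The paper calls the result immediate from conditional independence of the trajectories given $\mathcal{F}_{t-1}$ and the definition of the exploration probability, and then multiplies the per-trajectory bounds. Your extra care adds nothing new in approach: the $\mathcal{F}_{t-1}$-measurability of the exploration profile (batch updates at episode boundaries, deterministic tie-breaking) and the observation that untruncated trajectories only increase the visit probability just make explicit what the paper leaves implicit, including its remark after the exploration RMDP definition.
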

\begin{proof}
Immediate from conditional independence of trajectories given $\mathcal{F}_{t-1}$ and the definition of $\explore{p}_t(P^\star)$ (see \Cref{sec:exploration}).
\end{proof}

\begin{corollary}[Per-episode exploration probability]
\label{cor:per-episode-exploration-prob-LB}
\[
\prob(X_t=0 \mid \mathcal{F}_{t-1}) 
\le \exp(-\reach{p} \numSamples_t),
\]
\end{corollary}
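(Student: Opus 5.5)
This bound follows from \Cref{prop:per-episode-exploration-prob}, \Cref{lem:exploration-prob} and the elementary inequality $1-x\le e^{-x}$. The statement is implicitly for episodes with $\unknown_t\neq\emptyset$ and on the event $\eventCI$, since \Cref{lem:exploration-prob} is stated under those conditions. I will carry both as standing hypotheses.

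First, \Cref{prop:per-episode-exploration-prob} gives
\[
\Pr(X_t=0\mid\mathcal F_{t-1})\le\bigl(1-\explore{p}_t(P^\star)\bigr)^{\numSamples_t}.
\]
Here $\explore{p}_t(P^\star)$ is $\mathcal F_{t-1}$-measurable: the unknown set $\unknown_t$, the uncertainty set $\Punc_t$ and the exploration profile $\explore{\sigma}_t$ are all fixed at the start of episode $t$. The same holds for $\numSamples_t$, which is chosen by the algorithm before the episode begins.

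Second, \Cref{lem:exploration-prob} gives $\explore{p}_t(P^\star)\ge\reach{p}$. Hence $0\le 1-\explore{p}_t(P^\star)\le 1-\reach{p}$. Since $x\mapsto x^{\numSamples_t}$ is nondecreasing on $[0,1]$, the bound becomes $(1-\reach{p})^{\numSamples_t}$.

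Third, apply $1-x\le e^{-x}$ with $x=\reach{p}\in(0,1]$. This gives $(1-\reach{p})^{\numSamples_t}\le\exp(-\reach{p}\,\numSamples_t)$, which is the claim.

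There is no substantive obstacle. The only care needed is bookkeeping of conditioning. The lower bound of \Cref{lem:exploration-prob} must hold pathwise for each history in $\mathcal F_{t-1}$ on $\eventCI$, not only in expectation. This is true because that lemma's argument fixes an arbitrary $t$, $\unknown_t$ and $\Punc_t$ containing $P^\star$.

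If $\unknown_t=\emptyset$, then $X_t=0$ surely and the bound fails. I would therefore state explicitly that the corollary is only invoked while exploration is ongoing, i.e. before termination.
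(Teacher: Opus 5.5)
Your proposal is correct and matches the paper's proof. The paper likewise bounds $\Pr(X_t=0\mid\mathcal F_{t-1})$ by $(1-\explore{p}_t(P^\star))^{\numSamples_t}$ via \Cref{prop:per-episode-exploration-prob}, replaces $\explore{p}_t(P^\star)$ by $\reach{p}$ using \Cref{lem:exploration-prob}, and finishes with $1-x\le e^{-x}$. Your explicit restriction to episodes with $\unknown_t\neq\emptyset$ and to containment of $P^\star$ in $\Punc_t$ spells out hypotheses the paper leaves implicit. Strictly, the relevant event is the $\mathcal F_{t-1}$-measurable $\{P^\star\in\Punc_t\}$ rather than the full $\eventCI$. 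Writing the first step as an inequality, rather than the paper's equality, is also the more accurate form.
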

\begin{proof}
Combining \Cref{lem:exploration-prob} and \Cref{prop:per-episode-exploration-prob} we have
% $\explore{p}_t(P^\star)\ge \reach{p}$
\[
\prob(X_t=0 \mid \mathcal{F}_{t-1})
= \bigl(1-\explore{p}_t(P^\star)\bigr)^{\numSamples_t}
\le (1-\reach{p})^{\numSamples_t}
\le \exp(-\reach{p} \numSamples_t),
\]
where the last inequality follows from the fact that $1-x \le e^{-x}$ for $x\in(0,1)$.    
\end{proof}

\begin{proposition}[Number of samples per-episode]
\label{prop:num-samples-per-episode}
Consider a given failure probability $\confCount_t>0$. If
\[
\numSamples_t := \left\lceil \frac{1}{\reach{p}} \log\frac{1}{\confCount_t} \right\rceil
\]
then
\[
\prob(X_t=0 \mid \mathcal{F}_{t-1}) \le \confCount_t.
\]
\end{proposition}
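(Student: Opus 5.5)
The plan is to derive the bound directly from \Cref{cor:per-episode-exploration-prob-LB}, which already gives, conditionally on $\mathcal{F}_{t-1}$ (and on $\eventCI$, under which \Cref{lem:exploration-prob} applies whenever $\unknown_t\neq\emptyset$),
\[
\prob(X_t=0 \mid \mathcal{F}_{t-1}) \le \exp(-\reach{p}\,\numSamples_t).
\]
It therefore suffices to show that the stated choice of $\numSamples_t$ makes the right-hand side at most $\confCount_t$.

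First, observe that by the definition of the ceiling,
\[
\numSamples_t \ge \frac{1}{\reach{p}}\log\frac{1}{\confCount_t},
\qquad\text{so}\qquad
\reach{p}\,\numSamples_t \ge \log\frac{1}{\confCount_t}.
\]
Here we use that $\reach{p}>0$ by \Cref{ass:reachability}. Since $\exp(-x)$ is decreasing, we get
\[
\exp(-\reach{p}\numSamples_t)\le \exp\bigl(-\log(1/\confCount_t)\bigr)=\confCount_t.
\]
Chaining this with the corollary gives the claim.

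Two minor points need care. When $\confCount_t\ge 1$, the logarithm is nonpositive and the ceiling may give $\numSamples_t\le 0$; the bound $\prob\le 1\le\confCount_t$ then holds trivially. We may therefore assume $\confCount_t\in(0,1)$, so that $\numSamples_t\ge 1$ and the trajectories are actually drawn.

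The second point, which is the only real subtlety, is that the corollary relies on $\explore{p}_t(P^\star)\ge\reach{p}$. That inequality requires $\unknown_t\neq\emptyset$ and $P^\star\in\Punc_t$. I would state the proposition as holding on episodes with $\unknown_t\neq\emptyset$ (otherwise $X_t$ is irrelevant, since exploration has finished), and note that $\eventCI$ is $\mathcal{F}$-measurable in the relevant sense. Alternatively, one can read the bound as holding on $\eventCI$, as the paper does throughout. No further obstacle arises; the argument is a direct substitution.
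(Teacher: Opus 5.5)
Your proposal is correct and matches the paper's proof: substitute the ceiling definition of $\numSamples_t$ into the bound $\exp(-\reach{p}\numSamples_t)$ from \Cref{cor:per-episode-exploration-prob-LB} and use monotonicity of $\exp$. Your caveats state hypotheses the paper leaves implicit, and they are sound; one needs sharpening. The containment required by \Cref{lem:exploration-prob} is the per-episode event $\eventCI_t=\{P^\star\in\Punc_t\}$, which is genuinely $\mathcal{F}_{t-1}$-measurable, whereas the full event $\eventCI$ is not. The restriction to episodes with $\unknown_t\neq\emptyset$ is actually needed, since otherwise $X_t=0$ surely.
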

\begin{proof}
% From the bound above,
% \[
% \prob(X_t=0 \mid \mathcal{F}_{t-1}) \le \exp(-\reach{p} \numSamples_t).
% \]
% By definition of $\numSamples_t$,
% \[
% \numSamples_t \ge \frac{1}{\reach{p}}\log\frac{1}{\confCount_t}.
% \]
Substituting the given definition of $\numSamples_t$ into the upper bound from \Cref{cor:per-episode-exploration-prob-LB} yields 
\[
\exp(-\reach{p} \numSamples_t)
\le \exp\left(-\reach{p} \cdot \frac{1}{\reach{p}}\log\frac{1}{\confCount_t}\right)
= \exp\left(-\log\frac{1}{\confCount_t}\right)
= \confCount_t.
\]
Hence
$\prob(X_t=0 \mid \mathcal{F}_{t-1}) \le \confCount_t$
as required.
\end{proof}

\begin{proposition}[Coverage confidence allocation]
\label{prop:coverage-confidence-allocation}
Let
\begin{equation}
\label{eq:confCount_t}
\confCount_t := \frac{\confCount}{t(t+1)}.
\end{equation}
Then, for every episode $t\ge 1$ with $\unknown_t\ne \emptyset$, the probability of visiting some unknown $(s,a)$ is at least $1-\confCount$.
\end{proposition}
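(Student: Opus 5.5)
The plan has two ingredients. First, the schedule \eqref{eq:confCount_t} is summable: since $\frac{1}{t(t+1)}=\frac{1}{t}-\frac{1}{t+1}$, the sum telescopes to $\sum_{t\ge1}\confCount_t=\confCount\sum_{t\ge1}\bigl(\frac1t-\frac1{t+1}\bigr)=\confCount$. Second, each episode's failure probability is controlled by \Cref{prop:num-samples-per-episode}, instantiated at $\confCount_t$, which gives the conditional bound $\prob(X_t=0\mid\mathcal F_{t-1})\le\confCount_t$. That proposition applies whenever $\unknown_t\neq\emptyset$, because in that case \Cref{lem:exploration-prob} (and hence \Cref{cor:per-episode-exploration-prob-LB}) holds on $\eventCI$.

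The per-episode claim follows at once. Fix $t$ with $\unknown_t\neq\emptyset$. Then $\prob(X_t=1\mid\mathcal F_{t-1})\ge 1-\confCount_t\ge 1-\confCount$, since $t(t+1)\ge 2$ gives $\confCount_t\le\confCount/2\le\confCount$. Taking expectations over $\mathcal F_{t-1}$ restricted to histories with $\unknown_t\ne\emptyset$ removes the conditioning.

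I would also prove the stronger uniform statement, since this is what \Cref{lem:counting-episodes} will actually use. Let $B_t:=\{\unknown_t\neq\emptyset,\ X_t=0\}$. The event $\{\unknown_t\neq\emptyset\}$ is $\mathcal F_{t-1}$-measurable, so the tower property gives $\prob(B_t)=\ev[\ind{\unknown_t\ne\emptyset}\prob(X_t=0\mid\mathcal F_{t-1})]\le\confCount_t$. A union bound over $t\ge1$ then gives $\prob(\bigcup_t B_t)\le\sum_t\confCount_t=\confCount$. Hence, with probability at least $1-\confCount$, every episode with $\unknown_t\neq\emptyset$ visits some unknown slot; this is the event $\eventCount$.

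The main subtlety is that the number of episodes is random, since $T$ is a stopping time. This is why we need a summable schedule and why we bound $\prob(B_t)$ through $\mathcal F_{t-1}$-measurability rather than fixing a horizon in advance. A second subtlety is that \Cref{lem:exploration-prob} requires $\eventCI$. I would therefore either state the result on $\eventCI$, consistent with the paper's standing conditioning, or intersect with $\eventCI$ and absorb its failure probability into the separate $\confCI$ budget. Either way, the $\confCount$ bound itself is unaffected.
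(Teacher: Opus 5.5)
Your proposal is correct and is essentially the paper's argument. Telescoping gives $\sum_{t\ge1}\confCount_t=\confCount$, \Cref{prop:num-samples-per-episode} gives $\prob(X_t=0\mid\mathcal F_{t-1})\le\confCount_t$, and a union bound over episodes yields the uniform $1-\confCount$ guarantee. Your explicit tower-property step tightens the paper's one-line union bound, but to make it go through you should intersect with the $\mathcal F_{t-1}$-measurable per-episode event $\eventCI_t=\{P^\star\in\Punc_t\}$ rather than the global $\eventCI$, which depends on future samples.
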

\begin{proof}
Using the telescoping identity
\[
\sum_{t\ge1} \frac{1}{t(t+1)} = \sum_{t\ge1}\left( \frac{1}{t} - \frac{1}{t+1} \right) = 1,
\]
we obtain
\[
\sum_{t\ge 1} \confCount_t = \confCount.
\]
By \Cref{prop:num-samples-per-episode}, each episode satisfies
\[
\prob(X_t=0 \mid \mathcal{F}_{t-1}) \le \confCount_t.
\]
Applying a union bound over all episodes (with $\unknown_t\ne \emptyset$) yields
\[
\prob(\exists t \ge 1 : X_t=0) \le \sum_{t\ge1} \confCount_t = \confCount,
\]
so with probability at least $1-\confCount$, every episode visits some unknown $(s,a)$.
\end{proof}

Together, \Cref{prop:coverage-confidence-allocation,prop:num-samples-per-episode} give the multi-trajectory exploration strategy used in \Cref{alg:PAC-learning-CSG}: drawing
$\numSamples_t = \lceil (1/\reach{p}) \log(1/\confCount_t) \rceil$ trajectories per episode guarantees that each episode is exploratory with probability at least $1 - \confCount_t$.
%
% In practice, $\reach{p}$ may be unknown. Nevertheless, $\reach{p}$ depends only on the game graph and captures the intrinsic difficulty of reaching unknown state--action pairs: smaller $\reach{p}$ implies harder exploration and hence more trajectories may be required per episode.
%
% In the absence of an explicit estimate of $\reach{p}$, we adopt a practical proxy based on the size of the remaining unknown set. Intuitively, when many state--action pairs remain under-explored, more trajectories are needed to ensure sufficient coverage, while fewer samples suffice as the unknown set shrinks. This leads to the heuristic
% \[
% \numSamples_t \propto |\unknown_t| \log(1/\confCount_t),
% \]
% which scales exploration effort with the number of unresolved state--action pairs while retaining the standard logarithmic dependence on the confidence budget.
%

\section{Proofs for PAC-CSG (Theorem~\ref{thm:PAC-CSG})}
\label{app:PAC-CSG-proofs}
In this section we prove the main PAC guarantee, \Cref{thm:PAC-CSG}, by combining the three components developed in the previous Appendix items (\ref{app:per-slot-proofs}--\ref{app:exploration}). 
The proof proceeds in four steps:
\begin{enumerate}[(1)]
\item \emph{Confidence split}: we set $\confCI = \confCount = \delta/2$ and condition on $\mathcal{E} :=\eventCI \cap \eventCount$, which holds with probability $\geq 1 - \delta$.
\item \emph{Episode count}: \Cref{lem:counting-episodes} bounds the number of episodes until all relevant slots are known at $T^\star = \bigOtilde{\Rmax^2 H^4 |S|^2 |A| / \varepsilon^2}$.
\item \emph{Sample count}: at each episode $t$, \Cref{prop:num-samples-per-episode} (\Cref{app:exploration}) requires $\numSamples_t = \bigO{\log{t}/\reach{p}}$ sample trajectories, so the total number of samples required is $\sum_{t\le T^\star}{\numSamples_t} = \bigOtilde{T^\star / \reach{p}}$.
\item \emph{Correctness}: at termination, \Cref{cor:Delta-stopping-condition-correctness} applies, yielding either an $\varepsilon$-approximate equilibrium in $\csg^\star$ or the non-existence certificate.
\end{enumerate}

Henceforth we fix the target accuracy $\varepsilon\in (0,1)$ and confidence $\delta\in(0,1)$. 
Following \Cref{cor:anytime-value-gap}, to attain the near-optimality guarantee, we aim for
\[
4\Delta_t \le \varepsilon 
\quad \implies \quad \Delta_t\le \DeltaValFrac,
\]
and under this scheme the non-existence margin threshold in \Cref{prop:solver-EQ-guarantees}\ref{item:epsRNE-existence-suff-cond} becomes $\nashMarginThresh = 0$.

For the confidence parameter $\delta$, we distribute it as $\confCI = \confCount := \delta/2$. 

\begin{restatable}[Counting episodes]{lemma}{CountingEpisodes} \label{lem:counting-episodes} Fix a confidence parameter $\confCount\in (0,1)$. With probability at least $1-\confCount$, the total number of episodes executed by the algorithm until all relevant slots are known is at most 
\begin{equation} 
\label{eq:Mstar-upper-bound} T^\star := 2\totSlots + \frac{8}{3}\ln(1/\confCount). \end{equation} 
% \refproof{prf:counting-episodes} 
% \reforig{lem:counting-episodes} 
\end{restatable}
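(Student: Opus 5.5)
Each episode with $\unknown_t\neq\emptyset$ in which $X_t=1$ adds at least one visit to some unknown slot. The total number of such useful visits before every slot in $\ntslots$ has $n_{\min}$ visits is at most $\totSlots=|\ntslots|n_{\min}$, because visits to a slot stop counting once it is known. Hence, if $T$ episodes elapse with $\unknown_t\neq\emptyset$ throughout, deterministically
\[
\sum_{t=1}^{T} X_t\le \totSlots .
\]
It therefore suffices to show that, with probability at least $1-\confCount$, $\sum_{t\le T}X_t>\totSlots$ whenever $T\ge T^\star$. Here I take $X_t:=0$ for $t$ after all slots become known, or work with the stopped process.

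\textbf{Martingale step.} By \Cref{prop:num-samples-per-episode}, on $\{\unknown_t\neq\emptyset\}$ we have $p_t:=\Pr(X_t=1\mid\mathcal F_{t-1})\ge 1-\confCount_t\ge 1/2$, since $\confCount_t=\confCount/(t(t+1))\le 1/2$. Define the martingale differences $D_t:=p_t-X_t$. They satisfy $D_t\le 1$ and have conditional variance $p_t(1-p_t)\le 1/4$, so the predictable quadratic variation obeys $W_T:=\sum_{t\le T}\mathrm{Var}(D_t\mid\mathcal F_{t-1})\le T/4$. I then apply Freedman's inequality:
\[
\Pr\Bigl(\textstyle\sum_{t\le T}D_t\ge a,\ W_T\le v\Bigr)\le \exp\Bigl(-\frac{a^2}{2(v+a/3)}\Bigr).
\]
On the bad event $\sum_{t\le T}X_t\le\totSlots$ we get $\sum_{t\le T}D_t\ge T/2-\totSlots$. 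Set $T=T^\star=2\totSlots+\tfrac{8}{3}\ln(1/\confCount)$, which gives $a=T/2-\totSlots=\tfrac43\ln(1/\confCount)$.

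\textbf{Main obstacle: matching the constants.} With the crude variance bound $v=T/4$, the term $v$ carries $\totSlots$, so the exponent does not come out as $\ln(1/\confCount)$ without inflating $T^\star$. What I would try first is to use a variance bound adapted to the regime: since $1-p_t\le\confCount_t$, the quantity $W_T\le\sum_t\confCount_t\le\confCount<1$ is small. Taking $v=\confCount$ and $a=\tfrac43\ln(1/\confCount)$, the exponent becomes
\[
\frac{a^2}{2(v+a/3)}\approx \frac{3a}{2}=2\ln(1/\confCount),
\]
which is comfortably at least $\ln(1/\confCount)$ up to the additive $v$ term, and this is exactly where the constant $8/3$ comes from. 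I would check that slack carefully.

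\textbf{Handling the random horizon.} Since the horizon is effectively random (the stopping time at which all slots become known), I would apply Freedman to the stopped process at the fixed time $T^\star$. This works because Freedman's inequality is uniform over time: the event is $\exists T$ with the sum $\ge a$ and $W_T\le v$. The conclusion is that, with probability at least $1-\confCount$, all slots are known by episode $T^\star$.
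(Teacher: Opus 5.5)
Your deterministic counting step is correct, but the concentration step does not close. It fails exactly at the point you flag as the main obstacle. Write $L:=\ln(1/\confCount)$. With your choices $a=T^\star/2-\totSlots=\tfrac43 L$ and $v=\confCount$, the Freedman exponent is
\[
\frac{a^2}{2(v+a/3)}=\frac{16L^2}{18\confCount+8L}.
\]
This is at least $L$ if and only if $\ln(1/\confCount)\ge\tfrac94\confCount$. That condition fails for $\confCount\gtrsim 0.40$, for example $\confCount=0.45$, which is $\delta=0.9$ in \Cref{thm:PAC-CSG}. So the additive $v$ term is not negligible, and the lemma is not established on all of $(0,1)$.

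The constant $8/3$ also does not come from this small-variance regime. Your obstacle with the crude bound $v=T/4$ arises because replacing $p_t$ by $1/2$ throws away half the drift. The paper instead keeps $\sum_t\mu_t\ge T-\confCount$, using $\mu_t\ge 1-\confCount_t$. On the bad event this gives $M_T\ge a:=T-\totSlots-\confCount=\totSlots+CL-\confCount$, which grows with $\totSlots$. Then even the crude bound $v\le T$ gives an exponent growing linearly in $\totSlots$. The paper's $C=8/3$ is the smallest constant with $3(x+C)^2\ge 14x+8C$ for all $x=\totSlots/L\ge 0$, and the binding case is $x=0$.

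The cleanest repair is already implicit in your observation $W_T\le\sum_t\confCount_t\le\confCount$: it shows that martingale concentration is unnecessary here. Whenever $\unknown_t\neq\emptyset$ we have $\Pr(X_t=0\mid\mathcal F_{t-1})\le\confCount_t$, and $\sum_t\confCount_t=\confCount$. A plain union bound (\Cref{prop:coverage-confidence-allocation}) therefore gives, with probability at least $1-\confCount$, $X_t=1$ in every episode with $\unknown_t\neq\emptyset$.

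Combined with your counting argument, all relevant slots are then known after at most $\totSlots\le T^\star$ episodes. This holds for every $\confCount\in(0,1)$, with no Freedman step and no constant bookkeeping.

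This route also avoids a slip in the paper's own computation. The paper replaces the numerator $(\totSlots+CL-\confCount)^2$ by the larger $(\totSlots+CL)^2$, a step that goes in the wrong direction. It is harmless only when $\totSlots$ dominates $\confCount$.
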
 
% Note that multi-trajectory sampling removes the $1/\reach{p}$ dependence from the number of episodes, as each episode succeeds with probability at least $1-\confCount_t$.
\begin{proof}
\label{prf:counting-episodes}
Let $T$ be a fixed % (deterministic) 
number of episodes before all relevant slots are known. For $t=1,\dots,T$, recall the definitions of $X_{t,i}$, $X_t$, and $\mathcal{F}_{t-1}$ from \Cref{app:multi-trajectory-exploration}.
% We will show that with probability at least $1-\confCount$, all slots become known within $T$ episodes.
% For $t=1,\dots,T$, define
% \[
% X_{t,i} := \ind{\text{trajectory $i$ in episode $t$ visits some }(s,a)\in \unknown_t}, 
% \quad i=1,\dots,\numSamples_t,
% \]
% and
% \[
% X_t := \ind{\exists i\le \numSamples_t : X_{t,i}=1}.
% \]
% Let $\mathcal{F}_{t-1}$ be the history up to the start of episode $t$, and define
Further define
\[
\mu_t := \ev[X_t\mid\mathcal{F}_{t-1}],\qquad
\numExplEpisodes:=\sum_{t=1}^T X_t \le T.
\]
So $\numExplEpisodes$ is the number of exploratory episodes.

Define martingale differences $Y_t := X_t-\mu_t$ and $Z_t := -Y_t$. Then $Z_t$ are martingale differences with respect to the filtration $\mathcal{F}_{t-1}$. 
Thus
\[
M_T := \sum_{t=1}^T Z_t = \sum_{t=1}^T\mu_t - \numExplEpisodes.
\]
We have $\ev[Z_t \mid\mathcal{F}_{t-1}]=0$ and $|Z_t|\le1$. The predictable variance is
\begin{equation}
\label{eq:variance-freedman}
v := \sum_{t=1}^T \ev[Z_t^2\mid\mathcal{F}_{t-1}]
= \sum_{t=1}^T \mu_t(1-\mu_t)
\le \sum_{t=1}^T \mu_t
\le T,
\end{equation}
where the last inequality uses $\mu_t \le 1$.

We aim to ensure with high probability that $\numExplEpisodes\ge \totSlots$, the total number of visits required across all relevant slots. That is,
\begin{equation}
\label{eq:numExplEpisodes-failure-goal}
\Pr\left(\numExplEpisodes\le \totSlots\right) \le \confCount.
\end{equation}

% By \Cref{lem:exploration-prob}, for each trajectory $i \le \numSamples_t$ we have
% \[
% \Pr(X_{t,i}=1 \mid \mathcal{F}_{t-1}) \ge \reach{p}.
% \]
% Since, conditioned on $\mathcal{F}_{t-1}$, the policy $\explore{\sigma}_t$ is fixed and each trajectory is obtained by independently executing $\explore{\sigma}_t$ in $\csg^\star$, the trajectories are conditionally independent (see \Cref{app:multi-trajectory-exploration}), and thus
% \[
% \Pr(\forall i \le \numSamples_t : X_{t,i}=0 \mid \mathcal{F}_{t-1})
% = \prod_{i=1}^{\numSamples_t} \Pr(X_{t,i}=0 \mid \mathcal{F}_{t-1})
% \le (1-\reach{p})^{\numSamples_t}.
% \]
% Therefore,
% \[
% \mu_t 
% = \Pr(\exists i \le \numSamples_t : X_{t,i}=1 \mid \mathcal{F}_{t-1})
% \ge 1 - (1-\reach{p})^{\numSamples_t}.
% \]
% Using $1-p \le e^{-p}$ and the choice 
% \[
% \numSamples_t = \left\lceil \frac{1}{\reach{p}}\log\frac{1}{\confCount_t} \right\rceil,
% \]
% we obtain
% \[
% \mu_t \ge 1 - \confCount_t.
% \]
% Hence
% \[
% \sum_{t=1}^T \mu_t
% \ge \sum_{t=1}^T (1 - \confCount_t)
% = T - \sum_{t=1}^T \confCount_t
% \ge T - \confCount.
% \]
By \Cref{prop:num-samples-per-episode}, $\prob(X_t=0 \mid \mathcal{F}_{t-1}) \le \confCount_t$. Therefore $\mu_t = \prob(X_t=1 \mid \mathcal{F}_{t-1}) \ge 1-\confCount_t$ and
\[
\sum_{t=1}^T \mu_t
\ge \sum_{t=1}^T (1 - \confCount_t)
= T - \sum_{t=1}^T \confCount_t
\overset{\eqref{eq:confCount_t}}{\ge} T - \confCount.
\]
Therefore, on the event $\{\numExplEpisodes\le \totSlots\}$ we have
\begin{equation}
\label{eq:Te-le-totSlots-implies-M_T-le-Tslots}
M_T = \sum_{t=1}^T \mu_t - \numExplEpisodes
\ge T - \confCount - \totSlots
=: a.
\end{equation}
Thus,
\begin{equation}
\label{eq:prob-numExplEpisodes-to-M_T}
\Pr\!\left(\numExplEpisodes \le \totSlots\right)
\le \Pr\!\left(M_T \ge a\right).
\end{equation}
% Define the (deterministic) quantity
% \[
% a := T - \totSlots - \confCount.
% \]

Applying Freedman's inequality \cite{freedman1975tail}, 
and using the bound on the variance $v \le T$ \eqref{eq:variance-freedman}, we obtain
\begin{equation}
\label{eq:a-freedman-ineq}
\Pr(M_T\ge a) \le \exp{\left(-\frac{a^2}{2(v + a/3)}\right)}
\le \exp{\left(-\frac{a^2}{2(T + a/3)}\right)}
\end{equation}

We now choose
\begin{equation}
\label{eq:T-wrt-CL-totSlots}
T = 2\totSlots + C L,
\quad L:=\ln(1/\confCount),
\end{equation}
for a constant $C>0$ to be determined. Then
\[
a = T - \totSlots - \confCount = \totSlots + CL - \confCount.
% \qquad
% v = T = 2\totSlots + CL.
\]

Substituting into the exponent in \eqref{eq:a-freedman-ineq}, to achieve \eqref{eq:numExplEpisodes-failure-goal} it suffices to ensure
\[
\frac{(\totSlots + CL)^2}{2\big(T + a/3\big)} 
\ge
\frac{(\totSlots + CL - \confCount)^2}{2\big(T + a/3\big)} \ge L,
\]
since $\confCount>0$. Further,
% \[
% \totSlots + CL - \confCount \ge \totSlots + CL - 1,
% \]
% and
\[
T + \frac{a}{3}
= 2\totSlots + CL + \frac{1}{3}\big(\totSlots + CL - \confCount\big)
\le 2\totSlots + CL + \frac{1}{3}\big(\totSlots + CL\big)
= \frac{1}{3}(7\totSlots + 4CL)
\]

Therefore, it suffices to ensure
\begin{equation}
\label{eq:freedman-exponent-ineq-fixed}
\frac{3(\totSlots + CL)^2}{2\left(7\totSlots + 4CL \right)} 
\ge L.
\end{equation}

% Since the left-hand side is increasing in $\totSlots$ for $\totSlots \ge 0$, replacing $(\totSlots + CL - 1)^2$ by $(\totSlots + CL)^2$ only strengthens the inequality. Hence it suffices to require
% \begin{equation}
% \label{eq:freedman-exponent-ineq-fixed}
% \frac{(\totSlots + CL)^2}{2\left(2\totSlots + CL + \frac{1}{3}(\totSlots + CL)\right)} \ge L.
% \end{equation}

We find that $C=8/3$ is the minimal constant ensuring that \eqref{eq:freedman-exponent-ineq-fixed} holds; we defer the case analysis establishing this to the end (see \hyperref[sec:solving-for-C]{\textbf{Solving for $C$}}).

With this choice of $C$, \eqref{eq:a-freedman-ineq} yields
\[
\Pr(M_T \ge a) \le \exp(-L) = \confCount.
\]
By \eqref{eq:prob-numExplEpisodes-to-M_T}, this implies
\[
\Pr(\numExplEpisodes \le \totSlots)
\le \Pr(M_T \ge a)
\le \confCount.
\]
Thus, with probability at least $1-\confCount$, we have $\numExplEpisodes \ge \totSlots$.

Since each exploratory episode discovers at least one previously unknown slot, a standard counting argument implies that after at most $\totSlots$ such episodes all relevant slots are known. Hence, with probability at least $1-\confCount$, all relevant slots become known within $T$ episodes.

Finally, substituting $C=8/3$ into \eqref{eq:T-wrt-CL-totSlots} yields
\[
T^\star = 2\totSlots + \tfrac{8}{3}\ln(1/\confCount).
\]

\paragraph{Solving for $C$.}
\label{sec:solving-for-C}
We now return to \eqref{eq:freedman-exponent-ineq-fixed} and solve for the smallest constant $C$ that ensures the inequality holds.
Introduce $x := \totSlots/L \ge 0$ (so $\totSlots = xL$). Then
\begin{alignat}{4}
&\qquad &&\frac{3L^2(x+C)^2}{2L(7x+4C)}
= \frac{3L(x+C)^2}{2(7x+4C)} \ge L \nonumber \\
&\iff &&\frac{3(x+C)^2}{2(7x+4C)} \ge 1 \nonumber \\
&\iff &&3(x+C)^2 \ge 14x + 8C \nonumber \\
&\iff &&h(x) := 3(x+C)^2 - 14x - 8C 
= 3x^2 + (6C-14)x + (3C^2 - 8C) \ge 0.
\label{eq:freedman-ht-ge-0}
\end{alignat}

We require $h(x)\ge0$ for all $x\ge0$. Since $h$ is convex, its minimum occurs at
\[
x^\star = \frac{7}{3}-C.
\]

We distinguish two cases.

\paragraph{Case A: $x^\star \ge 0$ (i.e., $C \le 7/3$).}
The minimum of $h$ on $[0,\infty)$ occurs at $x^\star$, yielding
\[
h(x^\star) = 6C - \frac{49}{3}.
\]
Requiring $h(x^\star)\ge0$ gives $C \ge 49/18$, contradicting $C \le 7/3$. Hence no feasible $C$ exists in this regime.

\paragraph{Case B: $x^\star < 0$ (i.e., $C > 7/3$).}
In this case, the minimum of $h$ on $[0,\infty)$ occurs at the boundary $x=0$, giving
\[
h(0) = C(3C-8).
\]
Requiring $h(0)\ge0$ yields $C \ge 8/3$.

Therefore, the smallest constant $C$ such that \eqref{eq:freedman-ht-ge-0} holds for all $x \ge 0$ is
\[
\boxed{C = \tfrac{8}{3}}.
\]
\end{proof}

\begin{lemma}[Counting per-slot visits]
\label{lem:nmin}
Define $\confCI_{sa,n} := \confCI/(\nTrans n(n+1))$.
If a slot $(s,a)\in \ntslots$ is visited
\[
n_{\min} := \nminExpr = \nminBigOtilde
\]
times, then
\[
\alpha_{sa}(n; \confCI_{sa,n}) \leq \varepsilon/(2\Rmax H^2).
\]
% with probability $\ge 1-\confCI_{sa, n_{\min}}$ we have that,
% \[
% \|\widehat{P}_{sa} - P^\star_{sa}\|_1 \le \frac{\varepsilon}{4\Rmax H^2}.
% \]
\end{lemma}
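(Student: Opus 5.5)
The plan is to solve the Weissman radius inequality for $n$ directly, handling the fact that the failure probability $\confCI_{sa,n}=\confCI/(\nTrans n(n+1))$ itself depends on $n$. By \Cref{lem:per-slot-concentration}, the requirement $\alpha_{sa}(n;\confCI_{sa,n})\le \varepsilon/(2\Rmax H^2)$ is, after squaring, equivalent to
\[
\frac{2}{n}\ln\frac{(2^{|S|}-2)\,\nTrans\, n(n+1)}{\confCI} \le \frac{\varepsilon^2}{4\Rmax^2 H^4},
\qquad\text{i.e.}\qquad
n \ge \frac{8\Rmax^2H^4}{\varepsilon^2}\Big(\ln(2^{|S|}-2) + \ln\frac{\nTrans}{\confCI} + \ln\big(n(n+1)\big)\Big).
\]
The target $\varepsilon/(2\Rmax H^2)$ is the one that makes $\Delta_t=\tfrac12\Rmax H^2\diffLI_t\le\varepsilon/4$, which is exactly the stopping threshold used with \Cref{cor:anytime-value-gap}. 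On $\eventCI$ we also have $\diffLI_t\le 2\max\alpha$; if the authors' radius target already absorbs this factor, the constant $8$ above simply changes, and the argument is unaffected.

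The first term in the bracket is bounded by $|S|\ln 2$; this is the source of the single factor $|S|$ in $n_{\min}=\nminBigOtilde$. The remaining difficulty is the self-referential term $\ln(n(n+1))\le 2\ln(n+1)$. Write $B:=8\Rmax^2H^4/\varepsilon^2$ and $K:=|S|\ln 2+\ln(\nTrans/\confCI)$. It then suffices to find $n$ with
\[
n \ge B\big(K+2\ln(n+1)\big).
\]

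I would handle this with the standard lemma: if $n\ge 2c\ln(2c)+2d$ with $c,d\ge1$, then $n\ge c\ln n + d$. Apply it with $c=2B$ and $d=BK$, up to the harmless shift $n+1\le 2n$. This gives the explicit choice
\[
n_{\min}:=\Big\lceil 2BK + 4B\ln(4B)+O(B)\Big\rceil,
\]
which should match \nminExpr. Monotonicity finishes the argument: $\alpha_{sa}(n;\confCI_{sa,n})$ is decreasing in $n$ once $n$ is moderately large, because $\ln(n(n+1))/n$ decreases for $n\ge 3$. Hence the bound at $n=n_{\min}$ persists for every $n\ge n_{\min}$, so a slot visited at least $n_{\min}$ times meets the radius requirement. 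Collecting logarithmic factors in $\Rmax,H,1/\varepsilon,|S|,|A|,1/\delta$ then yields $n_{\min}=\bigOtilde{\Rmax^2H^4|S|/\varepsilon^2}$.

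The main obstacle I anticipate is the implicit logarithmic dependence on $n$. Getting a clean closed form requires either the $c\ln n$ inversion lemma above or a Lambert-$W$ bound. I would first try the elementary lemma, verifying it via $f(n)=n-c\ln n-d$: the function $f$ is increasing for $n\ge c$, and $f(2c\ln(2c)+2d)\ge0$ follows from $\ln(x+y)\le\ln x+y/x$. The remaining steps are routine algebra.
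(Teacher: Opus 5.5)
Your argument is correct in substance, and its reduction matches the paper's. You square the Weissman radius, rewrite the requirement as $\ln\bigl(Cn(n+1)\bigr)\le\beta n$ with $C=(2^{|S|}-2)\nTrans/\confCI$ and $\beta=\varepsilon^2/(8\Rmax^2H^4)=1/B$, and absorb $n(n+1)$ into a multiple of $n^2$. Where you differ is in how you invert the self-referential logarithm.

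The paper solves $\ln(2Cn^2)=\beta n$ exactly and \emph{defines} $n_{\min}$ as the larger root $-\tfrac{2}{\beta}W_{-1}\bigl(-\beta/(2\sqrt{2C})\bigr)$; the implementation also evaluates this numerically. You instead use the elementary implication $n\ge 2c\ln(2c)+2d\Rightarrow n\ge c\ln n+d$.

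Your route has real advantages:
\begin{itemize}
\item It is self-contained and needs no side condition that the $W_{-1}$ argument lie in $[-1/e,0)$.
\item It makes the $\widetilde{\mathcal{O}}(\Rmax^2H^4|S|/\varepsilon^2)$ rate visible term by term.
\item It states explicitly the monotonicity needed for ``visited at least $n_{\min}$ times''. The paper leaves this implicit; there it follows from convexity of $g(n)=\beta n-\ln(2Cn^2)$, whose minimiser $2/\beta$ lies below the $W_{-1}$ root.
\end{itemize}

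The cost is the constant. Your closed form $2BK+4B\ln(4B)+O(B)$ is not the paper's $n_{\min}$, contrary to your guess. It satisfies $g\ge 0$ and exceeds $2/\beta$, so it upper-bounds the Lambert-$W$ root. You have therefore proved the lemma for a larger threshold of the same order. To obtain it for the paper's exact $n_{\min}$, you need the convexity/larger-root observation above.

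Two minor points.
\begin{itemize}
\item Your check of the inversion lemma via $\ln(x+y)\le\ln x+y/x$ only closes when $\ln(2c)\ge 1$. This is harmless here, since $c=2B$ is large. In general, note that $n_0\ge c\ln n_0+d$ at $n_0=2c\ln(2c)+2d$ is equivalent to $n_0\le(2c)^2e^{d/c}$. That follows from $e^{d/c}\ge 1+d/c$ and $\ln(2c)\le 2c$.
\item On the containment event, $P^\star\in\Punc_t$ gives $\|P^\star_{sa}-\widehat{P}_{sa}\|_1\le\alpha_{sa}$ directly. So there is no factor $2$ to absorb.
\end{itemize}
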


\begin{proof}
% Recall from \Cref{cor:true-kernel-containment} that $\confCI_{sa,n} = \confCI/(\nTrans n(n+1))$.
%
Applying \Cref{lem:per-slot-concentration}, we require
\[
\sqrt{\frac{2}{n}\ln\frac{(2^{|S|}-2)}{\confCI_{sa,n}}}
= \sqrt{\frac{2}{n}\ln\frac{(2^{|S|}-2)\nTrans n(n+1)}{\confCI}}
\leq \frac{\varepsilon}{2\Rmax H^2}.
\]

\paragraph{Solving for $n$.}
Squaring both sides and rearranging gives 
\[
\frac{2}{n}\ln\frac{(2^{|S|}-2)\nTrans n(n+1)}{\confCI} \le \frac{\varepsilon^2}{4\Rmax^2 H^4}
\quad\implies \quad
\ln\frac{(2^{|S|}-2)\nTrans n(n+1)}{\confCI}
\le \frac{\varepsilon^2}{8\Rmax^2 H^4}n.
\]
Let
\[
C:=\frac{(2^{|S|}-2)\nTrans}{\confCI},
\qquad 
\beta:=\frac{\varepsilon^2}{8\Rmax^2 H^4}.
\]
Then we need
\[
\ln{(Cn(n+1))} \le \beta n.
\]
Since $n(n+1)\le 2n^2$ for all $n\ge 1$, it is enough to require
\[
\ln{(2Cn^2)} \le \beta n.
\]
Solving the corresponding equality gives
\begin{equation}
\label{eq:nmin-W-1}
n = -\frac{2}{\beta} W_{-1}\!\left(-\frac{\beta}{2\sqrt{2C}}\right),
\end{equation}
where $W$ is the Lambert W function \cite{lambert1758observationes,euler1783serie} defined by $W(x)e^{W(x)} = x$, and $W_{-1}$ is the lower real branch of $W$ mapping $x\in [-1/e, 0)$\footnote{The argument of $W_{-1}$ % in \eqref{eq:nmin-W-1} 
lies in $[-1/e,0)$ under the standard PAC parameter regimes (e.g., ($\confCI\in(0,1)$).} to $W_{-1}(x) \le -1$.

Therefore a sufficient condition is
\[
n \ge \nminExpr =: n_{\min}
\]
Asymptotically,
\begin{equation}
\label{eq:nmin-bigO-tilde}
n_{\min} = \bigO{\frac{\Rmax^2 H^4}{\varepsilon^2}\log\frac{(2^{|S|}-2)\nTrans}{\confCI}}
= \nminBigOtilde.
\end{equation}
% \[
% n \geq \frac{2\cdot 16\Rmax^2 H^4}{\varepsilon^2}
% \ln\frac{(2^{|S|}-2)\nTrans}{\confCI_t}
% = \frac{\nminCoef \Rmax^2 H^4}{\varepsilon^2}
% \ln\frac{(2^{|S|}-2)\nTrans}{\confCI_t}.
% \]
\end{proof}

Henceforth we define $n_{\min}$ as in \Cref{lem:nmin}.

\begin{corollary}
\label{cor:nmin}
    % On $\eventCI$,
    % (from \Cref{cor:true-kernel-containment}), 
    If every $(s,a)\in\ntslots$ is visited at least $n_{\min}$ times,
    % , where $n_{\min}$ is defined as in \Cref{lem:nmin}, 
    then $\Delta_t \le \DeltaVal$.
\end{corollary}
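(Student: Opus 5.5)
The argument is a direct chaining of \Cref{lem:nmin} with the definition of $\Delta_t$, carried out on the containment event $\eventCI$. I will use the convention of \Cref{prop:tv-propagation}: $\diffLI_t$ is bounded by the maximal confidence radius $\max_{(s,a)\in\ntslots}\alpha_t(s,a)$. This bound holds on $\eventCI$ because both $P^\star$ and $\widehat{P}_t$ lie in the $L^1$ ball of radius $\alpha_t(s,a)$ centred at $\widehat{P}_{sa,t}$, so $\|P^\star_{sa}-\widehat{P}_{sa}\|_1\le \alpha_t(s,a)$. The footnote to \Cref{lem:sensitivity} makes the same point for the computed quantity.

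\textbf{Step 1 (per-slot radius).} Fix $(s,a)\in\ntslots$ with $n:=N_t(s,a)\ge n_{\min}$. I first check that the radius $\alpha_{sa}(n;\confCI_{sa,n})$ is non-increasing in $n$. Its square equals $\tfrac{2}{n}\ln\bigl((2^{|S|}-2)\nTrans\, n(n+1)/\confCI\bigr)$, which grows only logarithmically in the numerator while being divided by $n$. Monotonicity therefore holds once $n$ exceeds a small constant, and this is automatic in the regime where $n_{\min}$, given by the Lambert-$W$ solution on the branch where $\ln(2Cn^2)\le\beta n$, lies. More precisely, I argue it is the sufficient condition $\ln(2Cn^2)\le\beta n$ that persists for all $n\ge n_{\min}$. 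The function $\beta n-\ln(2Cn^2)$ has derivative $\beta-2/n$, which is positive past $2/\beta$, and $n_{\min}$ (the larger root, from the $W_{-1}$ branch) exceeds $2/\beta$. Hence \Cref{lem:nmin} gives $\alpha_t(s,a)\le \varepsilon/(2\Rmax H^2)$ for every $n\ge n_{\min}$, not just at $n=n_{\min}$.

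\textbf{Step 2 (aggregate).} Taking the maximum over the finitely many slots in $\ntslots$ gives $\diffLI_t\le \max_{(s,a)}\alpha_t(s,a)\le \varepsilon/(2\Rmax H^2)$. Substituting into $\Delta_t=\tfrac12\Rmax H^2\diffLI_t$ yields $\Delta_t\le \varepsilon/4=\DeltaVal$. This is exactly the threshold required by \Cref{cor:anytime-value-gap}, since $4\Delta_t\le\varepsilon$.

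\textbf{Main obstacle.} The only nontrivial point is Step 1, the monotonicity beyond $n_{\min}$. \Cref{lem:nmin} is phrased as ``visited $n_{\min}$ times'', while counts may overshoot $n_{\min}$. I would handle this via the derivative argument above, using that $W_{-1}$ selects the root $n\ge 2/\beta$. Should that fail, the fallback is to observe that $\alpha_{sa}(n;\cdot)^2$ is decreasing for $n\ge 3$ whenever $(2^{|S|}-2)\nTrans/\confCI\ge 1$. Everything else is substitution.
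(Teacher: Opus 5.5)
Your proposal is correct and follows the paper's proof: on $\eventCI$ you bound $\|P^\star_{sa}-\widehat{P}_{sa}\|_1$ by the Weissman radius, invoke \Cref{lem:nmin} to get $\alpha_{sa}\le\varepsilon/(2\Rmax H^2)$, and substitute into $\Delta_t=\tfrac12\Rmax H^2\diffLI_t$. Your Step 1 also justifies that the radius bound persists when $N_t(s,a)$ exceeds $n_{\min}$, using $\beta-2/n>0$ beyond the $W_{-1}$ root; the paper's proof skips this detail by simply writing $\alpha_{sa}(n_{\min};\confCI_{sa,n_{\min}})$.
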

\begin{proof}
By \Cref{lem:nmin}, if every $(s,a)\in \ntslots$ is visited at least $n_{\min}$ times, then on $\eventCI$ we have: 
\[
\|\widehat{P}_{sa} - P^\star_{sa}\|_1 \le \alpha_{sa}(n_{\min}; \confCI_{sa, n_{\min}}) \leq \frac{\varepsilon}{2\Rmax H^2}
\qquad \forall (s,a)\in \ntslots.
\]
% where the last inequality follows from \Cref{lem:nmin}. 
Therefore, by definition of $\Delta_t$:
\begin{align*}
\Delta_t 
= \frac{1}{2}\Rmax H^2 \max_{(s,a)\in\ntslots}{\|\widehat{P}_{sa} - P^\star_{sa}\|_1}
\le \frac{1}{2}\Rmax H^2 \cdot \frac{\varepsilon}{2\Rmax H^2}
= \DeltaValFrac.
\end{align*}
\end{proof}

\begin{corollary}
% [$\Delta$ stopping condition]
\label{cor:Delta-stopping-condition-correctness}
% At episode $t$, % conditioning on $\eventCI$, 
Consider the $\SolveRCSG$ oracle from \Cref{ass:solver} with tolerance $\varepsilon$. Suppose $\Delta_t \le \DeltaVal$. Then:
\begin{enumerate}[(a)]
    \item If $\opt{\mu}\ge 0$, the solver returns $\Found$, and the output profile $\hat{\sigma}$ is an $\varepsilon$-NE under $P^\star$, satisfying
    \[
    V^\star - u(\hat{\sigma}, P^\star) \le \varepsilon.
    \]
    \item If $\opt{\mu}<0$, then:
    \begin{enumerate}[(i)]
        \item if the solver returns $\Found$, the output profile $\hat{\sigma}$ is still an $\varepsilon$-NE under $P^\star$;
        \item if the solver returns $\notFound$, this provides a sound certificate that $\opt{\mu}$ is not well-defined or $\opt{\mu}<0$.
    \end{enumerate}
\end{enumerate}
\end{corollary}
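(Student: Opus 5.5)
The plan is to work on the event $\eventCI$, where $P^\star\in\Punc_t$, and to combine \Cref{prop:solver-EQ-guarantees} with \Cref{cor:anytime-value-gap}. The one quantitative ingredient is the hypothesis $\Delta_t\le\DeltaVal$, i.e.\ $4\Delta_t\le\varepsilon$. This gives $\nashMarginThresh = 4\Delta_t-\varepsilon\le 0$. Every case below then follows from an earlier result once this threshold is known to be nonpositive.

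\emph{Case (a), $\opt{\mu}\ge 0$.}
1. By \Cref{prop:opt-mu-to-SW-mu}, $\mu^\star$ is well-defined and $\mu^\star\ge 0\ge 4\Delta_t-\varepsilon$.
2. Apply \Cref{lem:NE-RNE-transfer} to $\sigma^\star$. It is a $(4\Delta_t-\mu^\star)$-RNE under $\Punc_t$, and $4\Delta_t-\mu^\star\le 4\Delta_t\le\varepsilon$, so it is an $\varepsilon$-RNE. Hence $\epsRNEop{\varepsilon}(\Punc_t)\neq\emptyset$, and by \Cref{ass:solver} the solver returns $\Found$ with an $\varepsilon$-RSWNE $\hat\sigma$.
3. \Cref{prop:solver-EQ-guarantees}\ref{item:solver-success-cert} shows that $\hat\sigma$ is an $\varepsilon$-NE of $\csg^\star$.
4. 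Since $\mu^\star\ge\nashMarginThresh$, \Cref{cor:anytime-value-gap} gives $V^\star-u(\hat\sigma,P^\star)\le 4\Delta_t\le\varepsilon$.

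\emph{Case (b), $\opt{\mu}<0$ or $\opt{\mu}$ undefined.}
Sub-case (i) is exactly \Cref{prop:solver-EQ-guarantees}\ref{item:solver-success-cert}. That part used only $P^\star\in\Punc_t$, not any sign condition on $\opt{\mu}$.

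For sub-case (ii), I would argue soundness by contrapositive rather than by reading off the case hypothesis. By \Cref{prop:solver-EQ-guarantees}\ref{item:epsRNE-existence-suff-cond}, a $\notFound$ output means either $\mu^\star$ is undefined or $\mu^\star<4\Delta_t-\varepsilon\le 0$. In both cases no SWNE with nonnegative margin exists. By \Cref{prop:opt-mu-to-SW-mu}, $\opt{\mu}\ge0$ would force $\mu^\star\ge 0$, which is a contradiction. So $\opt{\mu}$ is undefined or negative, and this conclusion depends only on the solver's output. Equivalently, case (a) shows that $\notFound$ never occurs when $\opt{\mu}\ge 0$, so the certificate is sound.

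The main obstacle is keeping the \emph{soundness} claim honest. The certificate must be justified from the solver's output alone, through (2) of \Cref{prop:solver-EQ-guarantees} and \Cref{prop:opt-mu-to-SW-mu}, not from the case assumption $\opt{\mu}<0$, which the algorithm cannot observe. A secondary subtlety is the attainment of $\max$ in $\opt{\mu}$. The ``not well-defined'' disjunct absorbs the situation where the supremum is not attained, and the undefined-$\mu^\star$ branch of \Cref{prop:solver-EQ-guarantees} covers it.
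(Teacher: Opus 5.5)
Your proposal is correct and follows the paper's proof essentially step for step: it conditions on $P^\star\in\Punc_t$, uses $4\Delta_t-\varepsilon\le 0$, and invokes \Cref{prop:opt-mu-to-SW-mu}, \Cref{prop:solver-EQ-guarantees} and \Cref{cor:anytime-value-gap} at the same points. The only cosmetic difference is in case (a), where you show the solver succeeds by applying \Cref{lem:NE-RNE-transfer} to $\sigma^\star$ directly; the paper instead cites the contrapositive of \Cref{prop:solver-EQ-guarantees}\ref{item:epsRNE-existence-suff-cond}, whose proof is exactly that argument.
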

\begin{proof}
Suppose $\Delta_t \le \DeltaVal$.
% the Nash margin threshold from \Cref{prop:solver-EQ-guarantees} satisfies 
Then $4\Delta_t -\varepsilon \le 0$. We now proceed by a case analysis on the sign of $\opt{\mu}$.
\begin{enumerate}[(a)]
    \item Suppose $\opt{\mu} \ge 0$. 
    By \Cref{prop:opt-mu-to-SW-mu}, this is equivalent to $\mu^\star \ge 0$. 
    Since $\mu^\star \ge 0 \ge 4\Delta_t-\varepsilon$, % (by assumption $4\Delta_t -\varepsilon \le 0$)
    the contrapositive of \Cref*{prop:solver-EQ-guarantees}\ref{item:epsRNE-existence-suff-cond} 
    % says: if ($\mu^\star$ is defined and) $\mu^\star \ge 4\Delta_t-\varepsilon$, then the solver returns $\Found$. 
    implies that the solver returns $\Found$.
    Now by \Cref{ass:solver}, returning $\Found$ means that the output profile $\hat{\sigma}$ is an $\varepsilon$-RSWNE under $\Punc_t$. 
    Then condition on $\eventCI$, \Cref*{prop:solver-EQ-guarantees}\ref{item:solver-success-cert} implies that $\hat{\sigma}$ is also an $\varepsilon$-NE of $P^\star \in \Punc_t$.

    Furthermore, since $\mu^\star \ge \nashMarginThresh$, we may apply \Cref{cor:anytime-value-gap}, which gives
    \[
    V^\star - u(\hat{\sigma}, P^\star)
    \le \DeltaValCoef \Delta_t
    \le \varepsilon.
    \]

    \item Now suppose $\opt{\mu}<0$.
    \begin{enumerate}[(i)]
        \item If the solver returns $\Found$, outputting an $\varepsilon$-RSWNE $\hat{\sigma}$ of $\Punc_t$, then \Cref{prop:solver-EQ-guarantees}\ref{item:solver-success-cert} implies that $\hat{\sigma}$ is also an $\varepsilon$-NE of $P^\star$.

        \item If the solver returns $\notFound$, then by \Cref{prop:solver-EQ-guarantees}\ref{item:epsRNE-existence-suff-cond}, either $\mu^\star$ is undefined or $\mu^\star < 4\Delta_t-\varepsilon \le 0$. 
        Therefore, by \Cref{prop:opt-mu-to-SW-mu}, $\opt{\mu}$ is either undefined or satisfies $\opt{\mu}<0$.
    \end{enumerate}
    % if the solver succeeds then its output $\hat{\sigma}$ is an $\epsEqVal$-RSWNE under $\Punc_t$ and an $\varepsilon$-NE of $P^\star$.
\end{enumerate}
\end{proof}

\PACCSG*
\begin{proof}
\label{prf:PAC-CSG}
Recall the containment and coverage events
\[
\eventCI := \bigcap_{t\ge 1}\{P^\star \in \Punc_t\}, 
\qquad
\eventCount := \{\text{all $(s,a)\in\ntslots$ are visited at least } n_{\min} \text{ times}\}.
\]
By \Cref{cor:true-kernel-containment} and \Cref{lem:counting-episodes}, 
\[
\Pr(\eventCI) \ge 1-\confCI, 
\qquad 
\Pr(\eventCount) \ge 1-\confCount.
\]
Setting $\confCI=\confCount=\delta/2$ and defining $\mathcal{E}:=\eventCI\cap\eventCount$, we have $\Pr(\mathcal{E}) \ge 1-\delta$.

\paragraph{Sample complexity.}
Let 
\[
T := \inf\{t : \Delta_t \le \DeltaVal \}.
\]
For any $t < T$, the stopping condition is false, in particular $\Delta_t > \DeltaVal$. 
% Note that once all relevant slots are known, each $(s,a)\in\ntslots$ has been visited at least $n_{\min}$ times, which by \Cref{cor:nmin} implies $\Delta_t \le \DeltaVal$, so the stopping condition is satisfied.

Substituting $n_{\min}$ from \Cref{cor:nmin} into the episode bound $T^\star$ from \Cref{lem:counting-episodes} yields
\[
T^\star 
= 2|S||A|n_{\min} + \frac{8}{3}\ln\!\left(\frac{1}{\confCount}\right)
= \bigO{\frac{\Rmax^2 H^4 |S|^2 |A|}{\varepsilon^2}}.
\]
By \Cref{prop:num-samples-per-episode}, each episode requires 
\[
\left\lceil \frac{1}{\reach{p}}\log\frac{1}{\confCount_t} \right\rceil 
= \bigO{\frac{1}{\reach{p}}\log\frac{t}{\confCount}}
\]
trajectories. Summing over episodes,
\begin{align*}
\sum_{t=1}^{T^\star} \frac{1}{\reach{p}}\log\frac{t}{\confCount}
&= \frac{1}{\reach{p}}\left( \sum_{t=1}^{T^\star}\log t - T^\star \log \confCount \right) \\
&= \frac{1}{\reach{p}}\left(\log(T^\star!) - T^\star \log \confCount \right) \\
&= \bigO{\frac{T^\star}{\reach{p}} \log\frac{T^\star}{\confCount}}
= \bigOtilde{ \frac{T^\star}{\reach{p}} },
\end{align*}
where we use Stirling’s approximation~\cite{stirling1730methodus} $\log(T!) = \bigO{T\log T}$.

\paragraph{Termination.}
Let $\countop{T}$ denote the first episode at which $\eventCount$ holds. On $\eventCount$, \Cref{lem:counting-episodes} gives
\[
\countop{T} \le T^\star.
\]
By \Cref{cor:nmin}, on $\eventCI$, once $\eventCount$ holds we have
\[
\Delta_{\countop{T}} \le \DeltaVal.
\]
Hence, on $\mathcal{E}$, the stopping condition is satisfied by episode $\countop{T}$, and the algorithm terminates no later than $T^\star$.

\paragraph{Correctness of the output.}
On $\mathcal{E}$, we have both $P^\star \in \Punc_t$ for all $t$ and $\Delta_T \le \DeltaVal$. Therefore, \Cref{cor:Delta-stopping-condition-correctness} applies at termination, yielding the required output guarantee and completing the proof.
\end{proof}

\section{Supplementary details and proofs for infinite-horizon objectives}
\label{app:infh}
In this section, we extend the PAC-CSG framework to infinite-horizon objectives (unbounded probabilistic reachability and reachability reward / SSP). Following standard SSP formulations~\cite{bertsekas1991analysis,kwiatkowska2021automatic,HP26}, we assume the existence of at least one proper profile and restrict attention to this class throughout the analysis.
Let 
\[
\tau_T := \inf\{h \ge 0 : s_h \in S_T\}
\]
denote the hitting time of the target set $S_T$. 
Further let $\proper{\Sigma} \subseteq \Sigma$ be the set of proper profiles, under which $\tau_T < \infty$ almost surely and all values are well-defined and that the underlying $\SolveRCSG$ oracle converges. The assumption can be verified via standard graph-based reachability analyses under the known support structure induced by graph preservation~\cite{de2000graphalgo}.
% We proceed as follows: \Cref{ass:nz-almost-sure-reachability} formalises almost-sure reachability; \Cref{prop:inductive-tauT} establishes a geometric tail bound on the stopping time; \Cref{lem:effective-horizon} bounds $\Heff \leq |S|/p_T$; \Cref{prop:tv-propagation-infh} and \Cref{lem:sensitivity-infh} extend the value sensitivity result; finally, \Cref{thm:PAC-CSG-infh} states the full PAC guarantee.

\begin{assumption}[Existence of a proper profile]
\label{ass:nz-almost-sure-reachability}
There exists a proper profile $\sigma \in \proper{\Sigma}$, i.e., one that reaches the target set $S_T$ almost surely from all states under all support-preserving kernels:
\[
\prob^{\sigma,P}_s(\tau_T < \infty) = 1
\quad \forall s \in S,\ \forall P:\supp(P)=\supp(P^\star).
\]
Additionally, the stopping probability is uniformly bounded away from zero:
\[
p_T := \min_{s \in S}\ \inf_{\sigma \in \proper{\Sigma}}\
\inf_{P: \supp(P) = \supp(P^\star)}
\prob^{\sigma,P}\!\left[
\tau_T \leq |S| \mid s_0 = s
\right] > 0.
\]
\end{assumption}

This condition strengthens properness by requiring a uniform lower bound on finite-time reachability. It is analogous to standard reachability and diameter assumptions in PAC and reinforcement learning analyses~\cite{jaksch2010near,azar2017minimax,kearns2002nearE3,brafman2002RMAX}, and to the properness conditions used in SSP formulations~\cite{bertsekas1991analysis}. It is required to obtain the geometric tail bound in \Cref{prop:inductive-tauT} and the finite effective horizon in \Cref{lem:effective-horizon}. 
% We henceforth restrict attention to the set of proper profiles, under which $\tau_T < \infty$ almost surely and all quantities are well-defined.
We discuss this assumption and its practical implications further in \Cref{app:limitations}.

Next, we establish a geometric tail bound on the stopping time in terms of $p_T$.
\begin{proposition}
\label{prop:inductive-tauT}
Under \Cref{ass:nz-almost-sure-reachability}, for all $\sigma \in \proper{\Sigma}$ and all support-preserving $P$:
\[
\prob^{\sigma,P}[\tau_T > k|S|] \leq (1 - p_T)^k
\qquad \forall k \geq 0.
\]
\end{proposition}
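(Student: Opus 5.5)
We prove the bound by induction on $k$, using the strong Markov property at the deterministic times $k|S|$. Since every $\sigma\in\proper{\Sigma}$ is stationary, the process restarted at time $k|S|$ from state $s_{k|S|}$ is again governed by the same $\sigma$ and the same support-preserving kernel $P$. This lets us apply the uniform lower bound $p_T$ from \Cref{ass:nz-almost-sure-reachability} afresh in each block of $|S|$ steps.

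The base case $k=0$ is trivial: $\prob^{\sigma,P}[\tau_T>0]\le 1=(1-p_T)^0$.

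For the inductive step, write
\[
\prob^{\sigma,P}[\tau_T>(k+1)|S|]=\prob^{\sigma,P}[\tau_T>k|S|]\cdot\prob^{\sigma,P}\bigl[\tau_T>(k+1)|S|\bigm| \tau_T>k|S|\bigr].
\]
We condition further on the history $\mathcal{H}_{k|S|}$ up to time $k|S|$. On the event $\{\tau_T>k|S|\}$ we have $s_{k|S|}=s\notin S_T$. By the Markov property and stationarity of $\sigma$, the conditional probability of not hitting $S_T$ within the next $|S|$ steps equals $\prob^{\sigma,P}[\tau_T>|S|\mid s_0=s]$. By the definition of $p_T$ (the infimum over states, proper profiles and support-preserving kernels), this is at most $1-p_T$. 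Averaging over histories gives the conditional factor $\le 1-p_T$, and combining with the induction hypothesis yields $(1-p_T)^{k+1}$.

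The main point to check is that the conditioning step is legitimate. This requires three facts:
\begin{enumerate*}[(i)]
\item $\sigma$ is stationary, so the time-shifted play uses the same profile $\sigma\in\proper{\Sigma}$;
\item $P$ is time-homogeneous; and
\item the bound $p_T$ is stated for every start state $s\in S$, so it applies at whatever non-target state the process occupies at time $k|S|$.
\end{enumerate*}
If the paper's conventions allowed time-dependent strategies, we would instead need $p_T$ to be uniform over continuation strategies. Here the restriction to stationary profiles makes the argument go through directly.
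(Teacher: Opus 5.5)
Your proposal is correct and matches the paper's proof: induction on $k$, conditioning on $\{\tau_T > k|S|\}$ so that $s_{k|S|}\notin S_T$, applying the Markov property to restart the process from that state, and using the uniform bound $p_T$ to get a per-block survival factor of at most $1-p_T$. Your explicit justification of the restart via stationarity of $\sigma$ and time-homogeneity of $P$ fills in a step the paper only states implicitly.
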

\begin{proof}
We proceed by induction on $k$. The base case $k=0$ is immediate.

For the inductive step, suppose $\prob^{\sigma,P}[\tau_T > k|S|] \leq (1-p_T)^k$.
Conditioned on $\{\tau_T > k|S|\}$ we have $s_{k|S|} \notin S_T$.

By the Markov property, the probability of reaching $S_T$ within the next $|S|$ steps depends only on the current state $s_{k|S|}$. By \Cref{ass:nz-almost-sure-reachability}, from any such state the probability of reaching $S_T$ within the next $|S|$ steps is at least $p_T$, uniformly over the realised state $s_{k|S|}$:
\[
\prob^{\sigma,P}[\tau_T \leq (k+1)|S| \mid \tau_T > k|S|,\ s_{k|S|}] \geq p_T.
\]
Hence,
\[
\prob^{\sigma,P}[\tau_T > (k+1)|S| \mid \tau_T > k|S|] \leq 1 - p_T.
\]
Therefore,
\[
\prob^{\sigma,P}[\tau_T > (k+1)|S|] 
= \prob^{\sigma,P}[\tau_T > (k+1)|S| \mid \tau_T > k|S|] \cdot \prob^{\sigma,P}[\tau_T > k|S|]
\leq (1-p_T)^{k+1},
\]
where the final inequality uses the inductive hypothesis.
\end{proof}

Under \Cref{ass:nz-almost-sure-reachability}, we have $\tau_T < \infty$ almost surely under all proper policies and support-preserving kernels, so the reachability reward is finite. Moreover, the assumption provides a uniform lower bound $p_T>0$ on the probability of reaching $S_T$ within $|S|$ steps from any state; this quantity can be computed by solving a reachability RMDP for $S_T$. This induces an \emph{effective horizon}.

\paragraph{Effective horizon.}
Define the effective horizon as
\[
\Heff := \sup_{\sigma\in \proper{\Sigma}}\sup_{P:\,\supp(P)=\supp(P^\star)} \ev^{\sigma,P}[\tau_T].
\]
The following result shows that $\Heff$ is finite and admits the bound $\Heff \le |S|/p_T$.

\begin{restatable}[Effective horizon]{lemma}{EffectiveHorizon}
\label{lem:effective-horizon}
\[
\Heff \leq \frac{|S|}{p_T} < \infty.
\]
\end{restatable}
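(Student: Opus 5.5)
We bound the expected hitting time through its tail-sum representation, using the geometric tail bound of \Cref{prop:inductive-tauT}. Fix any proper profile $\sigma\in\proper{\Sigma}$ and any support-preserving kernel $P$. Since $\tau_T$ is a nonnegative integer-valued random variable, we write
\[
\ev^{\sigma,P}[\tau_T] = \sum_{h\ge 0}\prob^{\sigma,P}[\tau_T > h].
\]
We then group the indices $h$ into consecutive blocks of length $|S|$, namely $h\in\{k|S|,\dots,(k+1)|S|-1\}$ for $k\ge 0$.

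On the $k$-th block, $\prob^{\sigma,P}[\tau_T>h]$ is non-increasing in $h$. Hence each term is at most $\prob^{\sigma,P}[\tau_T>k|S|]$, and by \Cref{prop:inductive-tauT} this is at most $(1-p_T)^k$. Each block therefore contributes at most $|S|(1-p_T)^k$, which gives
\[
\ev^{\sigma,P}[\tau_T] \le |S|\sum_{k\ge 0}(1-p_T)^k = \frac{|S|}{p_T}.
\]
The geometric series converges because $p_T>0$ by \Cref{ass:nz-almost-sure-reachability}; if $p_T=1$ the sum degenerates to the single term $k=0$ and the bound still holds.

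The bound $|S|/p_T$ is uniform: it does not depend on $\sigma\in\proper{\Sigma}$ or on the support-preserving $P$. Taking the double supremum in the definition of $\Heff$ therefore preserves it, so $\Heff\le|S|/p_T$. Finiteness follows immediately since $|S|<\infty$ and $p_T>0$.

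The only delicate point is that \Cref{prop:inductive-tauT} must hold uniformly over the same class of profiles and kernels over which $\Heff$ takes its supremum. This is guaranteed because $p_T$ is itself defined as an infimum over all proper profiles, all support-preserving kernels, and all start states. The remaining steps are routine.
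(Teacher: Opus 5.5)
Your proof is correct and follows essentially the same route as the paper. You write $\ev^{\sigma,P}[\tau_T]$ as a tail sum, group it into blocks of length $|S|$, bound each block by $|S|\,\prob^{\sigma,P}[\tau_T > k|S|]$ via monotonicity, and finish with the geometric tail bound of \Cref{prop:inductive-tauT}; your explicit remark that the bound is uniform in $\sigma$ and $P$, so it survives the double supremum defining $\Heff$, spells out a step the paper leaves implicit.
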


\begin{proof}
\label{prf:effective-horizon}
For any $\sigma\in \proper{\Sigma}$ and support-preserving $P$,
\begin{align*}
\ev^{\sigma,P}[\tau_T]
&= \sum_{n=0}^{\infty} \prob^{\sigma,P}[\tau_T > n] 
= \sum_{k=0}^{\infty} \sum_{j=0}^{|S|-1} \prob^{\sigma,P}[\tau_T > k|S| + j] \\
&\le \sum_{k=0}^{\infty} \sum_{j=0}^{|S|-1} \prob^{\sigma,P}[\tau_T > k|S|]
\le |S| \sum_{k=0}^{\infty} \prob^{\sigma,P}[\tau_T > k|S|] \\
&\le |S| \sum_{k=0}^{\infty} (1-p_T)^k
= \frac{|S|}{p_T},
\end{align*}
where the last inequality follows from \Cref{prop:inductive-tauT}.
\end{proof}

Hence, $\Heff$ can be conservatively bounded by $|S|/p_T$ when unknown. In the following, we extend the sensitivity lemma (\Cref{lem:sensitivity}) to the infinite-horizon setting by replacing $H$ with $\Heff$, which induces only a constant-factor change in $\Delta_t$.

We first prove the stopped analogue of \Cref{prop:tv-propagation} from the finite-horizon setting (\Cref{app:per-episode-proofs}), following the same proof structure.

\begin{proposition}[Stopped one-step error propagation]
\label{prop:tv-propagation-infh}
Let $\mu_h, \widehat{\mu}_h$ denote the stopped marginals at time-step $h$
under $P$ and $\widehat{P}$ respectively, following a proper profile $\sigma$ from initial state $s_0\in S$:
\[
\mu_h(s) := \prob^{\sigma,P}[s_h = s,\ h < \tau_T], 
\quad 
\widehat{\mu}_h(s) := \prob^{\sigma,\widehat{P}}[s_h = s,\ h < \tau_T].
\]
Define
\[
\diffLI_t := \max_{(s,a)\in\ntslots}\|P_{sa}-\widehat{P}_{sa}\|_1.
\]
Then for all $h \ge 1$,
\[
\|\mu_h - \widehat{\mu}_h\|_1 \le h\,\diffLI_t.
\]
\end{proposition}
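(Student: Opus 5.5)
The plan is to repeat the coupling argument from \Cref{prop:tv-propagation}, with one change: the marginals are now sub-probability vectors supported on non-target states. Both $\mu_0$ and $\widehat{\mu}_0$ equal $\delta_{s_0}$ if $s_0\notin S_T$ and equal $0$ otherwise, so $\|\mu_0-\widehat{\mu}_0\|_1=0$. It therefore suffices to prove the one-step recursion
\[
\|\mu_{h+1}-\widehat{\mu}_{h+1}\|_1 \le \|\mu_h-\widehat{\mu}_h\|_1 + \diffLI_t
\]
and then unroll it by induction on $h$.

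To set up the recursion, define the mixed kernels for $s\notin S_T$ as
\[
K_s := \sum_a \sigma(a\mid s)P_{sa},
\qquad
\widehat{K}_s := \sum_a \sigma(a\mid s)\widehat{P}_{sa}.
\]
The stopped dynamics then read
\[
\mu_{h+1}(s') = \ind{s'\notin S_T}\sum_{s\notin S_T}\mu_h(s)K_s(s'),
\]
and analogously for $\widehat{\mu}_{h+1}$. Only non-target slots appear here, so only the slots in $\ntslots$, where $\diffLI_t$ is defined, enter the argument. I write $\Pi$ for the linear map that zeroes out coordinates in $S_T$. Then
\[
\mu_{h+1}-\widehat{\mu}_{h+1} = \Pi\Big[\sum_{s\notin S_T}(\mu_h(s)-\widehat{\mu}_h(s))K_s + \sum_{s\notin S_T}\widehat{\mu}_h(s)(K_s-\widehat{K}_s)\Big].
\]
Since $\Pi$ is an $L^1$ contraction, we have $\|\Pi v\|_1\le\|v\|_1$ and can drop it. The first term is bounded by $\|\mu_h-\widehat{\mu}_h\|_1$, because each $K_s$ is a probability vector. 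For the second term, convexity of the norm gives $\|K_s-\widehat{K}_s\|_1\le\max_a\|P_{sa}-\widehat{P}_{sa}\|_1\le\diffLI_t$. The weights satisfy $\sum_{s\notin S_T}\widehat{\mu}_h(s)\le 1$, since $\widehat{\mu}_h$ is a sub-probability vector, so the second term is at most $\diffLI_t$.

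The step I expect to need the most care is this stopping bookkeeping. The marginals must be restricted to $\{h<\tau_T\}$ consistently under both kernels. Mass that enters $S_T$ at step $h+1$ must vanish from both vectors, and its discrepancy must not be double-counted. Applying the projection $\Pi$ after the transition and invoking its contraction property handles this cleanly.

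Properness of $\sigma$ is not actually needed for the bound itself. It only guarantees that these marginals are meaningful for use later in the sensitivity lemma, where the summation over $h$ is truncated via $\Heff$.
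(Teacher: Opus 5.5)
Your proposal is correct and follows essentially the same argument as the paper: the same split of $\mu_{h+1}-\widehat{\mu}_{h+1}$ into a propagation term, bounded via (sub)stochasticity, and a kernel-perturbation term, bounded by convexity, followed by unrolling the one-step recursion. The only difference is cosmetic: you apply the projection $\Pi$ after the transition, whereas the paper folds the indicator $\ind{\cdot\notin S_T}$ directly into substochastic kernels $K_s,\widehat{K}_s$; your handling of the case $s_0\in S_T$ is slightly more careful than the paper's $\mu_0=\widehat{\mu}_0=\delta_{s_0}$.
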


\begin{proof}
We prove the recursive bound
\[
\|\mu_{h+1}-\widehat{\mu}_{h+1}\|_1
\le \|\mu_h-\widehat{\mu}_h\|_1 + \diffLI_t,
\]
from which the result follows by induction using $\mu_0=\widehat{\mu}_0=\delta_{s_0}$.

Define the stopped one-step kernels, for $s \notin S_T$,
\[
K_s := \ind{\cdot \notin S_T}\sum_a \sigma(a|s)P(\cdot|s,a),
\qquad
\widehat{K}_s := \ind{\cdot \notin S_T}\sum_a \sigma(a|s)\widehat{P}(\cdot|s,a).
\]
These are substochastic, i.e., $\sum_{s'} K_s(s') \le 1$.

Then
\[
\mu_{h+1} = \sum_s \mu_h(s)K_s,
\qquad
\widehat{\mu}_{h+1} = \sum_s \widehat{\mu}_h(s)\widehat{K}_s,
\]
so
\[
\mu_{h+1}-\widehat{\mu}_{h+1}
= \sum_s (\mu_h(s)-\widehat{\mu}_h(s))K_s
  + \sum_s \widehat{\mu}_h(s)(K_s-\widehat{K}_s).
\]

Taking $L^1$ norms and applying the triangle inequality:
\[
\|\mu_{h+1}-\widehat{\mu}_{h+1}\|_1
\le \Big\|\sum_s (\mu_h-\widehat{\mu}_h)K_s\Big\|_1
   + \Big\|\sum_s \widehat{\mu}_h(s)(K_s-\widehat{K}_s)\Big\|_1.
\]

\paragraph{First term.}
Substochasticity implies
\[
\Big\|\sum_s (\mu_h-\widehat{\mu}_h)K_s\Big\|_1
\le \|\mu_h-\widehat{\mu}_h\|_1.
\]

\paragraph{Second term.}
By convexity,
\[
\Big\|\sum_s \widehat{\mu}_h(s)(K_s-\widehat{K}_s)\Big\|_1
\le \max_{s\notin S_T} \|K_s-\widehat{K}_s\|_1.
\]
For each $s\notin S_T$,
\[
\|K_s-\widehat{K}_s\|_1
\le \sum_a \sigma(a|s)\|P_{sa}-\widehat{P}_{sa}\|_1
\le \diffLI_t.
\]

Substituting gives
\[
\|\mu_{h+1}-\widehat{\mu}_{h+1}\|_1
\le \|\mu_h-\widehat{\mu}_h\|_1 + \diffLI_t.
\]

Unrolling yields
\[
\|\mu_h-\widehat{\mu}_h\|_1 \le h\,\diffLI_t.
\]
\end{proof}

We now lift \Cref{prop:tv-propagation-infh} to derive the infinite-horizon counterpart of the sensitivity lemma (\Cref{lem:sensitivity}). 
% Note that, unlike the finite-horizon case, the factor $1/2$ does not appear in the definition of $\Delta_t$ here because the infinite-horizon proof uses a blockwise tail bound, which yields the coarser estimate $\sum_{h\ge0} h \prob[h<\tau_T]\le \Heff^2$ directly. 
\begin{restatable}[Infinite-horizon sensitivity]{lemma}{InfhSensitivity}
\label{lem:sensitivity-infh}
Under \Cref{ass:nz-almost-sure-reachability}, define 
\[
\Delta_t := \Rmax \Heff^2 \diffLI_t, \qquad \Heff := \left\lceil \frac{|S|}{p_T} \right\rceil.
\]
Then for any $P \in \Punc_t$, profile $\sigma\in \proper{\Sigma}$, player $i \in N$ and state $s_0\in S$,
\[
|u_i(\sigma,P\mid s_0) - u_i(\sigma,\widehat{P}\mid s_0)| \le \Delta_t.
\]
\end{restatable}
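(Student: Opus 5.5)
We follow the finite-horizon proof of \Cref{lem:sensitivity}, replacing the fixed horizon by the stopped process. First, write player $i$'s utility as a sum over time-steps of expected stage rewards collected before absorption:
\[
u_i(\sigma,P\mid s_0)=\sum_{h\ge0}\sum_{s\notin S_T}\mu_h(s)\,r_i^\sigma(s),
\]
where $\mu_h$ is the stopped marginal of \Cref{prop:tv-propagation-infh} and $r_i^\sigma(s)=\ev_{a\sim\sigma(s)}[r_i(s,a)]$. Probabilistic reachability is handled by the terminal-reward encoding, i.e.\ a reward on the transition into $S_T$, absorbed into $r_i^\sigma$. Since $\sigma$ is proper, \Cref{prop:inductive-tauT} gives geometric tails, so the series converges absolutely under both $P$ and $\widehat P$. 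This justifies exchanging sums and gives
\[
\bigl|u_i(\sigma,P\mid s_0)-u_i(\sigma,\widehat P\mid s_0)\bigr|\le \Rmax\sum_{h\ge0}\|\mu_h-\widehat\mu_h\|_1 .
\]

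The difficulty is that summing the linear bound $\|\mu_h-\widehat\mu_h\|_1\le h\diffLI_t$ from \Cref{prop:tv-propagation-infh} over all $h$ diverges. I would therefore combine it with the trivial mass bound
\[
\|\mu_h-\widehat\mu_h\|_1\le \|\mu_h\|_1+\|\widehat\mu_h\|_1=\prob^{\sigma,P}[\tau_T>h]+\prob^{\sigma,\widehat P}[\tau_T>h],
\]
which decays geometrically by \Cref{prop:inductive-tauT}. That proposition applies to both $P$ and $\widehat P$, because every kernel in $\Punc_t$ and the estimate $\widehat P$ are support-preserving. Using $\min(a,b)$ alone gives a logarithmic factor, so a cleaner route is to refine the telescoping argument. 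Writing
\[
\mu_h-\widehat\mu_h=\sum_{k<h}(\mu_{k+1}-\mu_k\text{-type terms}),
\]
the error injected at step $k$ is at most $\|\widehat\mu_k\|_1\diffLI_t$, and substochastic propagation never amplifies it. This yields
\[
\|\mu_h-\widehat\mu_h\|_1\le \diffLI_t\sum_{k<h}\prob^{\sigma,\widehat P}[\tau_T>k].
\]
To handle the long-horizon tail, I would instead expand the value difference via the simulation lemma:
\[
u_i(\sigma,P)-u_i(\sigma,\widehat P)=\sum_{k\ge0}\sum_s \widehat\mu_k(s)\,\bigl(K_s-\widehat K_s\bigr)V_i^{\sigma,P}.
\]
Each term is bounded by $\widehat\mu_k(s)\,\diffLI_t\,\|V_i^{\sigma,P}\|_\infty$. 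In turn, $\|V_i^{\sigma,P}\|_\infty\le \Rmax\,\sup_s\ev^{\sigma,P}_s[\tau_T]\le\Rmax\Heff$ by \Cref{lem:effective-horizon}, and $\sum_k\|\widehat\mu_k\|_1=\ev^{\sigma,\widehat P}[\tau_T]\le\Heff$.

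Multiplying these gives $\Rmax\Heff^2\diffLI_t=\Delta_t$, matching the claimed constant (no factor $1/2$). I expect the main obstacle to be this step. It requires justifying the simulation-lemma expansion for an infinite, stopped sum, which needs absolute convergence from the geometric tails, and checking that $\sup_{\sigma,P}\ev[\tau_T]$ is controlled by $\Heff=\lceil|S|/p_T\rceil$ uniformly over starting states. Bounding $\|V_i\|_\infty$ requires the expected hitting time from every state $s$, not only $s_0$, but the tail bound in \Cref{prop:inductive-tauT} is uniform in the start state, so \Cref{lem:effective-horizon} applies from each $s$.
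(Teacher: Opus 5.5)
Your final simulation-lemma argument is correct and is essentially the paper's proof: one factor of $\Heff$ comes from a sup-norm bound on a value function and the other from an expected hitting time, giving the same constant. The only difference is that the paper unrolls along $P$ and bounds $\|\widehat V\|_\infty\le\Rmax\Heff$, whereas you unroll along $\widehat P$ and use $\|V_i^{\sigma,P}\|_\infty\le\Rmax\Heff$ together with $\sum_k\|\widehat\mu_k\|_1=\ev^{\sigma,\widehat P}[\tau_T]\le\Heff$. One caution, which applies equally to the paper's proof: the raw empirical kernel $\widehat P_{sa}(s')=N_{sas'}/N_{sa}$ need not be support-preserving, since unobserved successors get mass $0$. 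So applying \Cref{prop:inductive-tauT} to $\widehat P$ is an implicit extra assumption (e.g.\ smoothing $\widehat P$ onto the known support), not a consequence of the construction.
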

% We therefore absorb the constant into $\Delta_t$.

\begin{proof}
Let $V_P(s):=u_i(\sigma,P\mid s)$ and
$\widehat V(s):=u_i(\sigma,\widehat P\mid s)$, with
$V_P(s)=\widehat V(s)=0$ for $s\in S_T$.
For $s\notin S_T$, let
\[
P^\sigma_s:=\sum_a \sigma(a\mid s)P(\cdot\mid s,a),
\qquad
\widehat P^\sigma_s:=\sum_a \sigma(a\mid s)\widehat P(\cdot\mid s,a).
\]
Then
\[
V_P(s) = r_i^\sigma(s) + \sum_{s'}P^\sigma_s(s')V_P(s'),
\]
and analogously for $\widehat V$.
Since $|\tau_T|$ has expectation at most $\Heff$ under
\Cref{ass:nz-almost-sure-reachability},
\[
\|V_P\|_\infty,\|\widehat V\|_\infty
\le \Rmax \Heff.
\]
Moreover, for every $s\notin S_T$,
\[
\|P^\sigma_s-\widehat P^\sigma_s\|_1
\le \sum_a \sigma(a\mid s)\|P_{sa}-\widehat P_{sa}\|_1
\le \diffLI_t.
\]
Hence, for $s\notin S_T$,
\[
|V_P(s)-\widehat V(s)|
\le \Rmax\Heff\diffLI_t
+ \sum_{s'}P^\sigma_s(s')|V_P(s')-\widehat V(s')|.
\]
Unrolling this inequality until $\tau_T$ gives
\[
|V_P(s_0)-\widehat V(s_0)|
\le \Rmax\Heff\diffLI_t\, \ev^{\sigma,P}_{s_0}[\tau_T]
\le \Rmax\Heff^2\diffLI_t
= \Delta_t.
\]
\end{proof}

\Cref{lem:sensitivity-infh} recovers \Cref{lem:sensitivity} (up to constant factors) with the substitution $H \leftarrow \Heff$. All intermediate results in the finite-horizon case 
% (\Cref{cor:per-episode-error,cor:anytime-value-gap,lem:NE-RNE-transfer,lem:exploration-prob,lem:counting-episodes,cor:nmin}) 
carry over with $H \leftarrow \Heff$ and $n_{\min}$ as stated in \Cref{thm:PAC-CSG-infh} below.

\begin{theorem}[PAC guarantee for infinite-horizon objectives]
\label{thm:PAC-CSG-infh}
% Under \Cref{ass:nz-almost-sure-reachability}, 
\Cref{alg:PAC-learning-CSG} with 
\[
n_{\min} 
= \nminExprInfh 
= \nminBigOtildeInfh
\]
satisfies the PAC-CSG guarantee of \Cref{def:PAC-CSG} under the unbounded probabilistic reachability and reachability reward objectives, with sample complexity
\[
T^\star = \bigOtilde{\frac{\Rmax^2 |S|^6 |A|}{\varepsilon^2 p_T^4 \reach{p}}}.
\]
\end{theorem}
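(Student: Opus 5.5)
The proof reuses the finite-horizon argument of \Cref{thm:PAC-CSG} unchanged in structure, with two substitutions. The horizon $H$ becomes $\Heff=\lceil |S|/p_T\rceil$, and the finite-horizon sensitivity lemma is replaced by \Cref{lem:sensitivity-infh}. First I check that every intermediate result uses \Cref{lem:sensitivity} only through the scalar bound $|u_i(\sigma,P\mid s)-u_i(\sigma,\widehat P\mid s)|\le\Delta_t$. With $\Delta_t:=\Rmax\Heff^2\diffLI_t$ this bound holds for all proper profiles by \Cref{lem:sensitivity-infh}. So \Cref{cor:sensitivity-P-Pstar}, \Cref{cor:sensitivity-joint}, \Cref{lem:NE-RNE-transfer}, \Cref{prop:solver-EQ-guarantees}, \Cref{lem:anytime-bound-Vstar}, \Cref{cor:anytime-value-gap} and \Cref{cor:Delta-stopping-condition-correctness} all carry over verbatim, with profiles and deviations restricted to $\proper{\Sigma}$. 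The containment event $\eventCI$ (\Cref{cor:true-kernel-containment}) is horizon-free. Likewise, \Cref{lem:exploration-prob} and the Freedman counting argument (\Cref{lem:counting-episodes}) involve only $\reach{p}$ and $\totSlots$, so they also transfer unchanged.

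Second, I recompute $n_{\min}$. The stopping condition is again $4\Delta_t\le\varepsilon$, so I need $\alpha_{sa}(n;\confCI_{sa,n})\le \varepsilon/(4\Rmax\Heff^2)$. The constant differs from the finite-horizon case because the factor $\tfrac12$ is absent. Repeating the Lambert-$W$ calculation of \Cref{lem:nmin} with $\beta=\varepsilon^2/(32\Rmax^2\Heff^4)$ gives $n_{\min}=\bigOtilde{\Rmax^2\Heff^4|S|/\varepsilon^2}=\bigOtilde{\Rmax^2|S|^5/(\varepsilon^2p_T^4)}$, using $\Heff\le |S|/p_T+1$ from \Cref{lem:effective-horizon}. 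The extra $|S|$ comes from $\ln(2^{|S|})$. Then \Cref{cor:nmin} gives $\Delta_t\le\varepsilon/4$ once all slots are known. Plugging this into $T^\star=2|\ntslots|n_{\min}+\tfrac83\ln(1/\confCount)$ and multiplying by the per-episode trajectory count $\bigO{\log(t/\confCount)/\reach{p}}$ of \Cref{prop:num-samples-per-episode} gives $\bigOtilde{\Rmax^2|S|^6|A|/(\varepsilon^2p_T^4\reach{p})}$. The final step is the same Stirling summation as before.

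The main obstacle is that infinite-horizon trajectories are unbounded, and this affects two points. (i) Sampled trajectories must have finite length; I would note that under proper profiles $\tau_T<\infty$ almost surely with $\ev[\tau_T]\le\Heff$, so each trajectory terminates almost surely. (ii) The exploration probability lower bound concerns the event of reaching a slot before $S_T$, so it needs no truncation. A subtler point is that the solver's equilibrium and its deviations must stay inside $\proper{\Sigma}$ for \Cref{lem:sensitivity-infh} to apply. I would handle this by interpreting all NE/RNE notions relative to $\proper{\Sigma}$, as the paper's standing assumption allows. For reachability reward with rewards bounded by $\Rmax$, I would also verify the bound $\|V_P\|_\infty\le\Rmax\Heff$ used in \Cref{lem:sensitivity-infh}. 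For probabilistic reachability, I encode it as a terminal reward bounded by $1\le\Rmax$, so the same bounds apply.
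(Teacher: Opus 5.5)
Your proposal is correct and follows the paper's route. The paper's proof amounts to the remark that \Cref{lem:sensitivity-infh} reproduces \Cref{lem:sensitivity} under $H\leftarrow\Heff$, so the finite-horizon chain (containment, NE--RNE transfer, solver guarantees, exploration bound, Freedman counting, per-episode sample count) carries over unchanged and only $n_{\min}$ is recomputed using $\Heff\le |S|/p_T$; this is exactly what you do, and you also make explicit the constant $\beta=\varepsilon^2/(32\Rmax^2\Heff^4)$ and the restriction to $\proper{\Sigma}$.

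One point needs tightening: your remark (i) does not hold as stated, because $\explore{\sigma}_t$ is chosen by \eqref{eq:exploration-profile} over all of $\Sigma$ and need not be proper. For example, at known states from which no unknown slot is reachable, every action is optimal, including actions that avoid $S_T$ forever. So properness alone does not show that sampled trajectories terminate; you would need to select a proper maximiser or argue termination separately, a detail the paper also leaves implicit.
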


% The sample complexity $\bigOtilde{|S|^6/p_T^4}$ is conservative, arising from the worst-case bound $\Heff \le |S|/p_T$. When $\Heff$ is known, this improves to $\bigOtilde{\Rmax^2 \Heff^4 |S|^2 |A| / (\varepsilon^2 \reach{p})}$ as in \Cref{thm:PAC-CSG}.

\section{Extension: Learning zero-sum CSGs}
\label{app:zero-sum}
We now specialise the framework to two-player zero-sum CSGs. Since the players have directly opposing objectives, i.e., $X_1 = -X_2$, we write $X := X_1$, so that $X_2 = -X$. Accordingly, we consider a single utility and reward function, writing $u := u_1 = -u_2$ and $r := r_1 = -r_2$.

In a zero-sum CSG $\csg$, the (minimax) \emph{value} of $\csg$ with respect to $X$ is defined whenever the game is \emph{determined}, i.e., when
\[
\sup_{\sigma_1\in\Sigma_1} \inf_{\sigma_2\in\Sigma_2} u(\sigma_1,\sigma_2) = \inf_{\sigma_2\in\Sigma_2} \sup_{\sigma_1\in\Sigma_1} u(\sigma_1,\sigma_2).
\]
In this case, the value is unique, and the corresponding optimal strategies form the set of Nash equilibria of $\csg$~\cite{von1944theory}. 

Extending to robust CSGs, let $\csg$ be an RCSG with uncertainty set $\Punc$.
We say the game is \emph{robustly determined} if determinacy holds under
worst-case transition uncertainty, i.e., with respect to $\rob{u}_{\Punc}(\sigma) = \inf_{P \in \Punc} u(\sigma,P)$.
The resulting \emph{robust value} of $\csg$ is 
\[
\rob{V} := \sup_{\sigma_1} \inf_{\sigma_2} \inf_{P \in \Punc} u(\sigma,P),
\]
which coincides with the value of the induced CSG under an adversarial kernel (cf.~\cite[Definition~6]{HP26}). The corresponding
\emph{robust optimal strategies} guarantee a value of at least $\rob{V}$ for all $P \in \Punc$.

For our objectives of interest, zero-sum CSGs are always determined in the finite-horizon setting, and optimal strategies exist due to the finite game tree. In contrast, in the infinite-horizon setting, a value exists (e.g., in the limit-sure sense for reachability objectives), but optimal strategies need not exist~\cite{de2007concurrent}.

\paragraph{Simplifications relative to the general-sum case.}
Since only a single utility function $u$ is involved, the sensitivity (\Cref{lem:sensitivity}) and per-episode value gap (\Cref{cor:anytime-value-gap}) from the general-sum case apply without the factor-of-$2$ loss incurred when summing over players. Consequently, the stopping threshold on $\Delta_t$ improves from $\varepsilon/4$ to $\varepsilon/2$ (line~\ref{algline:stop} in \Cref{alg:PAC-learning-CSG}), and the constant in $n_{\min}$ improves accordingly:
\[
n_{\min} = \nminExprFn{4} = \nminBigOtilde.
\]
All other bounds remain unchanged. 
Moreover, the RCSG solver $\SolveRCSG$ reduces to a minimax $L^1$-CSG solver, which is equivalent to solving an RMDP and admits a polynomial-time solution~\cite{panaganti2022robustRL,nilim2005robust,wiesemann2013robust,suilen2024robust}.
% For zero-sum reward objectives, the only restriction is the following:
% % for infinite-horizon reward properties on CSGs:
% \begin{assumption}[Stopping game]
% \label{ass:stopping}
%     From any state $s$ where $r_S(s)<0$ or $r_A(s,a)<0$ for some action $a$, under all profiles of $\csg$, with probability 1 we reach either a target state in $S_T$, or a zero-reward state that cannot be left with probability 1 under all profiles.
% \end{assumption}

\section{Helper functions for Algorithm~\ref{alg:PAC-learning-CSG}}
\label{app:helper-algo}
The helper routines, \Cref{alg:initialise,alg:compute-Punc,alg:update}, are called by \Cref{alg:PAC-learning-CSG} but omitted from the main paper for brevity.
Note that in \Cref{alg:initialise}, $\reach{p}$ is computed by solving a reachability problem on an RMDP with the same support as $P^\star$, full-simplex transition uncertainty over that support, and $S_T$ as the terminal set; this does not require knowing the transition probabilities. 
In implementation, we instantiate the support-preserving simplex with a $10^{-5}$ lower bound on probabilities within the support, which ensures graph preservation and yields a strictly positive $\reach{p}$ consistent with \Cref{ass:reachability}.

\begin{algorithm}[h]
\caption{Initialise}
\label{alg:initialise}
\begin{algorithmic}[1]
\Require $S, A, \Gamma, \Rmax, H, \supp(P^\star)$
\State $\reach{p} \gets \min_{(s,a)\in \ntslots}
  \ \max_{\sigma\in\Sigma} \ 
  \inf_{P\in \Punc_{\supp}}
  \prob^{\sigma,P}[\text{reach $(s,a)$ before entering $S_T$}]$ 
\Comment{\Cref{eq:p-reach}}
\For{$(s,a)\in \ntslots$}
    \State $N(s,a)\gets 0, \ N(s,a,s')\gets 0\ \forall s'$
    \State $\known(s,a)\gets false$
    \State $\widehat{P}(\cdot\mid s,a)\gets \text{Uniform}$
\EndFor
\State $c \gets -\nminCoef\Rmax^2 H^4/\varepsilon^2; 
\quad n_{\min} \gets 
\left\lceil
c \cdot W_{-1}\!\left(
\sqrt{ \confCI / (2(2^{|S|}-2)\nTrans)} \big/ c
\right)\right\rceil$ 
\Comment{by Cor.~\ref{cor:nmin}}
\State \Return $N, \known, \widehat{P}, \reach{p}$
\end{algorithmic}
\end{algorithm}

\begin{algorithm}[h]
\caption{Construct uncertain transition set $\Punc(s,a)$}
\label{alg:compute-Punc}
\begin{algorithmic}[1]
\Require $N, \confCI, S, A, \Gamma, \supp(P^\star)$
\For{$(s,a)\in \ntslots$}
    \If{$N(s,a)=0$}
        \State $\widehat{P}_{sa} \gets \text{Uniform}(\Succ(s,a))$
        \State $\Punc(s,a) \gets \Delta_+(\Succ(s,a))$
        % \Comment{enforce graph preservation with minimum mass $\epsilon>0$}
        % \State \Continue
    \Else
    \State $\widehat{P}_{sa}(s') \gets N(s,a,s')/N(s,a) \quad \forall{s'\in \Succ(s,a)}$
    \State $\confCI_{sa} \gets \confCI/\left[{\nTrans \cdot N(s,a)(N(s,a)+1)} \right]$
    \State $\alpha \gets \sqrt{(2/N(s,a)) \ln{[(2^{|S|}-2)/\confCI_{sa}]}}$
    \Comment{by Lem.~\ref{lem:per-slot-concentration}}
    % $\alpha_t(s,a,s') \gets  \sqrt{\frac{1}{2N_t(s,a)}\,
    % \ln \left(\frac{2}{\confCI_{sa}}\right)}$
    % 
    % \State $\underline{P}(s'|s,a) \gets \max\{0,\ \widehat P(s'|s,a)-\alpha_t(s,a)\}$
    % \State $\overline{P}(s'|s,a) \gets \min\{1,\ \widehat P(s'|s,a)+\alpha_t(s,a)\}$
    \State $\Punc(s,a) \gets \big\{ p\in\Delta_+(\Succ(s,a)):\ \| p -\widehat{P}_{sa}\|_1 \le \alpha \big\}$
    \EndIf
\EndFor
\State \Return $\Punc$
\end{algorithmic}
\end{algorithm}

\begin{algorithm}[h]
\caption{Update}
\label{alg:update}
\begin{algorithmic}[1]
\Require $N, \known, \pi=(\pi_1,\ldots,\pi_{\numSamples}), n_{\min}, S, A, \Gamma$
\For{$(s,a)\in \ntslots$}
    \For{$i\gets 1$ \textbf{to} $\numSamples$}
        \State $N(s,a)\gets N(s,a) + \#(s,a) \text{ in } \pi_i$
        \For{$s'\in S$}
            \State $N(s,a,s') \gets N(s,a,s') + \#(s,a,s') \text{ in } \pi_i$
        \EndFor
    \EndFor
    \State $\known(s,a)\gets (N(s,a)\ge n_{\min})$
\EndFor
\State \Return $N, \known$
\end{algorithmic}
\end{algorithm}

\section{Additional experimental details}
\label{app:additional-experimental-results}
\ifICLR
\subsection{Benchmark descriptions}
\label{app:benchmark-descriptions}

The six benchmarks introduced in \Cref{sec:benchmarks} are each designed to isolate a distinct challenge:

\fi

\begin{figure}[h]
\centering

\begin{subfigure}{0.33\textwidth}
\centering
\begin{tikzpicture}[
    scale=0.8,
    transform shape,
    state/.style={draw, circle, minimum size=0.6cm},
    dist/.style={draw, circle, fill=black, inner sep=1.2pt},
    >=stealth
]

% States
\node[state] (s0) at (0,0) {$s_0$};
\node[above=1pt of s0] {\scriptsize $r_1=1$};

\node[state] (s1) at (4,0) {$s_1$};
\node[above=1pt of s1] {\scriptsize $r_1=1$};

\node[state] (s2) at (0,-3) {$s_2$};
\node[below=1pt of s2] {\scriptsize $r_2=1$};

\node[state, draw=red] (s3) at (4,-3) {$s_3$};

% --- STATE s0 transitions ---
\node[dist] (d1) at (1.5,-1.2) {};

\draw[->] (s0) -- (d1) node[pos=0.7, left, xshift=-2pt] {\scriptsize $[a,a]$};
\draw[->, bend right=20] (s0) to node[midway, left] {\scriptsize $[b,a]$} (s2);
\draw[->, bend left=20] (s0) to node[midway, above] {\scriptsize $[a,b], [b,b]$} (s1);

\draw[->, bend right=25] (d1) to node[midway, right] {\scriptsize $0.9$} (s0);
\draw[->, bend right=10] (d1) to node[midway, right] {\scriptsize $0.1$} (s3);

% --- STATE s1 transitions ---
\node[dist] (d2) at (4,-1.5) {};

\draw[->] (s1) -- (s2) node[midway, left] {}; % also [a,a], collapsed with the transition (s2) -- (s1)
\draw[->] (s1) -- (s0) node[midway, below] {\scriptsize $[b,a]$};
\draw[->] (s1) -- (d2) node[midway, left] {\scriptsize $[b,b]$};

\draw[->, bend right=20] (d2) to node[midway, right] {\scriptsize $0.9$} (s1);
\draw[->, bend left=10] (d2) to node[midway, right] {\scriptsize $0.1$} (s3);

% --- self-loop ---
\draw[->] (s1) edge[loop right, looseness=6] node {\scriptsize $[a,b]$} (s1);

% --- STATE s2 transitions ---
\node[dist] (d3) at (2.5,-3) {};

\draw[->] (s2) -- (s1) node[pos=0.7, left] {\scriptsize $[a,a]$};
\draw[->] (s2) -- (d3) node[midway, below] {\scriptsize $[a,b], [b,a]$};
\draw[->] (s2) -- (s0) node[pos=0.4, right] {\scriptsize $[b,b]$};

\draw[->, bend right=20] (d3) to node[midway, above] {\scriptsize $0.9$} (s2);
\draw[->, bend right=10] (d3) to node[midway, below] {\scriptsize $0.1$} (s3);

% --- terminal self-loop ---
\draw[->] (s3) edge[loop below, looseness=6] (s3);

\end{tikzpicture}
\caption{Cyclic Preferences}
\end{subfigure}
\hfill
\begin{subfigure}{0.33\textwidth}
\centering
\begin{tikzpicture}[
    scale=0.85,
    transform shape,
    state/.style={draw, circle, minimum size=0.65cm},
    dist/.style={draw, circle, fill=black, inner sep=1.3pt},
    >=stealth
]

% States
\node[state] (s0) at (0,0.5) {$s_0$};
\node[state] (s1) at (3,0.5) {$s_1$};
\node[state] (s2) at (1.5,-2) {$s_2$};

\node[state, draw=green] (s3) at (3.5,-2) {$s_3$};
\node[below=1pt of s3] {\small $\substack{r_1=1, \\ r_2=0.8}$};

\node[state] (s4) at (0,-2) {$s_4$};

% --- actions from s0 ---
\draw[->] (s0) to
    node[pos=0.55, above] {\scriptsize $[c,c]$}
    (s1);

\draw[->] (s0) to[bend right=25]
    node[pos=0.5, left] {\scriptsize others}
    (s4);

% --- actions from s1 ---
\node[dist] (d1) at (3,-0.7) {};

\draw[->] (s1) -- (d1)
    node[pos=0.4, right] {\scriptsize $[c,c]$};

\draw[->] (s1) to[bend right=15]
    node[pos=0.4, above left, xshift=3pt] {\scriptsize others}
    (s4);

\draw[->, bend left=12] (d1) to
    node[pos=0.5, left] {\scriptsize $0.9$}
    (s2);

\draw[->, bend right=12] (d1) to
    node[pos=0.55, above] {\scriptsize $0.1$}
    (s4);

% --- actions from s2 ---
\draw[->] (s2) to
    node[pos=0.5, below] {\scriptsize $[c,c]$}
    (s3);

\draw[->] (s2) to
    node[pos=0.5, below] {\scriptsize others}
    (s4);

% --- trap dynamics ---
\draw[->, bend right=20] (s4) to
    node[pos=0.55, left, xshift=2pt] {\scriptsize $0.8$}
    (s0);

\draw[->, loop below, looseness=6] (s4)
    edge node[pos=0.5, left] {\scriptsize $0.2$}
    (s4);

% --- goal absorbing ---
\draw[->] (s3) edge[loop above, looseness=6] (s3);

\node[draw, rounded corners, align=left, fill=white, inner sep=4pt]
    at (1.5,1.4) {\scriptsize
    others $= [c,d], [d,c], [d,d]$
};

\end{tikzpicture}

% others $= [c,d], [d,c], [d,d]$
\vspace*{.5em}
\caption{Delayed Coordination}
\label{fig:delayed-coord}
\end{subfigure}
\hfill
\begin{subfigure}{0.31\textwidth}
\centering
\begin{tikzpicture}[
    scale=0.85,
    transform shape,
    state/.style={draw, circle, minimum size=0.65cm},
    >=stealth
]

% States
\node[state] (s0) at (0,0) {$s_0$};
\node[state, draw=green] (s1) at (3,1.5) {$s_1$};
\node[state] (s2) at (3,0) {$s_2$};
\node[state, draw=red] (s3) at (3,-1.5) {$s_3$};

% --- actions from s0 ---
\draw[->] (s0) edge[loop above, looseness=6]
    node[above] {\scriptsize $[hide,wait]$}
    (s0);

\draw[->, bend left=10] (s0) to
    node[pos=0.8, above left, xshift=3pt] {\scriptsize $[run,wait]$}
    (s1);
    
\draw[->] (s0) to
    node[pos=0.55, above] {\scriptsize $[hide,throw]$}
    (s2);

\draw[->, bend right=10] (s0) to
    node[pos=0.6, below left, xshift=3pt] {\scriptsize $[run,throw]$}
    (s3);

% --- absorbing / dynamics ---
\draw[->] (s1) edge[loop right, looseness=6] (s1);

\draw[->] (s2) -- (s1);

\draw[->] (s3) edge[loop right, looseness=6] (s3);

\end{tikzpicture}
\vspace*{1em}
\caption{Hide-or-run}
\end{subfigure}

\vspace{0.5cm}

\begin{subfigure}{0.33\textwidth}
\centering
\begin{tikzpicture}[
    scale=0.85,
    transform shape,
    state/.style={draw, circle, minimum size=0.65cm},
    dist/.style={draw, circle, fill=black, inner sep=1.3pt},
    >=stealth
]

% States
\node[state] (s0) at (0,0) {$s_0$};
\node[state, draw=green] (s1) at (3,0) {$s_1$};
\node[state, draw=green] (s2) at (3,-3) {$s_2$};
\node[state, draw=red] (s3) at (0,-3) {$s_3$};

% --- joint action nodes ---
\node[dist] (d1) at (3,-1.5) {};

% --- actions from conflict state ---
\draw[->] (s0) -- (d1) node[pos=0.3, above right, xshift=-2pt, yshift=-3pt] {\scriptsize $[push,push]$};

\draw[->, bend right=20] (s0) to node[pos=0.6, left] {\scriptsize $[wait,wait]$} (s3);

\draw[->] (s0) -- node[pos=0.6, above] {\scriptsize $[push,wait]$} (s1);
\draw[->, bend left=0] (s0) to node[pos=0.75, left] {\scriptsize $[wait,push]$} (s2);

% --- stochastic outcome (only for push,push) ---
\draw[->, bend right=12] (d1) to node[pos=0.55, right] {\scriptsize $0.5$} (s1);

\draw[->, bend left=12] (d1) to node[pos=0.55, right] {\scriptsize $0.5$} (s2);

% --- reset loop ---
\draw[->] (s3) edge[loop left, looseness=6] node {} (s3);

% --- absorbing states ---
\draw[->] (s1) edge[loop above, looseness=6] (s1);
\draw[->] (s2) edge[loop left, looseness=6] (s2);

% optional return (if you want explicit reset)
\draw[->] (s3) -- (s0);

\end{tikzpicture}
\caption{Mixed NE}
\end{subfigure}
\hfill
\begin{subfigure}{0.33\textwidth}
\centering
\begin{tikzpicture}[
    scale=0.8,
    transform shape,
    state/.style={draw, circle, minimum size=0.5cm},
    dist/.style={draw, circle, fill=black, inner sep=1.2pt},
    >=stealth
]

% States (compact square)
\node[state] (s0) at (0,0) {$s_0$};
\node[state, draw=green] (s1) at (3,0) {$s_1$};
\node[above=1pt of s1] {\scriptsize $r=1$};

\node[state, draw=green] (s2) at (0,-3) {$s_2$};
\node[below=1pt of s2] {\scriptsize $r=0.6$};

\node[state, draw=red] (s3) at (3,-3) {$s_3$};

% Distribution nodes
\node[dist] (d1) at (1.5,0.5) {};
\node[dist] (d2) at (0,-1.3) {};
\node[dist] (d3) at (3,-1.5) {};
\node[dist] (d4) at (1.2,-2.2) {};

% --- Actions from s0 ---
\draw[->] (s0) -- (d1) node[midway, above] {\scriptsize $[a,a]$};
\draw[->] (s0) -- (d2) node[midway, left]  {\scriptsize $[b,b]$};
\draw[->] (s0) -- (d3) node[pos=0.25, above] {\scriptsize $[a,b]$};
\draw[->] (s0) -- (d4) node[pos=0.7, left] {\scriptsize $[b,a]$};

% --- Probabilistic branching ---
\draw[->, bend left=12] (d1) to node[pos=0.55, above] {\scriptsize $0.9$} (s1);
\draw[->, bend right=18] (d1) to node[pos=0.7, right] {\scriptsize $0.1$} (s3);

\draw[->, bend left=10] (d2) to node[pos=0.75, above] {\scriptsize $0.6$} (s1);
\draw[->, bend right=10] (d2) to node[pos=0.55, left] {\scriptsize $0.4$} (s2);

\draw[->, bend right=12] (d3) to node[pos=0.55, above right] {\scriptsize $0.5$} (s1);
\draw[->, bend left=18] (d3) to node[pos=0.6, below right] {\scriptsize $0.5$} (s3);

\draw[->, bend left=14] (d4) to node[pos=0.15, above, xshift=-3pt] {\scriptsize $0.5$} (s1);
\draw[->, bend right=14] (d4) to node[pos=0.5, left] {\scriptsize $0.5$} (s3);

% Self-loops
\draw[->] (s1) edge[loop right, looseness=6] (s1);
\draw[->] (s2) edge[loop left, looseness=6]  (s2);
\draw[->] (s3) edge[loop right, looseness=6] (s3);

\end{tikzpicture}
\caption{Safe vs.\ Risky}
\label{fig:safe-vs-risky}
\end{subfigure}
\hfill
\begin{subfigure}{0.31\textwidth}
\centering
\begin{tikzpicture}[
    scale=0.8,
    transform shape,
    state/.style={draw, circle, minimum size=0.55cm},
    dist/.style={draw, circle, fill=black, inner sep=1.2pt},
    >=stealth
]

% States
\node[state] (si) at (0,0) {$s_{i\in [3]}$};
\node[above=1pt of si] {\scriptsize $r=1$};

\node[state] (sip) at (3,-1.5) {$s_{i+1}$};
\node[state, draw=green] (goal) at (3,-3) {$s_4$};
\node[state, draw=red] (crash) at (0,-3) {$s_5$};

% Distribution nodes
\node[dist] (d1) at (0,-1.5) {};     

% Actions
\draw[->] (si) -- (sip) node[pos=0.1, right, xshift=-2pt] {\scriptsize $\substack{[wait,wait], [wait,go],\\ [go,wait]}$};
\draw[->, dashed] (sip) to[out=30, in=50] node[midway, above right, yshift=-1pt] {\scriptsize if $i<4$} (si);
\draw[->] (si) -- (d1) node[midway, left] {\scriptsize $[go,go]$};

% Outcomes
\draw[->, bend left=10] (d1) to node[pos=0.5, above] {\scriptsize $0.7$} (sip);
\draw[->, bend right=15] (d1) to node[pos=0.6, left] {\scriptsize $0.3$} (crash);

% Final step indication
\draw[->, dashed] (sip) -- node[midway, right] {\scriptsize if $i{+}1=4$} (goal);

% Self-loops
\draw[->] (goal) edge[loop right, looseness=6] (goal);
\draw[->] (crash) edge[loop left, looseness=6] (crash);

\end{tikzpicture}
\caption{Traffic Merge}
\end{subfigure}

\caption{Six benchmark CSGs ($\bar{s}=s_0$) illustrating distinct challenges for robust PAC learning, including coordination, mixed-strategy equilibria, equilibrium non-existence, and sparse-reward exploration (see \Cref{sec:benchmarks}). Green states denote target states of the players, while red states denote absorbing trap states.}% \caption{Benchmark suite. Each game highlights a distinct challenge in robust PAC learning of CSGs. $s_0$ is the initial state in all.}
\label{fig:benchmarks}
\end{figure}

% \angel{add reward structure labels in bold}

\subsection{Exploration Baselines}
\label{app:exploration-baselines}

We give precise definitions of the four exploration strategies compared in RQ2 (\Cref{tab:explorer-cmp}). In each case, the resulting sampling policy replaces $\explore{\sigma}_t$ in \Cref{alg:PAC-learning-CSG}; only the choice of exploration policy differs, with the PAC loop, stopping condition $\Delta_t \le \varepsilon/4$, and all other components held fixed.

\begin{enumerate}[(i)]
    \item \textbf{Robust (ours).} Solves the exploration RMDP $\explore{\csg}_t$ (\Cref{def:exploration-RMDP}) pessimistically, i.e., $\explore{\sigma}_t \in \arg\max_{\sigma\in \Sigma} \inf_{P\in \explore{\Punc}_t}{\ev^{\explore{\sigma},P}[\cdot]}$, maximising worst-case reachability of unknown $(s,a)$-pairs over the current uncertainty set (see \Cref{eq:exploration-profile}). This is the exploration rule used by \Cref{alg:PAC-learning-CSG} and underlies our sample-complexity guarantee (\Cref{thm:PAC-CSG}).
    \item \textbf{Optimistic.} Solves the same exploration RMDP $\explore{\csg}_t$, but with the inner minimisation over $\explore{\Punc}_t$ replaced by a maximisation, i.e.\ $\explore{\sigma}_t \in \arg\max_{\sigma\in \Sigma} \sup_{P\in \explore{\Punc}_t}{\ev^{\explore{\sigma},P}[\cdot]}$, targeting best-case rather than worst-case reachability. This corresponds to the optimism-in-the-face-of-uncertainty principle common in PAC-MDP exploration, but is not covered by our analysis (\Cref{sec:exploration}): an optimistic $\explore{\sigma}_t$ need not lower-bound the true visitation probability under $P^\star$, so the coverage guarantee of \Cref{lem:exploration-prob} need not hold.
    \item \textbf{Round-robin.} Maintains a single target slot: the unknown $(s,a)$-pair with the fewest samples so far, ties broken by index. Rather than solving the exploration RMDP, it computes a policy on the \emph{point-estimate} MDP $\widehat{P}_t$ (i.e., the empirical kernel treated as exact, with zero confidence radius) that maximises the probability of reaching the target slot within the horizon, and re-solves only when the target slot changes (i.e., becomes known).
    \item \textbf{Uniform-random.} No exploration policy is solved; at each state, each player samples independently and uniformly at random from their available actions.
\end{enumerate}

All four share the same per-episode sample count $\numSamples_t$ (\Cref{prop:num-samples-per-episode}) and stopping condition; only the trajectories collected within each episode differ.

\subsection{Learning Dynamics}
\label{app:RQ-learning-dynamics}

We complement the main experimental evaluation (\Cref{sec:experiments}) with a finer-grained analysis of the learning dynamics induced by our exploration procedure. We ask: \emph{how closely does the model-error proxy $\Delta_t$ reflect actual model and decision quality during learning}, and \emph{how does the uncertainty and sampling effort evolve over episodes?}

\begin{figure}[H]
    \centering
    \begin{subfigure}{0.34\textwidth}
        \centering
        \includegraphics[width=\linewidth]{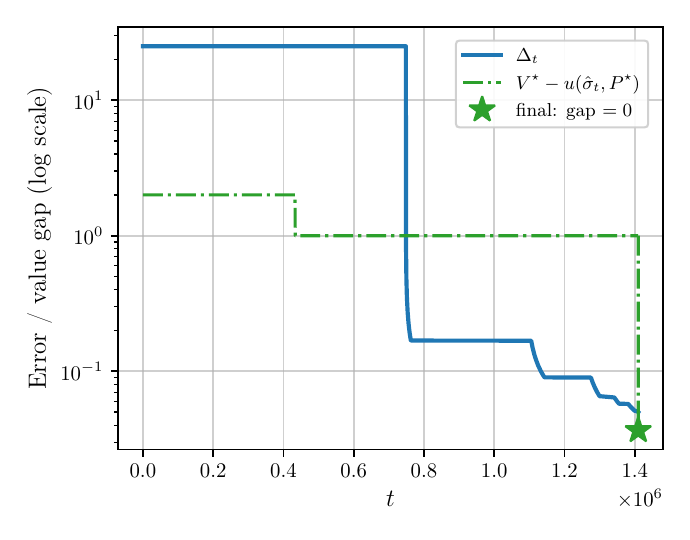}
        \caption{$\Delta_t$ and true value gap}
        \label{fig:delayed-coord1-learning}
    \end{subfigure}
    \hfill
    \begin{subfigure}{0.345\textwidth}
        \centering
        \includegraphics[width=\linewidth]{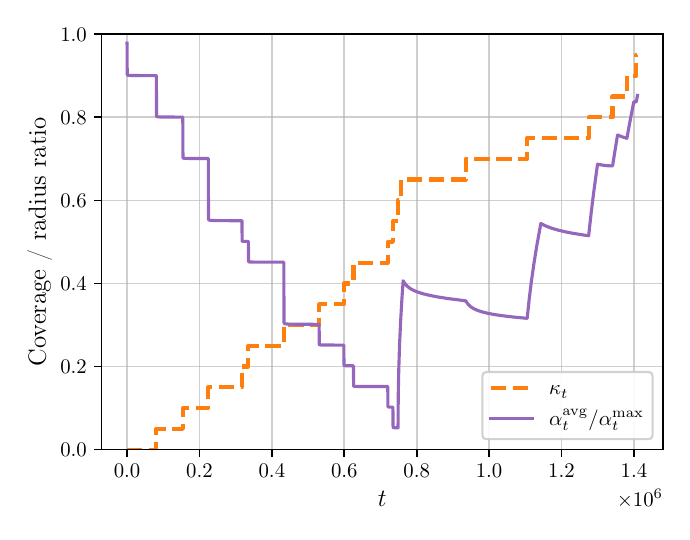}
        \caption{$\knownFrac_t$ and radius ratio}
        \label{fig:radius-ratio-delayed-coord1}
    \end{subfigure}
    \hfill
    % \hspace{2em}
    \begin{subfigure}{0.267\textwidth}
        \centering
        \includegraphics[width=\linewidth]{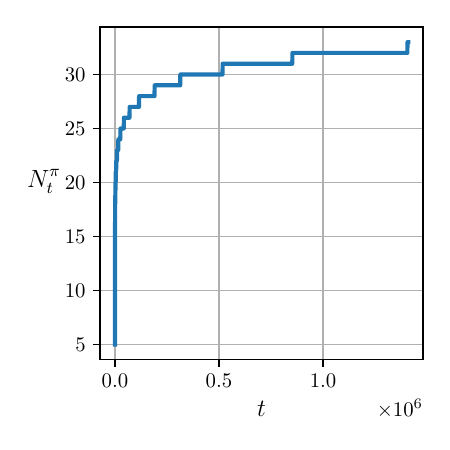}
        \caption{Samples per episode}
        \label{fig:samples-delayed-coord1}
    \end{subfigure}
    \caption{Learning dynamics over episodes in a representative run of the \emph{Delayed Coordination} game for $\llangle p_1:p_2 \rrangle_{\max=?}
    \big(\probop{}[\eventually s_3] + \probop{}[\eventually s_3]\big)$:
    (a) evolution of $\Delta_t$ and the true value gap $V^\star-u(\hat{\sigma}_t, P^\star)$, with $\hat{\sigma}_t$ re-computed every 200 episodes and the final value gap at termination marked by a star;
    (b) evolution of $\knownFrac_t$ and the average-to-maximum confidence-radius ratio; and
    (c) number of sampled trajectories per episode.}
    \label{fig:delayed-coord1-fine}
\end{figure}

We measure:
\begin{enumerate*}[(i)]
    \item the model error proxy $\Delta_t := \frac{1}{2} H^2 \max_{(s,a)\in\ntslots}\|\widehat{P}_{t,sa} - P^\star_{sa}\|_1$ ($\Rmax=1$);
    \item the fraction $\knownFrac_t := \lvert \known_t \rvert / \lvert \unknown_1 \rvert$ of known $(s,a)$ pairs by episode $t$; and
    \item the true value gap $V^\star - u(\hat{\sigma}_t, P^\star)$, where $\hat{\sigma}_t$ is obtained by periodically solving the empirical game every 200 episodes.
\end{enumerate*}
Together, these quantities allow us to assess how conservative the sufficient stopping condition $\Delta_t\le\varepsilon/4$ is in practice.
We illustrate these dynamics on a representative run of the sparse-reward exploration benchmark, \emph{Delayed Coordination}, for the infinite-horizon property $\llangle p_1:p_2 \rrangle_{\max=?}(\probop{}[\eventually s_3] + \probop{}[\eventually s_3])$: the evolution of $\Delta_t$ and the true value gap is shown in \Cref{fig:delayed-coord1-learning}, the evolution of $\knownFrac_t$ alongside the distribution of confidence radii in \Cref{fig:radius-ratio-delayed-coord1}, and the number of sampled trajectories per episode in \Cref{fig:samples-delayed-coord1}.

\Cref{fig:delayed-coord1-learning} shows that convergence is not smooth or gradual, especially for the model error $\Delta_t$ and the true value gap $V^\star - u(\hat{\sigma}_t, P^\star)$. Rather than decreasing smoothly, $\Delta_t$ stays pinned at its worst-case value while a single hard-to-reach $(s,a)$-pair remains almost entirely unsampled. Since $\Delta_t$ is governed by the \emph{largest} per-slot confidence radius rather than the average (\Cref{lem:sensitivity}), it drops sharply once that slot accumulates enough samples, then follows a staircase pattern as the remaining slots are resolved. 
% Each drop reflects the shrinking confidence radius of the newly-known slot.
%
Convergence of the extracted strategy is similarly abrupt: $\hat{\sigma}_t$ achieves value $0$ (versus the true value $V^\star=2$) for roughly the first third of training, then rises to and remains at exactly $1$ --- the value attained by the non-robust point-estimate strategy on most seeds --- until the last periodic checkpoint (200 iterations) before $\Delta_t$ crosses $\varepsilon/4$. Only in the final handful of episodes does $\hat{\sigma}_t$ resolve to the fully cooperative equilibrium, which \emph{Delayed Coordination} requires at every one of the three consecutive states ($s_0,s_1,s_2$) to reach the goal $s_3$ at all (see \Cref{fig:delayed-coord}).
Note that this does not violate \Cref{cor:anytime-value-gap}'s guarantee $V^\star-u(\hat{\sigma}_t,P^\star)\le 4\Delta_t$: the bound requires $\mu^\star \ge 4\Delta_t-\varepsilon$, and here $\mu^\star=0$, since all four joint actions at the indifferent state $s_4$ induce the same transition distribution, so deviating there costs either player nothing and attains the infimum in \Cref{def:Nash-margin} at $0$. Thus, the precondition reduces exactly to $\Delta_t\le\varepsilon/4$, the algorithm's own stopping condition, so the bound is not expected to hold before the run actually stops, i.e., the regime shown here.

\Cref{fig:radius-ratio-delayed-coord1} shows that $\kappa_t$ increases step-wise but much more smoothly than $\Delta_t$, with no comparably large jump around episode 700--800K, suggesting that the other $(s,a)$-pairs become known relatively steadily throughout training. 
$\kappa_t$ stops at $0.95$ ($19$ of the $20$ $(s,a)$-pairs), indicating that termination is governed by the statistical stopping condition $\Delta_t \le \varepsilon/4$, which can occur before full coverage (\Cref{cor:Delta-stopping-condition-correctness}). 
In contrast, the average-to-maximum confidence-radius ratio $\alpha^{\mathrm{avg}}_t/\alpha^{\max}_t$ decreases for roughly the first half of training as most slots are resolved while the single hardest-to-reach slot keeps $\alpha^{\max}_t$ near its initial value, then rises back towards $1$ once that slot is sampled and the remaining radii converge. 
Together, these trends imply that learning is bottlenecked by a single difficult-to-sample slot: its large confidence radius dominates $\Delta_t$, which reflects the worst-case rather than the average uncertainty across $(s,a)$-pairs (\Cref{lem:sensitivity}).
% This behaviour is consistent with the $1/\reach{p}$ dependence in the sample complexity bound (\Cref{thm:PAC-CSG}), and implies that learning is bottlenecked by hard-to-reach regions.
%

On the other hand, \Cref{fig:samples-delayed-coord1} shows that the number of trajectories sampled per episode, $\numSamples_t$, grows logarithmically with the episode index $t$, matching the adaptive sampling rule $N^\omega_t \propto \log(1/\confCount_t)=\bigO{\log t}$ from \Cref{prop:num-samples-per-episode} and \Cref{app:exploration}.
This reflects the algorithm's increasingly conservative per-episode coverage requirement. Specifically, as the confidence budget is distributed across an increasing number of episodes, the allowed per-episode failure probability $\confCount_t$ decreases as $\confCount_t = \confCount/(t(t+1))$ (\Cref{prop:coverage-confidence-allocation}), so later episodes require more trajectories to maintain their high-probability coverage guarantee, while keeping the total failure probability from coverage at most $\confCount$.

Overall, these results show that learning is primarily bottlenecked by a small number of hard-to-explore $(s,a)$ pairs: $\Delta_t$ and strategy quality improve only once these pairs receive sufficient samples, even when most pairs are already well resolved. The stopping condition $\Delta_t\le\varepsilon/4$ is therefore conservative in the sense that it is driven by the single worst-case confidence radius rather than decision-relevant uncertainty. To resolve bottlenecks earlier, future improvements could target high-impact, poorly explored regions more directly, e.g., by designing rewards proportional to confidence radii and/or inversely proportional to visitation probabilities.

\ifICLR
\subsection{Exploration strategy comparison}
\label{app:explorer-cmp-table}

\Cref{tab:explorer-cmp} reports sample consumption and value gap for each exploration rule under an otherwise identical PAC loop, as discussed under RQ2 in \Cref{sec:discussion}.

\fi

\ifICLR
\subsection{Sample-complexity scaling}
\label{app:scaling-plots}

\Cref{fig:scaling-plots} reports the empirical scaling of total sample consumption with each problem parameter, as discussed under \hyperref[RQ3-sample-complexity]{RQ3} in \Cref{sec:discussion}.

\fi

\subsection{Full experimental results}
\label{app:full-experimental-results}

We present the complete set of experimental results in \Cref{tab:full-results}, in response to RQ1 outlined in \Cref{sec:experiments} of the main paper.

\begin{table}[t]
\centering
\caption{Full experimental results (mean $\pm$ std over 10 runs). % for all benchmarks, each with an infinite-horizon and finite-horizon property.
All cases use $\varepsilon=0.1$, except \textit{Delayed Coordination}'s unbounded-reachability property, which uses $\varepsilon=0.2$ to keep runtime within a reasonable budget.
For every case with a stationary NE returned by the CSG solver, our algorithm achieves the same true-game value, with differences only on action-indifferent states (only $s_4$ in \textit{Delayed Coordination}). % ; thus $V^\star - u(\hat{\sigma},P^\star)=0$ in all cases. 
\emph{Hide-or-Run} is a limit-equilibrium case in which the optimal value is approached in the limit but is not attained by any stationary profile.
For the third \textit{Safe vs.\ Risky} property, our learner outputs $\hat{V}=\infty$ in all runs.}
\label{tab:full-results}
\resizebox{\textwidth}{!}{%
\begin{tabular}{@{}l l r r r r r@{}}
\toprule
\textbf{Case study} & \textbf{Property} & 
% $n_{\min}\ (\times10^{7})$ & 
% \textbf{Runtime (s)} &
\textbf{No. of episodes} $(\times 10^6)$ & \textbf{Total no. of samples} $(\times10^7)$ & $V^\star$ & $V^\star - \hat{V}$ \\
\midrule

\multirow{2}{*}{Cyclic Prefs.}
& $\llangle p_1:p_2 \rrangle_{\max=?}
\left( \rewop{r_1}[\eventually s_3] + \rewop{r_2}[\eventually s_3] \right)$
% & $0.866$
% & $(3.2057 \pm 1.4606)\times10^{3}$
& $2.6237 \pm 0.0003$
& $8.3317 \pm 0.0010$
& -- & -- \\

& $\llangle p_1:p_2 \rrangle_{\max=?}
\left( \probop{}[\eventually^{\le 3} s_3] + \probop{}[\eventually^{\le 4} s_3] \right)$
% & $0.866$
% & $(2.1407 \pm 0.5375)\times10^{3}$
& $2.4997 \pm 0.0002$
& $7.9101 \pm 0.0007$
& $0.615$ & $0.0021 \pm 0.0001$ \\

\midrule

\multirow{2}{*}{Delayed Coord.}
& $\llangle p_1:p_2 \rrangle_{\max=?}
\left( \probop{}[\eventually s_3] + \probop{}[\eventually s_3] \right)$
% & $2.260$
% & $(9.0098 \pm 0.7746)\times10^{2}$
& $1.4098 \pm 0.0000$
& $4.2953 \pm 0.0002$
& $2.000$ & $0.0000 \pm 0.0000$ \\

& $\llangle p_1:p_2 \rrangle_{\max=?}
\left( \rewop{r_1}[\Ct^{\le 5}] + \rewop{r_2}[\Ct^{\le 5}] \right)$
% & $2.260$
% & $(4.1437 \pm 1.0305)\times10^{3}$
& $5.5241 \pm 0.0000$
& $18.3638 \pm 0.0002$
& $3.240$ & $0.0016 \pm 0.0001$ \\

\midrule

\multirow{2}{*}{Hide-or-run}
& $\llangle p_1 \rrangle \probop{\max=?}
\left[ \eventually s_1 \right]$
% & $0.202$
% & $(1.5691 \pm 0.0056)\times10^{2}$
& $0.4671 \pm 0.0000$
& $1.3216 \pm 0.0000$
& $0.999$ & $0.0000 \pm 0.0000$ \\

& $\llangle p_1 \rrangle \probop{\max=?}
\left[ \eventually^{\le 5} s_1 \right]$
% & $0.516$
% & $(4.2046 \pm 0.2274)\times10^{2}$
& $1.0625 \pm 0.0000$
& $3.1834 \pm 0.0000$
& $0.800$ & $0.0000 \pm 0.0000$ \\

\midrule

\multirow{2}{*}{Mixed NE}
& $\llangle p_1:p_2 \rrangle_{\max=?}
\left( \probop{}[\eventually s_1] + \probop{}[\eventually s_2] \right)$
% & $0.866$
% & $(4.8315 \pm 0.3076)\times10^{2}$
& $1.4136 \pm 0.0000$
& $4.3077 \pm 0.0000$
& $1.000$ & $0.0031 \pm 0.0000$ \\

& $\llangle p_1:p_2 \rrangle_{\max=?}
\left( \probop{}[\eventually^{\le 4} s_1] + \probop{}[\eventually^{\le 2} s_2] \right)$
% & $0.866$
% & $(1.1519 \pm 0.1298)\times10^{3}$
& $1.4136 \pm 0.0000$
& $4.3077 \pm 0.0000$
& $1.000$ & $0.0031 \pm 0.0000$ \\

\midrule

\multirow{2}{*}{Safe vs.\ Risky}
& $\llangle p_1:p_2 \rrangle_{\max=?}
\left( \probop{}[\eventually s_1] + \probop{}[\eventually s_2] \right)$
% & $0.866$
% & $(8.3992 \pm 0.8911)\times10^{2}$
& $0.3321 \pm 0.0411$
& $5.8710 \pm 0.0007$
& $1.000$ & $0.0017 \pm 0.0000$ \\

& $\llangle p_1:p_2 \rrangle_{\max=?}
\left( \probop{}[\eventually^{\le 3} s_1] + \probop{}[\eventually^{\le 3} s_2] \right)$
% & $0.258$
% & $(5.8348 \pm 0.5452)\times10^{2}$
& $0.1711 \pm 0.0066$
& $2.5460 \pm 0.0008$
& $1.000$ & $0.0026 \pm 0.0000$ \\

& $\llangle p_1:p_2 \rrangle_{\max=?}
\left( \rewop{r_1}[\eventually s_1] + \rewop{r_2}[\eventually s_2] \right)$
% & $8.664$
% & $(8.4656 \pm 0.9068)\times10^{2}$
& $0.3321 \pm 0.0411$
& $5.8710 \pm 0.0007$
& $\infty$ & -- \\

\midrule

\multirow{2}{*}{Traffic Merge}
& $\llangle p_1:p_2 \rrangle_{\min=?}
\left( \rewop{t_1}[\eventually s_4] + \rewop{t_2}[\eventually s_4] \right)$
% & $4.943$
% & $(5.9532 \pm 1.8531)\times10^{3}$
& $0.6712 \pm 0.0000$
& $24.3784 \pm 0.0003$
& $8.000$ & $0.0000 \pm 0.0000$ \\

& $\llangle p_1 \rrangle \probop{\max=?}
\left[ \eventually^{\le 5} s_4 \right]$
% & $0.541$
% & $(1.7836 \pm 0.7970)\times10^{3}$
& $0.2171 \pm 0.0001$
& $3.1098 \pm 0.0005$
& $1.000$ & $0.0000 \pm 0.0000$ \\

\bottomrule

\end{tabular}%
}
\end{table}

\paragraph{Licenses for external tools.}
\label{app:licenses}
Our implementation builds on the PRISM-games model checker, which is distributed under the GNU General Public License (GPL), version 2. We adhere to its licensing terms. Additional dependencies and their licenses are documented at~\cite{PRISMweb}.

\section{Limitations}
\label{app:limitations}
% \subsection{Assumptions}
We first summarise the assumptions that underpin our analyses, all of which are common in the PAC learning and robust stochastic games literature.

\paragraph{(A1) Centralised exploration.}
We assume that the players perform \emph{centralised} play during exploration. This assumption is standard in PAC analyses of multi-agent systems, including turn-based stochastic games~\cite{brafman2002RMAX,strehl2009RL} and Markov games~\cite{littman1994markov,hu2003nash}, where a centralised learner simplifies the exploration problem. However, it does not capture decentralised or competitive settings where agents act independently or have limited information sharing. 
We remark that the equilibrium-transfer results in \Cref{sec:theory1-estim-error} would remain applicable if comparable confidence sets could be obtained under decentralised learning, but the joint-coverage and termination analysis in \Cref{sec:exploration,sec:main-PAC-CSG} would require a different exploration mechanism.
Developing such a mechanism for decentralised settings remains an important open direction.

\paragraph{(A2) Reachability.}
Our guarantees rely on a \emph{reachability} condition (\Cref{ass:reachability}) ensuring that every relevant (non-target) $(s,a)$-pair is reachable under some profile with positive probability at least $\reach{p}$. This excludes degenerate cases where reachability probabilities vanish and is analogous to standard communicating or reachability assumptions in PAC-MDP analyses~\cite{jaksch2010near,azar2017minimax}, adapted to our robust setting. Without such a condition, PAC termination under online trajectories cannot be guaranteed in general, as illustrated by standard lower-bound constructions (e.g., chain-game instances). Relaxing it typically requires additional capabilities such as access to a generative model or structured exploration mechanisms~\cite{kearns2002nearE3}, which allow coverage to be enforced explicitly.

For infinite-horizon reachability reward (stochastic shortest path) objectives, we assume almost-sure target reachability (\Cref{ass:nz-almost-sure-reachability}), ensuring $\tau_T < \infty$ almost surely and that value functions are well-defined and finite~\cite{bertsekas1991analysis,kwiatkowska2021automatic}. We additionally impose a uniform lower bound $p_T > 0$ on the probability of reaching $S_T$ within $|S|$ steps under a proper policy, which yields a geometric tail bound (\Cref{prop:inductive-tauT}) and induces a finite effective horizon (\Cref{lem:effective-horizon}). This assumption is consistent with standard reachability and properness conditions in SSP and PAC analyses~\cite{jaksch2010near,azar2017minimax,kearns2002nearE3,brafman2002RMAX}.

In practice, we precompute the set of reachable state--action pairs via standard reachability analysis over the known support graph when solving the game, following established model checking techniques~\cite{kwiatkowska2021automatic}. This allows us to restrict attention to $(s,a)$-pairs that are reachable under some profile, eliminating vacuous cases where exploration or reaching is impossible. Both probability lower bounds, $\reach{p}$ and $p_T$, can be computed (or conservatively approximated) by solving associated reachability RMDPs over the known support graph.

\paragraph{(A3) Graph preservation.}
We assume the standard \emph{graph preservation} condition, whereby all transition kernels in the uncertainty set share the same support. This assumption is common in robust MDP and RCSG formulations~\cite{iyengar2005robust, nilim2005robust,HP26,meggendorfer2025solving, suilen2024robust}, and is necessary for tractable robust dynamic programming. However, it excludes settings with support mismatch (e.g., rare or unseen transitions), which may arise in practice when data is sparse.

\paragraph{(A4) Bounded rewards.}
We assume that the magnitude of rewards is bounded by a known constant $\Rmax$, which is standard in PAC analyses of reinforcement learning~\cite{brafman2002RMAX,strehl2009RL}. This assumption ensures well-defined concentration bounds and finite value estimates. In practice, this can typically be satisfied via normalisation.

\subsection{Intrinsic limitations}
\label{app:intrinsic-limitations}
Beyond these assumptions, several intrinsic limitations arise from the problem setting.

\paragraph{Computational complexity.}
Our approach depends on a \emph{black-box RCSG solver}, instantiated in practice via PRISM-games~\cite{HP26}. The complexity of solving general-sum CSGs is well known to be high: for nonzero-sum reachability objectives, computing subgame-perfect Nash equilibria is PSPACE-complete~\cite{brihaye2020complexity}, and practical solvers such as~\cite{HP26} can require exponential time in the model size. These complexity results reflect fundamental barriers in equilibrium computation rather than artefacts of our learning framework. Nevertheless, implementation-level optimisations (e.g., parallelised sampling) and increased computational resources can partially mitigate these costs in practice.

\paragraph{Sample complexity dependence.}
The sample complexity scales polynomially with problem parameters, with a leading $H^4$ dependence arising from compounding transition uncertainty over trajectories. It may be possible to improve this dependence to $H^3$ using empirical-Bernstein or stagewise concentration techniques~\cite{farhat2026sample}, but such approaches typically require stronger assumptions, such as finer control over variance across timesteps (e.g., martingale-style concentration over trajectories or stage-dependent confidence bounds that exploit variance information at each timestep), and lead to a more technically involved analysis.

\paragraph{Non-existence detection.}
While our framework supports both equilibrium computation and non-existence detection, the non-existence certificate is \emph{sound but not complete}: when no exact NE exists, the algorithm may still return an $\varepsilon$-NE rather than certifying non-existence. This limitation is inherent to the problem: while $\varepsilon$-NE exist for any $\varepsilon > 0$, this implies that approximate equilibria persist arbitrarily close to the boundary of equilibrium existence, making it fundamentally difficult to distinguish exact equilibrium existence from arbitrarily small Nash margins~\cite{bouyer2014mixed}.

\paragraph{Other extensions.}
Natural extensions include handling more than two players and non-stationary environments. Both introduce additional challenges: the former increases equilibrium complexity and joint-action scaling, while the latter requires adapting to evolving transition kernels and equilibrium structure over time.

% \paragraph{Limitations and future work.}
% The centralised learner assumption is a natural starting point but limits applicability; extending to decentralised exploration where each player acts independently is an important open direction. 
% The $H^4$ dependence in the sample complexity could potentially be improved to $H^3$ via empirical-Bernstein or stagewise concentration techniques~\cite{farhat2026sample} at the cost of a more involved analysis; we leave this as future work.
% The graph reachability assumption (\Cref{ass:reachability}) is necessary for PAC termination under online trajectories, as shown by a chain-game lower bound; relaxing it would require a generative model or structured exploration protocol. 
% In the infinite-horizon setting, almost-sure stopping (\Cref{ass:nz-almost-sure-reachability}) is necessary for convergence of value iteration and is standard for reachability reward (SSP) objectives \cite{bertsekas1991analysis}.
% Finally, natural extensions include $n>2$ players, as well as partially observable and non-stationary settings.

% \section{Comparison to existing work}
% \label{app:cmp-to-existing-work}
% \input{appendix/cmp-to-existing-work}

\section{Broader Impacts}
\label{app:broader-impacts}
This work contributes to the theoretical foundations of learning in multi-agent stochastic systems under uncertainty. Potential positive impacts include improving the robustness and reliability of autonomous systems, such as multi-robot coordination, distributed control, and decision-making in uncertain environments. 

At the same time, advances in multi-agent learning and equilibrium computation may have unintended applications in competitive or adversarial settings, such as automated negotiation, financial markets, or strategic resource allocation, where robustness could be exploited to gain advantage. While our work is primarily theoretical and does not target specific deployments, we encourage careful consideration of such downstream uses.

%%%%%%%%%%%%%%%%%%%%%%%%%%%%%%%%%%%%%%%%%%%%%%%%%%%%%%%%%%%%

% \newpage
% \input{format/checklist}

\end{document}